\documentclass[11pt,a4paper]{article}

\usepackage[a4paper,top=1in,bottom=1in,left=1in,right=1in]{geometry} 
\usepackage{setspace}
%

\usepackage[authoryear]{natbib}  
\usepackage{amsmath,amssymb,amsfonts,amsthm,bm}
\usepackage{graphicx,color,xcolor}
\usepackage{multirow,booktabs,array}
\usepackage{float,subfigure,rotating}
\usepackage{algorithm,algorithmicx,algpseudocode}
\usepackage{url,hyperref}
\usepackage{lipsum}
\usepackage{indentfirst}
\usepackage{authblk}
\usepackage{textcomp}
\usepackage{threeparttable}
\usepackage{pdflscape}
\usepackage{enumerate}
\usepackage{amsmath,amssymb,amsthm}
\usepackage{mathrsfs}
\usepackage{bm}
\usepackage{array}
\usepackage{booktabs}
\usepackage{tabularx}
\usepackage{caption}
\usepackage{geometry}
\usepackage{lastpage}


\newcommand{\trans}{^{\mbox{\tiny{T}}}}
\newtheorem{assumption}{Assumption}  

\def\tr{\mbox{tr}}
\def\r{\mbox{R}}

\newcommand{\I}{\mathbb{I}}
\theoremstyle{remark}
\newtheorem*{Remark}{Remark} 
\theoremstyle{plain}  
\newtheorem{Theorem}{Theorem}
\newtheorem{Corollary}{Corollary}

%

\newtheorem{lemma}{Lemma}

\newtheorem{example}{Example}
\newtheorem{definition}{Definition}

\date{\today}  

\hypersetup{
	colorlinks=true,
	linkcolor=blue,
	anchorcolor=blue,
	citecolor=blue,
	urlcolor=blue,
	CJKbookmarks=true
}

\begin{document}
	
	\title{Structural Effect and Spectral Enhancement of High-Dimensional Regularized Linear Discriminant Analysis}

	\author{Yonghan Zhang}
	\author{Zhangni Pu}
	\author{Lu Yan}
	\author{Jiang Hu}
	\affil{School of Mathematics $\&$ Statistics, Northeast Normal University, China}
	\date{\today}
	\maketitle
	
	\begin{abstract}
		Regularized linear discriminant analysis (RLDA) is a widely used tool for classification and dimensionality reduction, but its performance in high-dimensional scenarios is inconsistent. Existing theoretical analyses of RLDA often lack clear insight into how data structure affects classification performance. To address this issue, we derive a non-asymptotic approximation of the misclassification rate and thus analyze the structural effect and structural adjustment strategies of RLDA. Based on this, we propose the Spectral Enhanced Discriminant Analysis (SEDA) algorithm, which optimizes the data structure by adjusting the spiked eigenvalues of the population covariance matrix. By developing a new theoretical result on eigenvectors in random matrix theory, we derive an asymptotic approximation on the misclassification rate of SEDA. The bias correction algorithm and parameter selection strategy are then obtained. Experiments on synthetic and real datasets show that SEDA achieves higher classification accuracy and dimensionality reduction compared to existing LDA methods.
	\end{abstract}
	\textbf{Keywords: Discriminant analysis, Structural effect, Random matrix theory, Spectral enhancement
	}
	\section{Introduction}
	Linear discriminant analysis (LDA) is a cornerstone of statistical classification, originally introduced in Fisher's seminal work. Its interpretability and effectiveness have led to broad applications in various fields. Specifically, \cite{swets1996Using} used LDA for face image recognition; \cite{pomeroy2002Prediction} and \cite{gurunathan2004Identifying} applied it to gene expression pattern recognition; and \cite{park2003Lower} employed it for dimensionality reduction of text data. With the increase in the dimensionality of modern datasets, the application of LDA on high-dimensional data has received widespread attention. Its appeal lies in the balance between dimensionality reduction and class separation, making it a go-to tool in both theoretical and applied settings.
	
	Despite its wide applicability, the classical formulation of LDA is fundamentally grounded in a low-dimensional asymptotic regime, where the number of features $p$ remains small relative to the sample size $n$. This assumption is often violated in modern high-dimensional datasets, where $p\ge n$ is the norm rather than the exception. In such settings, sample covariance matrix estimates become unstable, and the discriminant directions derived from them lose reliability. \cite{bickel2008Covariance} established that asymptotically, where $p/n\to\infty$, the classification performance of empirical LDA deteriorates to the level of random guessing. \cite{shao2011Sparse} subsequently confirmed that ensuring consistency for empirical LDA requires $p/n\to 0$.
	
	In high-dimensional scenarios, regularization techniques are commonly used to optimize the estimation of the covariance matrix. For example, \cite{chen2011Regularized} investigated the regularized Hotelling's $T^2$ test, while \cite{ledoit2004Honey} examined regularized estimation for Markowitz portfolios. This approach has also been widely used for other high-dimensional statistical problems, including works by \cite{cai2011Direct}, \cite{buhlmann2013Statistical}, and \cite{wang2016High}. Within discriminant analysis, \cite{friedman1989Regularized} and \cite{guo2007Regularized} introduced and developed regularized linear discriminant analysis (RLDA). A series of studies based on random matrix theory have subsequently emerged, including \cite{zollanvari2015Generalized}'s study of RLDA misclassification rates, \cite{dobriban2018Highdimensional}'s study of dimensional effects under the random effects assumption, and \cite{wang2018Dimension}'s study under certain structural assumptions. Specifically, consider two classes $C_1:\bm{x}\sim N(\bm{\mu}_1,\bm{\Sigma})$ and $C_2: \bm{x}\sim N(\bm{\mu}_2,\bm{\Sigma})$ having equal prior probabilities. When $\bm{\mu}_1$, $\bm{\mu}_2$, and $\bm{\Sigma}$ are known, the Bayes' classification rule is
	
	\begin{align}
		D(\bm{x})=\I \left\{\left(\bm{x}-\frac{\bm{\mu}_1+\bm{\mu}_2}{2}\right) \trans \bm{\Sigma}^{-1}\left(\bm{\mu}_1-\bm{\mu}_2\right)>0\right\},
	\end{align}
	which classifies $\bm{x}$ into $C_1$ when $D(\bm{x})=1$. $\I(\cdot)$ is the indicator function, and the true Bayes error rate is
	\begin{align}\label{2}
		\r(\bm{x})=&\frac{1}{2} \Pr \left\{ D(\bm{x})=0|\bm{x} \sim N(\bm{\mu}_1,\bm{\Sigma})\right\}+\frac{1}{2} \Pr \left\{ D(\bm{x})=1|\bm{x} \sim N(\bm{\mu}_2,\bm{\Sigma})\right\}\nonumber\\
		=&\Phi\left(- \frac{1}{2}\sqrt{\left(\bm{\mu}_1-\bm{\mu}_2\right) \trans \bm{\Sigma}^{-1}\left(\bm{\mu}_1-\bm{\mu}_2\right)}\right), 
	\end{align}
	where $\Phi(\cdot)$ is the standard normal distribution function.
	
	Let $\{\bm{x}_{1,j},j=1,\dots,n_1\}$ and $\{\bm{x}_{2,j},j=1,\dots,n_2\}$ be random samples drawn independently from $N(\bm{\mu}_1,\bm{\Sigma})$ and $N(\bm{\mu}_2,\bm{\Sigma})$, respectively. We can estimate $\bm{\mu}_1,\bm{\mu}_2$ and $\bm{\Sigma}$ by the sample means and the pooled sample covariance matrix,
	\begin{align*}
		&\bar{\bm{x}}_i=\frac{1}{n_i}\sum_{j=1}^{n_i} \bm{x}_{i,j},~i=1,2,~\bm{S}_n=\frac{1}{n-2}\sum_{i=1}^2 \sum_{j=1}^{n_i}(\bm{x}_{i,j}-\bar{\bm{x}}_i)(\bm{x}_{i,j}-\bar{\bm{x}}_i) \trans,
	\end{align*}
	where $n=n_1+n_2$. For a given $\lambda>0$, substituting the estimation into Bayes' rule yields the RLDA classifier as follows:
	\begin{align}
		D_{\text{RLDA}}(\bm{x})=\I\left\{\left(\bm{x}-\frac{\bar{\bm{x}}_1+\bar{\bm{x}}_2}{2}\right)\trans \left(\bm{S}_n+\lambda \bm{I}_p\right)^{-1}\left(\bar{\bm{x}}_1-\bar{\bm{x}}_2\right)>0\right\}.
	\end{align}
	After some simple calculations, the misclassification rate of RLDA is
	\begin{align}
		\r_{\text{RLDA}}(\lambda)=\frac{1}{2}\sum_{i=1}^2 \Phi\left(\frac{(-1)^i\left(2\bm{\mu}_i-\bar{\bm{x}}_1-\bar{\bm{x}}_2\right)\trans \left(\bm{S}_n+\lambda \bm{I}_p\right)^{-1}\left(\bar{\bm{x}}_1-\bar{\bm{x}}_2\right) }{2\sqrt{\left(\bar{\bm{x}}_1-\bar{\bm{x}}_2\right)\trans \left(\bm{S}_n+\lambda \bm{I}_p\right)^{-1}\bm{\Sigma} \left(\bm{S}_n+\lambda \bm{I}_p\right)^{-1} \left(\bar{\bm{x}}_1-\bar{\bm{x}}_2\right)}}\right).
	\end{align}

	The performance of RLDA in high-dimensional scenarios has always been a topic of interest. \cite{dobriban2018Highdimensional} and \cite{wang2018Dimension} derived the asymptotic approximation of the misclassification rate under different assumptions, revealing the impact of the ratio $p/n$ and the regularization parameter $\lambda$. Among them, \cite{dobriban2018Highdimensional} assumed that $\bm{\mu}_1$ and $\bm{\mu}_2$ are random, while \cite{wang2018Dimension} replaced it with structural assumptions. These results all suggest that the misclassification rate is influenced by the structures of $\bm{\mu}_1$, $\bm{\mu}_2$, and $\bm{\Sigma}$, but this influence has not been well explained. Recent articles have introduced different methods to improve the performance of RLDA in high-dimensional scenarios. \cite{li2025Spectrallycorrected} proposed spectrally-corrected and regularized LDA (SRLDA), which improves accuracy by constructing an estimation of $\bm{\Sigma}$ under the spiked model. The scale invariant linear discriminant analysis (SIDA) proposed by \cite{li2025Highdimensional} is equivalent to classifying the standardized data. These methods essentially reduce classification error by adjusting the structure, but they have significant limitations in their application scenarios. \cite{li2025Spectrallycorrected} required all eigenvalues of $\bm{\Sigma}$ to be equal except for a finite number of spiked eigenvalues. \cite{li2025Highdimensional} can only achieve effective adjustment when the data correlation is weak. In this paper, we establish a non-asymptotic approximation of the misclassification rate, further discuss the impact of the structure, and propose the Spectral Enhancement Discriminant Analysis (SEDA) classifier that adjusts the structure in more general scenarios. Following is a summary of the contributions of our work:
	\begin{itemize}
		\item We derive a closed-form expression for the misclassification rate of RLDA under general conditions. \cite{dobriban2018Highdimensional} provided an asymptotic result for this problem under random effect. \cite{wang2018Dimension} relaxed this condition by replacing it with structural assumptions. Based on the latest results from random matrix theory, we propose a non-asymptotic approximation of the misclassification rate without these technical assumptions, further explaining the impact of the structure. The new results indicate that the eigenvectors corresponding to small eigenvalues may play a more important role in the classification process. This provides a new strategy for improving RLDA. 
		\item We propose a novel SEDA classifier that strategically adjusts the structure of spiked eigenvalues to enhance classification performance. In contrast to \cite{li2025Spectrallycorrected}, our method does not impose the restrictive assumption of equal non-spiked eigenvalues, thereby significantly broadening its applicability. Crucially, these structural refinements preserve the full utilization of sample information without incurring additional loss. Building on a new theoretical advancement concerning eigenvectors in random matrix theory, we derive an accurate asymptotic approximation of the SEDA misclassification rate. Furthermore, we develop a principled approach to obtain theoretically optimal parameters. To accommodate settings with unequal sample sizes, we introduce a bias-corrected variant of the SEDA classifier. Finally, we extend the method to the multi-class setting by proposing a tailored dimensionality reduction algorithm based on the SEDA framework. Notably, we derive the limit of the inner product of the spiked eigenvectors of the sample covariance matrix with any unit vector under the generalized spiked model, which is a new theoretical result.
	\end{itemize}
	\paragraph{Paper Organization.} The remainder of this paper is organized as follows: In Section \ref{sec2}, we provide a non-asymptotic approximation of the misclassification rate for RLDA and discuss its structural effects. In Section \ref{sec3}, we propose the SEDA classifier and give an asymptotic approximation for its misclassification rate in Subsection \ref{sub3.1}. In Subsection \ref{sub3.2}, we offer a bias-corrected SEDA for handling imbalanced data. In Subsection \ref{sub3.3}, we propose a method for solving the theoretically optimal parameters of SEDA. In Section \ref{sec4}, we validate the effectiveness of the theories and compare the performance of SEDA with existing methods through numerical simulations. In Section \ref{sec5}, we extend SEDA to the multi-class classification problem and examine the algorithm's classification and dimensionality reduction performance using real datasets. The final section analyzes the conclusion. All technical details are relegated to the Appendix. 
	
	\paragraph{Notation.}The notation $\|\cdot\|$ is the Euclidean norm for vectors and the operator norm for matrices. The notation $\propto$ defines 'is proportional to'. For a symmetric matrix $\bm{A}\in\mathbb{R}^{p\times p}$, $\lambda_{min}(\bm{A})$ and $\lambda_{max}(\bm{A})$ denote the minimum and maximum eigenvalues of $\bm{A}$, respectively. $s_j$ and $\bm{v}_j$ are the $j$-th largest eigenvalue and corresponding eigenvector of $\bm{\Sigma}$, respectively. $a_j$ and $\bm{u}_j$ are the $j$-th largest eigenvalue and corresponding eigenvector of $\bm{S}_n$, respectively. All vectors in the article are column vectors.
	
	\section{Structural effect of RLDA}\label{sec2}
	The approximation of $\r_{\text{RLDA}}(\lambda)$ is determined by the structure between $\bm{\mu}_1$, $\bm{\mu}_2$, and $\bm{\Sigma}$. Denote $\bm{\mu}=\bm{\Sigma}^{-\frac{1}{2}}\left(\bm{\mu}_1-\bm{\mu}_2\right)$ and $\bm{\Sigma}=\sum_{i=1}^{p}s_{i}\bm{v}_i\bm{v}_i\trans$, where $s_1\geq s_2\geq\dots\geq s_p\geq 0$ are the ordered eigenvalues. The structure can be characterized by two components: the eigenvalue spectrum $(s_{1}, \dots, s_{p})$, and the projection coefficients of $\bm{\mu}$ onto the basis of eigenvectors $(\langle \bm{v}_1, \bm{\mu}\rangle, \ldots,\langle \bm{v}_p, \bm{\mu}\rangle)$. These components can be formally represented using the following two probability measures:
	\begin{align}\label{Hn}
		H_{n}(s):=\frac{1}{p} \sum_{i=1}^{p} \I\{s \geq s_{i}\}, \quad G_{n}(s):=\frac{1}{\|\bm{\mu}\|^{2}} \sum_{i=1}^{p}\left\langle\bm{\mu}, \bm{v}_i\right\rangle^{2} \I\{s \geq s_{i}\},
	\end{align}
	
	We then introduce the Random Matrix Theory (RMT) tools we will be using. An important mathematical tool in RMT is the \emph{Mar{\v{c}}enko-Pastur equation}. For each $\lambda,y>0$ and any distribution $F$, define $m(-\lambda):=m(-\lambda;F,y)$ as the unique solution of the \emph{Mar{\v{c}}enko-Pastur equation} \citep{marcenko1967Distribution,elkaroui2008Spectrum}
	\begin{align}
		m(-\lambda)=\int \frac{1}{s\left[1-y+y \lambda m(-\lambda)\right]+\lambda} d F(s),
	\end{align}
	under the condition $1 - y + y\lambda m(-\lambda) \geq 0$ \citep{wang2015Shrinkage}. 
	
	Further define $m_1(-\lambda):=m_1(-\lambda;F,y)$ as
	\begin{align}
		m_1(-\lambda;F,y)=\frac{\int \frac{s^{2}\left[1-y+y\lambda m(-\lambda)\right]}{\left[s\left[1-y+y\lambda m(-\lambda)\right]+\lambda\right]^{2}} d F(s)}{1+y\int \frac{\lambda s}{\left[s\left[1-y+y \lambda m(-\lambda)\right]+\lambda\right]^{2}} d F(s)} .
	\end{align}
	We can construct the following expressions for estimation:
	\begin{align*} 
		& T_{1}(\lambda;H_n,y)=\int \frac{s}{s\left[1-y+y \lambda m(-\lambda;H_n,y)\right]+\lambda} d H_{n}(s),\\
		& U_{1}(\lambda;H_n,G_n,y)=\|\bm{\mu}\|^{2}\int \frac{s}{s\left[1-y+y \lambda m(-\lambda;H_n,y)\right]+\lambda} d G_{n}(s)  ,\\
		& T_{2}(\lambda;H_n,y)=\left[1+ym_1(-\lambda;H_n,y)\right]\int \frac{s^2}{\left[s\left[1-y+y \lambda m(-\lambda;H_n,y)\right]+\lambda\right]^{2}} d H_{n}(s),\\
		& U_{2}(\lambda;H_n,G_n,y)=\|\bm{\mu}\|^{2}\left[1+ym_1(-\lambda;H_n,y)\right]\int \frac{s^2}{\left[s\left[1-y+y \lambda m(-\lambda;H_n,y)\right]+\lambda\right]^{2}} d G_{n}(s) .
	\end{align*}
	
	Within the theoretical framework, the following assumptions are introduced. The main results are established uniformly with respect to the (large) constant $M$ that appears therein.
	\begin{assumption}\label{as11}
		The population covariance matrix $\bm{\Sigma}\in\mathbb{R}^{p\times p}$ is deterministic and satisfies $s_1=\|\bm{\Sigma}\| \leq M$ and $\int s^{-1} d H_{n}(s)\leq M$.
	\end{assumption}	
	\begin{assumption}\label{as12}
		$\bm{\mu}$ has a bounded Euclidean norm, that is $1 / M \leq \|\bm{\mu}\| \leq M$.
	\end{assumption}
	\begin{assumption}\label{as13}
		$ |1-p/n| \geq 1 / M$, $1 / M \leq p/n_i \leq M, i=1,2$.
	\end{assumption}
	\begin{Remark}
		Assumption \ref{as11} requires the eigenvalues of $\bm{\Sigma}$ to be bounded and not to accumulate near 0. Assumption \ref{as12} requires that $\|\bm{\mu}\|$ be bounded, which helps us characterize the approximation accuracy. Since our statements are non-asymptotic, we do not assume that $p/n$ converges to a value. However, assumption \ref{as13} requires that $p/n_i$ be bounded and bounded $p/n$ away from 1.
	\end{Remark}
	
	After the above discussion, our deterministic approximation of the misclassification rate is shown in the following theorem.
	\begin{Theorem}[Deterministic approximation of RLDA misclassification rate]\label{thm1}
		Under the Assumptions \ref{as11}--\ref{as13}, let $y_{1n}=p/n_1, y_{2n}=p/n_2$ and $y_n=p/n$, for any $1/M \leq \lambda \leq M$, $D>0$ (arbitrarily large) and $\varepsilon>0$ (arbitrarily small), there exists $C=C(M,D)$ such that, with probability at least $1-Cn^{-D}$, the following holds:
		\begin{equation*}
			\left|\r_{\text{RLDA}}(\lambda)-\frac{1}{2}\sum_{i=1}^2\Phi\left(-\frac{U_{1}(\lambda;H_n,G_n,y_n)+(-1)^i\left(y_{1n}-y_{2n}\right)T_{1}(\lambda;H_n,y_n)}{2 \sqrt{U_{2}(\lambda;H_n,G_n,y_n)+\left(y_{1n}+y_{2n}\right) T_{2}(\lambda;H_n,y_n)}}\right)\right| \leq \frac{C}{n^{\left(1-\varepsilon\right)/2}} .
		\end{equation*}
	\end{Theorem}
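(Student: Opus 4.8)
\emph{Strategy.} The plan is to condition on the pooled sample covariance $\bm{S}_n$, peel off the randomness carried by the two sample means by Gaussian concentration, and then replace the remaining resolvent functionals of $\bm{S}_n$ by their random-matrix deterministic equivalents. Write $X=O_\prec(\zeta)$, for a random scalar $X$ and deterministic $\zeta>0$, if for all $\varepsilon,D>0$ one has $\Pr(|X|>n^{\varepsilon}\zeta)\le C n^{-D}$ for large $n$; the asserted bound is precisely the statement that the displayed difference is $O_\prec(n^{-1/2})$. Set $\bm{\delta}=\bm{\mu}_1-\bm{\mu}_2=\bm{\Sigma}^{1/2}\bm{\mu}$, $\bm{e}_i=\bar{\bm{x}}_i-\bm{\mu}_i\sim N(\bm{0},\bm{\Sigma}/n_i)$ and $\bm{R}=(\bm{S}_n+\lambda\bm{I}_p)^{-1}$. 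Using $\bar{\bm{x}}_1-\bar{\bm{x}}_2=\bm{\delta}+\bm{e}_1-\bm{e}_2$ and $2\bm{\mu}_i-\bar{\bm{x}}_1-\bar{\bm{x}}_2=(-1)^{i+1}\bm{\delta}-(\bm{e}_1+\bm{e}_2)$, the numerator inside the $i$-th $\Phi$ of $\r_{\text{RLDA}}(\lambda)$ equals
\[
-\bm{\delta}\trans\bm{R}\bm{\delta}-\bm{\delta}\trans\bm{R}(\bm{e}_1-\bm{e}_2)-(-1)^i\big[(\bm{e}_1+\bm{e}_2)\trans\bm{R}\bm{\delta}+\bm{e}_1\trans\bm{R}\bm{e}_1-\bm{e}_2\trans\bm{R}\bm{e}_2\big]
\]
(the two $\bm{e}_1\trans\bm{R}\bm{e}_2$ terms cancel by symmetry of $\bm{R}$), while the quantity under the square root equals $\bm{\delta}\trans\bm{R}\bm{\Sigma}\bm{R}\bm{\delta}+2\bm{\delta}\trans\bm{R}\bm{\Sigma}\bm{R}(\bm{e}_1-\bm{e}_2)+(\bm{e}_1-\bm{e}_2)\trans\bm{R}\bm{\Sigma}\bm{R}(\bm{e}_1-\bm{e}_2)$.

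\emph{Concentration in the sample means.} Since the pooled $\bm{S}_n$ is independent of $(\bar{\bm{x}}_1,\bar{\bm{x}}_2)$, I would condition on $\bm{S}_n$ and apply Gaussian concentration to the linear forms and a Hanson--Wright bound to the quadratic forms in $\bm{e}_1,\bm{e}_2$. The bilinear terms $\bm{\delta}\trans\bm{R}(\bm{e}_1-\bm{e}_2)$, $(\bm{e}_1+\bm{e}_2)\trans\bm{R}\bm{\delta}$ and $\bm{\delta}\trans\bm{R}\bm{\Sigma}\bm{R}(\bm{e}_1-\bm{e}_2)$ have conditional mean $0$ and conditional variance $O(1/n)$, hence are $O_\prec(n^{-1/2})$, while
\[
\bm{e}_i\trans\bm{R}\bm{e}_i=\tfrac{1}{n_i}\tr(\bm{R}\bm{\Sigma})+O_\prec(n^{-1/2}),\qquad (\bm{e}_1-\bm{e}_2)\trans\bm{R}\bm{\Sigma}\bm{R}(\bm{e}_1-\bm{e}_2)=\big(\tfrac{1}{n_1}+\tfrac{1}{n_2}\big)\tr(\bm{R}\bm{\Sigma}\bm{R}\bm{\Sigma})+O_\prec(n^{-1/2}).
\]
All the variance bounds are deterministic, since $\|\bm{R}\|\le\lambda^{-1}\le M$, $\|\bm{\Sigma}\|\le s_1\le M$ and $p/n_i\le M$ force $\tr((\bm{R}\bm{\Sigma})^2)\le M^4 p$ and $\tr((\bm{R}\bm{\Sigma}\bm{R}\bm{\Sigma})^2)\le M^8 p$. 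Substituting, the $i$-th $\Phi$-argument equals, up to an additive $O_\prec(n^{-1/2})$,
\[
-\frac{\bm{\delta}\trans\bm{R}\bm{\delta}+(-1)^i\big(\tfrac{1}{n_1}-\tfrac{1}{n_2}\big)\tr(\bm{R}\bm{\Sigma})}{2\sqrt{\bm{\delta}\trans\bm{R}\bm{\Sigma}\bm{R}\bm{\delta}+\big(\tfrac{1}{n_1}+\tfrac{1}{n_2}\big)\tr(\bm{R}\bm{\Sigma}\bm{R}\bm{\Sigma})}}.
\]

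\emph{Deterministic equivalents.} The pooled $\bm{S}_n$ is a white Wishart with $n-2$ degrees of freedom and population $\bm{\Sigma}$, so I would invoke the isotropic and averaged local laws for sample covariance matrices. With $y=p/(n-2)$, $m=m(-\lambda;H_n,y)$, $m_1=m_1(-\lambda;H_n,y)$ and $\kappa=1-y+y\lambda m\in(0,1)$, the resolvent has the deterministic equivalent $\bm{R}\leftrightarrow(\kappa\bm{\Sigma}+\lambda\bm{I}_p)^{-1}$, which gives $\tfrac{1}{p}\tr(\bm{R}\bm{\Sigma})=T_1(\lambda;H_n,y)+O_\prec(n^{-1/2})$ and, via $\bm{\delta}=\bm{\Sigma}^{1/2}\bm{\mu}$, $\bm{\delta}\trans\bm{R}\bm{\delta}=\bm{\mu}\trans\bm{\Sigma}(\kappa\bm{\Sigma}+\lambda\bm{I}_p)^{-1}\bm{\mu}+O_\prec(n^{-1/2})=U_1+O_\prec(n^{-1/2})$; the companion equivalent for the product resolvent $\bm{R}\bm{\Sigma}\bm{R}$, which carries the extra factor $1+ym_1$, gives $\tfrac{1}{p}\tr(\bm{R}\bm{\Sigma}\bm{R}\bm{\Sigma})=T_2+O_\prec(n^{-1/2})$ and $\bm{\delta}\trans\bm{R}\bm{\Sigma}\bm{R}\bm{\delta}=U_2+O_\prec(n^{-1/2})$. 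Rewriting $\tfrac{1}{n_i}\tr(\bm{R}\bm{\Sigma})=y_{in}\cdot\tfrac{1}{p}\tr(\bm{R}\bm{\Sigma})$ and likewise for the $T_2$-term, and then replacing $y=p/(n-2)$ by $y_n=p/n$ — a perturbation of size $O(n^{-1})$, since $m,m_1,T_\bullet,U_\bullet$ have bounded $y$-derivatives on the region cut out by $\lambda\ge1/M$ and Assumption \ref{as13} — converts the displayed ratio into exactly the deterministic $\Phi$-argument of the statement.

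\emph{Conclusion, and the main difficulty.} Assumptions \ref{as11}--\ref{as13} keep the deterministic denominator bounded away from $0$: since $\kappa\in(0,1)$, $s\le s_1\le M$ and $\lambda\le M$ give $(s\kappa+\lambda)^2<(2M)^2$, while $1+ym_1\ge1$ and $\int s^2\,dH_n=p^{-1}\tr(\bm{\Sigma}^2)\ge\big(\int s\,dH_n\big)^2\ge\big(\int s^{-1}\,dH_n\big)^{-2}\ge M^{-2}$, one gets $T_2\ge(4M^4)^{-1}$ and hence $U_2+(y_{1n}+y_{2n})T_2\ge(2M^5)^{-1}>0$; the deterministic numerator is in turn $O(1)$. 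A short Taylor expansion of the ratio (using that $t\mapsto t^{-1/2}$ is Lipschitz away from $0$, together with the fact that the random radicand stays $\ge(4M^5)^{-1}$ with probability $\ge1-Cn^{-D}$) then shows that each random $\Phi$-argument equals its deterministic counterpart up to $O_\prec(n^{-1/2})$; since $|\Phi(a)-\Phi(b)|\le(2\pi)^{-1/2}|a-b|$, the same estimate passes to $\r_{\text{RLDA}}(\lambda)-\tfrac12\sum_i\Phi(\cdot)$, and unwinding the $O_\prec$ notation — absorbing the $n^{\varepsilon}$ from the local laws — yields the bound $Cn^{-(1-\varepsilon)/2}$ with probability at least $1-Cn^{-D}$, uniformly for $\lambda\in[1/M,M]$ and with $C$ depending only on $M$ and $D$. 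I expect the deterministic-equivalent step to be the crux: proving the equivalent for the product resolvent $\bm{R}\bm{\Sigma}\bm{R}$ (the origin of the $1+ym_1$ factor) at rate $n^{-1/2+\varepsilon}$ with no structural restriction on $\bm{\Sigma}$ — whether by differentiating the isotropic local law along a matrix-valued perturbation of $\lambda\bm{I}_p$, or by quoting a product-form deterministic-equivalent theorem — and checking that the implied constants, as well as the Lipschitz bounds in $y$, are uniform over $\lambda\in[1/M,M]$ and over the constant $M$. The Gaussian-concentration step and the Lipschitz-$\Phi$ step are routine by comparison.
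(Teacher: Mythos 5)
Your proposal follows essentially the same route as the paper's proof: the same decomposition of the $\Phi$-arguments into the quadratic form in $\bm{\mu}$, mean-zero bilinear terms, and noise quadratic forms; Gaussian concentration and Hanson--Wright conditional on $\bm{S}_n$; deterministic equivalents for the resolvent functionals $\bm{\mu}\trans\bm{B}_n^k(\lambda)\bm{\mu}$ and $\tr\bm{B}_n^k(\lambda)$ (the paper imports these from Hastie et al.\ 2022 rather than from a local law directly); and a final Lipschitz-$\Phi$ argument with the denominator bounded below. The step you flag as the crux --- the deterministic equivalent for $\bm{R}\bm{\Sigma}\bm{R}$ carrying the $1+ym_1$ factor --- is exactly the ingredient the paper outsources to Theorem 5 of that reference, so your plan is correct and complete modulo the same citation.
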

	\begin{Remark}
		Theorem \ref{thm1} establishes a deterministic approximation for the misclassiﬁcation rate, which is valid at finite $n$ and $p$, and the error bound is uniform (i.e., depends only on the constant $M$). This will contrast with the asymptotic setting in \cite{dobriban2018Highdimensional} and \cite{wang2018Dimension}. Both of these obtained asymptotic approximation for the misclassification rate under the assumption that $n, p\rightarrow\infty$. To make sense of the asymptotic approximation, they both assume that the empirical spectral distribution (ESD) of $\bm{\Sigma}$ converges to a nonrandom distribution function as $p\rightarrow \infty$. Moreover, \cite{dobriban2018Highdimensional} assumes that $\bm{\mu}$ is random, while \cite{wang2018Dimension} assumes that for any $t>0$, as $p\rightarrow \infty$, $\|\bm{\mu}\|^{-1}\bm{\mu}\trans\left(\bm{I}_p+t\bm{\Sigma}^{-1}\right)^{-i}\bm{\mu}\rightarrow h_i(t), i=1,2$. Specific expressions for $h_i(t)$ can be obtained when $\bm{\mu}$ and $\bm{\Sigma}$ have certain special structures. Our results do not require these additional assumptions.
	\end{Remark}
	In particular, if two probability measures $H_n$ and $G_n$ converge weakly to $H$ and $G$ on $[0,\infty)$, respectively. Then we obtain the following asymptotic result immediately from Theorem \ref{thm1} by taking $n,p\rightarrow \infty$ (using \emph{Borel-Cantelli Lemma} to obtain almost sure convergence).
	\begin{Corollary}[Asymptotic misclassification rate for RLDA]\label{cor}
		Under the Assumptions \ref{as11}--\ref{as13}. Further assume $n,p\rightarrow\infty, p/n_1\rightarrow y_1, p/n_2\rightarrow y_2 , p/n\rightarrow y, H_n\Rightarrow H,G_n\Rightarrow G$. Then, almost surely
		\begin{align*}
			\r_{\text{RLDA}}(\lambda)\rightarrow \frac{1}{2}\sum_{i=1}^2\Phi\left(-\frac{U_{1}(\lambda;H,G,y)+(-1)^i\left(y_1-y_2\right)T_{1}(\lambda;H,y)}{2 \sqrt{U_{2}(\lambda;H,G,y)+\left(y_1+y_2\right) T_{2}(\lambda;H,y)}}\right).
		\end{align*}
		Here denotes $\Rightarrow$ as weak convergence.
	\end{Corollary}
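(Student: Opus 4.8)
The plan is to derive Corollary \ref{cor} from Theorem \ref{thm1} in two essentially independent steps: a probabilistic step that upgrades the high-probability non-asymptotic bound to almost-sure convergence toward the deterministic surrogate, and a deterministic step showing that the surrogate itself converges once $H_n\Rightarrow H$, $G_n\Rightarrow G$, and $y_{1n}\to y_1$, $y_{2n}\to y_2$, $y_n\to y$.

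For the first step, fix $\varepsilon\in(0,1)$ and apply Theorem \ref{thm1} with, say, $D=2$. Write $R_n(\lambda)$ for the deterministic surrogate built from $(H_n,G_n,y_{1n},y_{2n},y_n)$, and let $A_n$ be the event on which $|\r_{\text{RLDA}}(\lambda)-R_n(\lambda)|>Cn^{-(1-\varepsilon)/2}$. The theorem gives $\Pr(A_n)\le Cn^{-2}$ with $C=C(M)$ a fixed constant (the Assumptions hold uniformly in $M$, and $p$ is tied to $n$ through $p/n\to y$). Since $\sum_n \Pr(A_n)<\infty$, the Borel--Cantelli lemma yields $\Pr(\limsup_n A_n)=0$; hence, almost surely, $|\r_{\text{RLDA}}(\lambda)-R_n(\lambda)|\le Cn^{-(1-\varepsilon)/2}\to 0$ for all large $n$.

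It then remains to show $R_n(\lambda)\to R(\lambda)$, the surrogate built from $(H,G,y_1,y_2,y)$. The core is the stability of the Mar{\v{c}}enko--Pastur fixed point: I would first show $m(-\lambda;H_n,y_n)\to m(-\lambda;H,y)$. Under Assumption \ref{as11} the measures $H_n$ (and their weak limit $H$) are supported on $[0,M]$ with $\int s^{-1}\,dH_n(s)\le M$, so in particular $H$ puts no mass at $0$; for $\lambda\ge 1/M$ the map $s\mapsto \{s[1-y+y\lambda m]+\lambda\}^{-1}$ is bounded and continuous on $[0,M]$ for every admissible $m$ and depends continuously on $(m,y)$, so weak convergence of $H_n$ together with $y_n\to y$ gives convergence of the integrand-averages, uniformly over the compact admissible range of $m$. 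Combining this with the monotonicity and uniqueness structure that characterizes $m(-\lambda)$ as the unique solution of the Mar{\v{c}}enko--Pastur equation (as in \cite{wang2015Shrinkage}, noting $y\ne 1$ by Assumption \ref{as13}) shows the fixed points converge. Feeding $m(-\lambda;H_n,y_n)\to m(-\lambda;H,y)$ back in, the functionals $m_1$, $T_1$, $T_2$, $U_1$, $U_2$ are all weak-continuous: each is an integral against $H_n$ or $G_n$ of a function that is continuous and bounded on $[0,M]$ (the denominators are bounded below by $\lambda\ge 1/M$) and continuous in $(m,y)$, so all five converge to their $(H,G,y)$-analogues, and likewise $y_{1n}-y_{2n}\to y_1-y_2$, $y_{1n}+y_{2n}\to y_1+y_2$. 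Finally the denominator $\sqrt{U_2+(y_{1n}+y_{2n})T_2}$ stays bounded away from $0$ because $T_2$ is (a positive integral against a probability measure with no atom at $0$, times $1+y m_1\ge 0$), so $(\text{num},\text{denom})\mapsto \Phi(-\,\text{num}/(2\sqrt{\text{denom}}))$ is continuous at the limiting point; hence $R_n(\lambda)\to R(\lambda)$. Chaining the two steps gives $\r_{\text{RLDA}}(\lambda)\to R(\lambda)$ almost surely.

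The main obstacle is precisely the continuity of the Mar{\v{c}}enko--Pastur solution $m(-\lambda;\cdot,\cdot)$ under joint perturbation of the spectral distribution in the weak topology and of the aspect ratio: one must keep the candidate solutions inside a fixed compact admissible region — on which $1-y+y\lambda m\ge 0$ and the integrals are well defined — and invoke the uniqueness/monotonicity properties of the equation to conclude that uniformly small perturbations of the defining functional move the fixed point only slightly. Everything else (Borel--Cantelli, integrals of bounded continuous functions against weakly convergent measures, and continuity of $\Phi$ and of division by a quantity bounded away from zero) is routine.
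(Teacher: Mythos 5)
Your proposal is correct and follows essentially the same route as the paper, which derives the corollary from Theorem \ref{thm1} by taking $D$ large enough for Borel--Cantelli and then passing to the limit in the deterministic surrogate. The paper treats the second step (convergence of $T_1,T_2,U_1,U_2$ under $H_n\Rightarrow H$, $G_n\Rightarrow G$, $y_n\to y$) as immediate; your proposal simply spells out the stability of the Mar\v{c}enko--Pastur fixed point that this step tacitly relies on.
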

	\begin{Remark}
		In contrast to the technical assumption made by \cite{wang2018Dimension}, we only require weak convergence to ensure that the asymptotics are meaningful. Not only that, we have relaxed the bound on the eigenvalues of $\bm{\Sigma}$, and the result was extended to almost surely. Furthermore, our expression more clearly demonstrates the impact of $\bm{\mu}$ and $\bm{\Sigma}$ on classification performance.
	\end{Remark}
	
	From Theorem \ref{thm1}, it can be seen that the contribution of the eigenvector $\bm{v}_j$ depends on the weight $\left\langle\bm{\mu}_1-\bm{\mu}_2,\bm{v}_j\right\rangle^2/s_j$. A counterintuitive conclusion is that the eigenvectors corresponding to small eigenvalues seem to play a more important role in classification tasks. To further discuss the impact of structure on the misclassification rate of RLDA, consider the following examples. Without loss of generality, assume $n_1=n_2$.
	\begin{example} Consider a sparse case where
		\begin{align*}
			H_{n}(s)=\frac{1}{p} \sum_{i=1}^{p} \I\{s \geq s_{i}\}, \quad G_{n}(s)=\I\{s \geq s_{k}\},
		\end{align*}
		with some $k\in\{1,\dots,p\}$.
	\end{example}
	In this example, the Bayes' discriminant direction $\bm{\Sigma}^{-1}(\bm{\mu}_1-\bm{\mu}_2)$ is parallel to the eigenvector $\bm{v}_k$. By direct calculation, it can be verified that $U_{1}^2(\lambda;H_n,G_n,y_n)/[U_{2}(\lambda;H_n,G_n,y_n)+4y_n T_{2}(\lambda;H_n,y_n)]$ is an increasing function of $s_k$. This means that when $\bm{\mu}$ is parallel to the eigenvectors corresponding to the small eigenvalues, the performance of RLDA will deteriorate. This is in contrast to the result of LDA, where LDA's performance is only related to $y_{1n}, y_{2n}$ and $\|\bm{\mu}\|$.
	
	In practice, RLDA also exhibits unstable performance in sparse cases. A natural thought is, can the performance be improved by enhancing the small eigenvalues of $\bm{\Sigma}$? We illustrate this point with an example below.
	\begin{example}\label{exp2}Consider a more general case, for some fixed $d$,
		\begin{align*}
			H_{n}(s)=\frac{1}{p} \sum_{i=1}^{p} \I\{s \geq s_{i}\}, \quad G_{n}(s)=\frac{1}{d} \sum_{i=1}^{d}\left\langle\bm{\mu}, \bm{v}_{k_i}\right\rangle^{2} \I\{s \geq s_{k_i}\},
		\end{align*}
		where $\{k_1,\dots,k_d\}\subset \{1,\dots,p\}$ and $s_{k_1}\geq s_{k_2}\geq \dots\geq s_{k_d}$. Define $H_{g_n}(s)=\frac{1}{p} \sum_{i=1}^{p} \I\{s \geq g(s_{i})\}$ and $G_{g_n}(s)=\frac{1}{d} \sum_{i=1}^{d}\left\langle\bm{\mu}, \bm{v}_{k_i}\right\rangle^{2} \I\{s \geq g(s_{k_i})\}$, with $g(s)=\max\{s,s_{k_1}\}$. This means performing a linear transformation on $\bm{x}$ to amplify the small eigenvalues $s_{k_2}\dots s_{k_d}$ to $s_{k_1}$. Under the conditions of Corollary \ref{cor}, further assume $H_{g_n}\Rightarrow H_g, G_{g_n}\Rightarrow G_g$, it can be verified that
		\begin{align*}
			\frac{U_{1}^2(\lambda;H,G,y)}{U_{2}(\lambda;H,G,y)+4y T_{2}(\lambda;H,y)}\leq \frac{U_{1}^2(\lambda;H_g,G_g,y)}{U_{2}(\lambda;H_g,G_g,y)+4y T_{2}(\lambda;H_g,y)}
		\end{align*}
		the equality holds if and only if $s_{k_1}= \dots= s_{k_d}$. 
	\end{example}
	Example \ref{exp2} illustrates that the performance of RLDA can be improved by amplifying the small eigenvalues of $\bm{\Sigma}$. To more intuitively understand these two examples, we consider a common model: $\bm{\Sigma}=(\rho^{|i-j|})_{p\times p}$ with $|\rho|<1$, which is used for LDA in \cite{bickel2004Theory}. By the \emph{Szeg\H{o} theorem}, we have
	\begin{align*}
		s_k \approx \frac{1-\rho^{2}}{1+\rho^{2}-2 \rho \cos \frac{k \pi}{p+1}}
	\end{align*}
	Thus, $s_1\rightarrow(1+\rho)/(1-\rho)$, $s_{p/2}\rightarrow(1-\rho^2)/(1+\rho^2)$ and $s_p\rightarrow(1-\rho)/(1+\rho)$. The left side of figure \ref{fig2.1} presents the empirical values of the misclassification rate for $\bm{\mu}\propto \bm{v}_k$, $k\in\{1,p/2,p\}$, while the right side presents the results after amplifying $s_p$ by a factor of 20. Figure \ref{fig2.1} visually demonstrates that when $\bm{\mu}$ is parallel to the eigenvectors corresponding to small eigenvalues, the performance of RLDA deteriorates, which can be improved by amplifying the small eigenvalues. These phenomena coincide with the examples we discussed.
	\begin{figure}
		\centering
		\begin{minipage}{0.48\textwidth}
			\includegraphics[width=\textwidth]{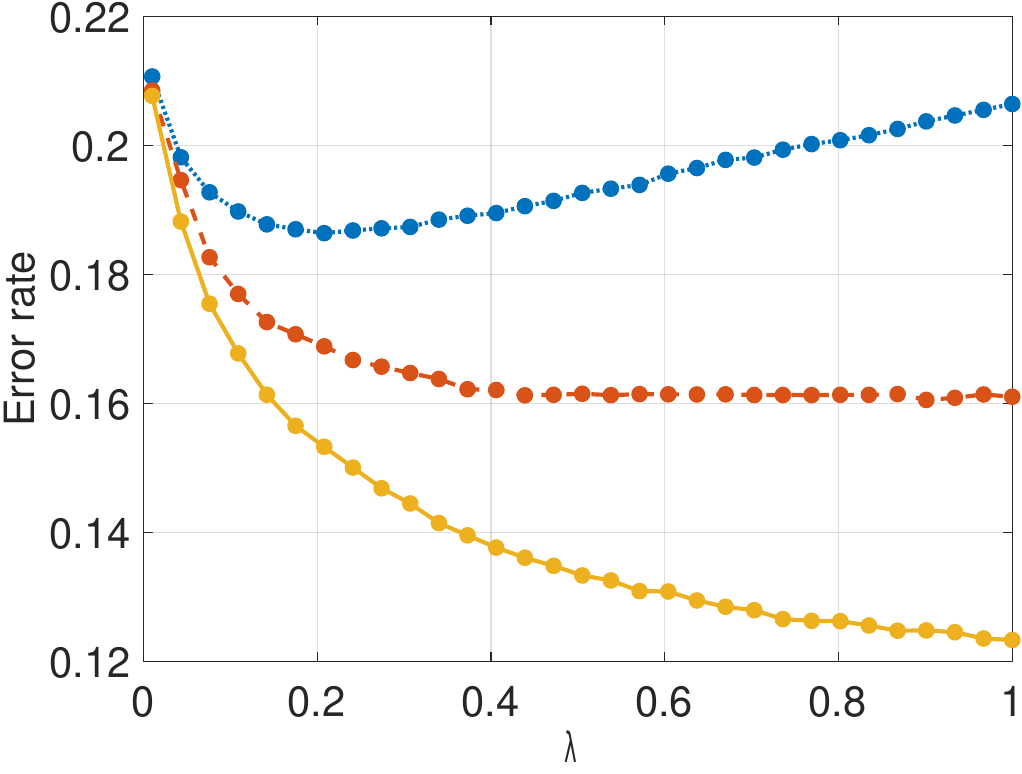}
		\end{minipage}\quad
		\begin{minipage}{0.48\textwidth}
			\includegraphics[width=\textwidth]{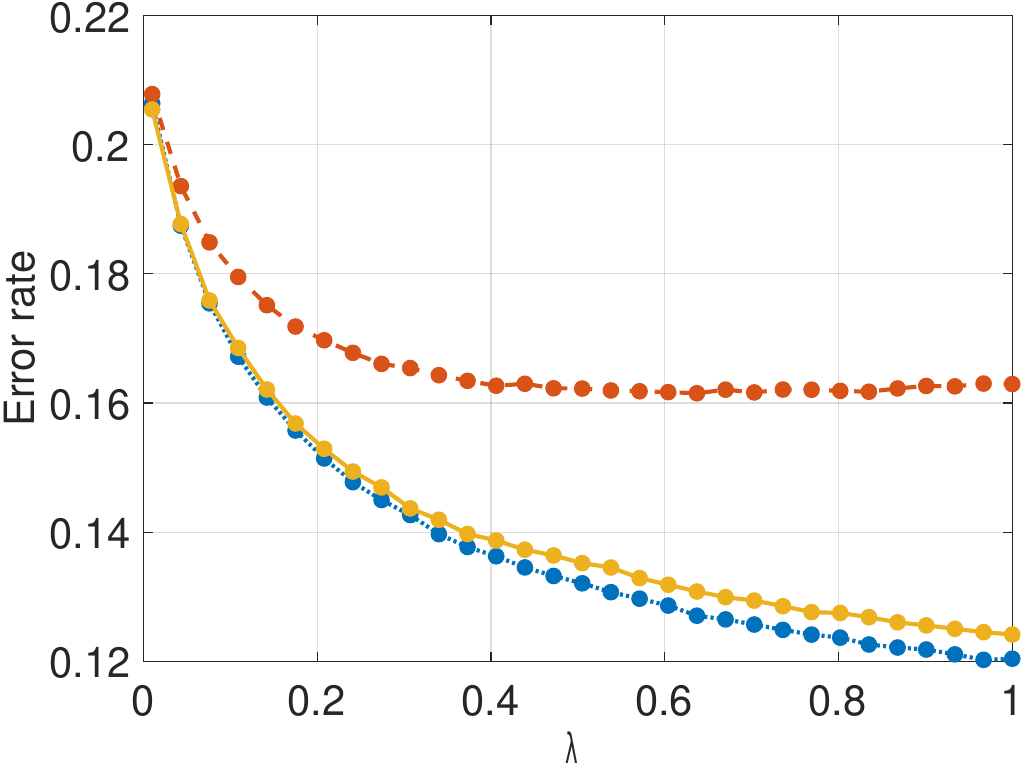}
		\end{minipage}
		\caption{The empirical misclassification rates for $\bm{\Sigma}=(0.5^{|i-j|})_{100\times100}$ and $n_1=n_2=100$. The line stands for $\bm{\mu}\propto \bm{v}_1$; the dashed line is the results for $\bm{\mu}\propto \bm{v}_{50}$ and the dotted line is the one for $\bm{\mu}\propto \bm{v}_{100}$. For all the cases, the true Bayes error rate defined in \eqref{2} is fixed at 10\%.}
		\label{fig2.1}
	\end{figure}
	\section{Spectral enhancement discriminant analysis}\label{sec3}
	In this section, we consider a special scenario: the population covariance matrix possesses a finite number of spiked (outlier) eigenvalues. This spiked model, first proposed by \cite{johnstone2001Distribution}, posits that the bulk of the eigenvalues cluster together, while a small number of "spikes" lie distinctly outside this bulk cluster—either much larger or much smaller. From the discussion in the previous section, it is known that the performance of RLDA suffers from significant instability when the projection of the population mean vector onto the spiked eigenvectors has large magnitude. The following work is dedicated to solving this problem.
	\begin{assumption}[Spiked model]\label{as2}
		Let $p/n\rightarrow y\in(0,1)\cup(1,\infty)$ and $H_n\Rightarrow H$, for any $j\in\mathbb{J}$, $s_j$ satisfys
		\begin{align*}
			\int\frac{s^2dH(s)}{(s_j-s)^2}<\frac{1}{y},
		\end{align*}
		where $\mathbb{J}=\mathbb{J}_1\cup\mathbb{J}_2$, $\mathbb{J}_1=\{1,\dots,r_1\}$, $\mathbb{J}_2=\{p-r_2+1,\dots,p\}$, with fixed $r=r_1+r_2$.
	\end{assumption}
	
	The above model is the so-called generalized spiked model, where $r_1$ and $r_2$ denote the numbers of large and small spiked eigenvalues, respectively. Spiked model encountered in many real applications, such as detection \citep{zhao1986Detection}, EEG signals \citep{davidson2009Functional}, and financial econometrics \citep{kritchman2008Determining,passemier2017Estimation}. Under the framework of high-dimensional random matrix theory, the asymptotic limit of spiked eigenvalues and eigenvectors has been widely and deeply studied \citep{mestre2008Asymptotic,bai2012Estimation,bao2022Statistical,liu2025Asymptotic}. For the sake of simplicity, we assume that $r_1$ and $r_2$ are perfectly known. In our simulations and experiments, we have used the method of \cite{jiang2023Universal} to estimate them.
	
	Under this model assumption, we propose a structural adjustment method called Spectral Enhancement Discriminant Analysis to improve classification performance. For given $\lambda>0$ and $\begin{cases}
		\ell_j\leq0,&j\in\mathbb{J}_1\\0\leq \ell_j<1,&j\in\mathbb{J}_2
	\end{cases}$, the SEDA classifier is given as follows:
	\begin{align}\label{L}
		D_{\text{SEDA}}(\bm{x})=\I\left\{\left(\bm{x}-\frac{\bar{\bm{x}}_1+\bar{\bm{x}}_2}{2}\right)\trans \left(\bm{S}_n+\lambda \bm{\mathcal{I}}\right)^{-1}\left(\bar{\bm{x}}_1-\bar{\bm{x}}_2\right)>0\right\},
	\end{align}
	where $\bm{\bm{\mathcal{I}}}=\bm{I}_p-\sum_{j\in\mathbb{J}}\ell_j\bm{u}_j\bm{u}_j\trans$. Define $\bm{\theta}=(\lambda,\ell_1,\dots,\ell_{r_1},\ell_{p-r_2+1},\dots,\ell_p)$, we can get the misclassification rate of SEDA 
	\begin{align*}
		\r_{\text{SEDA}}(\bm{\theta})=\frac{1}{2}\sum_{i=1}^2 \Phi\left(\frac{(-1)^i\left(2\bm{\mu}_i-\bar{\bm{x}}_1-\bar{\bm{x}}_2\right)\trans \left(\bm{S}_n+\lambda \bm{\mathcal{I}}\right)^{-1}\left(\bar{\bm{x}}_1-\bar{\bm{x}}_2\right) }{2\sqrt{\left(\bar{\bm{x}}_1-\bar{\bm{x}}_2\right)\trans \left(\bm{S}_n+\lambda \bm{\mathcal{I}}\right)^{-1}\bm{\Sigma} \left(\bm{S}_n+\lambda \bm{\mathcal{I}}\right)^{-1} \left(\bar{\bm{x}}_1-\bar{\bm{x}}_2\right)}}\right).
	\end{align*}
	
	The essence of SEDA is to find an appropriate transformation $\bm{X}\mapsto \bm{W}\bm{X}$ to adjust the structure of the covariance matrix $\bm{\Sigma}$. Formally, the transformation enhances small spiked eigenvalues and diminishes large spiked eigenvalues, maintaining the original eigenvectors. If $\ell_j,j\in\mathbb{J}$ are set to zero, SEDA will simplify to RLDA. 
	\subsection{Asymptotic misclassification rate}\label{sub3.1}
	To further investigate the asymptotic misclassification rate of SEDA, we make the following assumptions: our results will be uniform with respect to the positive constant $c$ appearing in this assumptions.
	\begin{assumption}\label{as31}
		~$p,n_1,n_2\rightarrow\infty$, $p/n_i\rightarrow y_i\in(0,\infty), i=1,2$.
	\end{assumption} 
	\begin{assumption}\label{as32}
		The spectral norm of $\bm{\Sigma}$ and the Euclidean norm of $\bm{\mu}$ are bounded, i.e., $1/c\leq\|\bm{\mu}\|\leq c$ and $1/c\leq\|\bm{\Sigma}\|\leq c$.
	\end{assumption}
	\begin{assumption}\label{as33}
		For any $j,k\in\mathbb{J}$, there exists some constant $c>0$ independent of $p$ and $n$, such that
		\begin{align*}
			\min_{j \neq k}\left|\frac{s_k}{s_j}-1\right|>c.
		\end{align*}
	\end{assumption}
	\begin{assumption}\label{as34}
		For given $\begin{cases}
			\ell_j\leq 0,&j\in\mathbb{J}_1\\0\leq \ell_j<1,&j\in\mathbb{J}_2
		\end{cases}$, Let $H_{f_n}(s)=\frac{1}{p}\sum_{i=1}^{p}\I\{s\geq f(s_i)\}\Rightarrow H_f(s)$ and $G_{f_n}(s)=\frac{1}{\|\bm{\mu}\|^2}\sum_{i=1}^{p}\left\langle\bm{\mu}, \bm{v}_i\right\rangle^{2} \I\{s \geq f(s_{i})\}\Rightarrow G_f(s)$, where  
		\begin{align*}
			f(s_i)=\left[1+\sum_{j\in\mathbb{J}}(\ell_j/1-\ell_j)\chi_j(i)\right]s_i,
		\end{align*}
		and $\{\chi_j(i)\}$ is defined by
		\begin{align*}
			\chi_j(i)=\left\{
			\begin{array}{ll}
				1-\sum_{k=1,k\neq j}^{p}\left(\frac{s_j}{s_k-s_j}-\frac{\omega_j}{s_k-\omega_j}\right), & j=i\\
				\frac{s_j}{s_i-s_j}-\frac{\omega_j}{s_i-\omega_j}, & j\neq i
			\end{array}
			\right.
		\end{align*}
		$\{\omega_j\}$ are the solutions to the following equation in $\omega$ with a descending order,
		\begin{align}\label{c7}
			\frac{1}{p}\sum_{i=1}^{p}\frac{s_i}{s_i-\omega}=\frac{1}{y}.
		\end{align}
	\end{assumption}
	\begin{Remark}
		Assumptions \ref{as31} and \ref{as32} are similar to Assumption \ref{as11}--\ref{as13}, while are two common conditions in random matrix theory. Assumption \ref{as33} ensures the gaps of adjacent spiked eigenvalues have a constant lower bound. Assumption \ref{as34} requires the ESD of the transformed covariance matrix to converge. It is easy to see that $\chi_j(i)\rightarrow 0$ when $j\neq i$; therefore, $f(\cdot)$ actually amplifies small spiked eigenvalues and diminishes large spiked eigenvalues without changing their eigenvectors.
	\end{Remark}
	Based on the above assumptions, we can establish an asymptotic approximation of the misclassification rate of SEDA. Before this, we present a key lemma in the proof of the main theorem.
	\begin{lemma}[Convergence of sample spiked eigenvectors]\label{lem2.4}
		Under the Assumptions \ref{as2}--\ref{as34}, for any $j\in\mathbb{J}$ and any deterministic unit vectors $\bm{\xi}\in \mathbb{R}^p$, we have
		\begin{align}
			\left|\bm{\xi}\trans \bm{u}_j\bm{u}_j\trans \bm{\xi}-\sum_{i=1}^{p}\chi_j(i)\bm{\xi}\trans \bm{v}_i\bm{v}_i\trans \bm{\xi}\right|\xrightarrow{a.s.} 0,
		\end{align}
	\end{lemma}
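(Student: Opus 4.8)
The plan is to represent the $j$th sample-spike projector as a contour integral of the resolvent and then substitute a deterministic equivalent that is valid near the outlier. \textbf{Step 1 (reduction and contour form).} Since the pooled covariance of two Gaussian samples is an exact scaled Wishart, $\bm S_n=\tfrac1N\bm\Sigma^{1/2}\bm Z\bm Z\trans\bm\Sigma^{1/2}$ with $N=n-2$ and $\bm Z$ a $p\times N$ matrix of i.i.d.\ $N(0,1)$ entries, I would use the rotational invariance of $\bm Z$ to take $\bm\Sigma=\mathrm{diag}(s_1,\dots,s_p)$; then $\bm\xi\trans\bm v_i\bm v_i\trans\bm\xi=\xi_i^2$ and the claim becomes $(\bm\xi\trans\bm u_j)^2-\sum_i\chi_j(i)\xi_i^2\to0$ a.s. Let $\rho_j$ be the deterministic outlier location attached to the spike $s_j$. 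Assumption \ref{as2} (the distant-spike condition) places $\rho_j$ at an $\Theta(1)$ distance from the limiting bulk support, and Assumption \ref{as33} keeps the $\rho_k$, $k\in\mathbb J$, mutually $\Theta(1)$-separated, so one may fix a small positively oriented circle $\mathcal C_j$ centred at $\rho_j$ that stays a fixed distance from the bulk and from every other $\rho_k$. Generalized-spiked-model eigenvalue theory gives $a_j\to\rho_j$ a.s.\ and no other eigenvalue of $\bm S_n$ near $\mathcal C_j$, so eventually almost surely
\begin{align*}
(\bm\xi\trans\bm u_j)^2=-\frac{1}{2\pi i}\oint_{\mathcal C_j}\bm\xi\trans(\bm S_n-z\bm I_p)^{-1}\bm\xi\,dz.
\end{align*}

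\textbf{Step 2 (deterministic equivalent near the spike).} I would treat the spikes as a finite-rank modification of a spike-free reference $\bm S_n^{\circ}$ (the same model with the spiked population eigenvalues moved into the bulk), so that a Woodbury identity writes $\bm\xi\trans(\bm S_n-z)^{-1}\bm\xi$ in terms of bilinear forms $\bm a\trans(\bm S_n^{\circ}-z)^{-1}\bm b$, a small matrix $\bm M(z)$ and a vector $\bm c(z)$ assembled from such forms, with $\det\bm M(z)=0$ precisely at the sample outlier. For $z$ on $\mathcal C_j$, which is bounded away from the spectrum of $\bm S_n^{\circ}$, the bilinear (anisotropic) deterministic equivalents hold with a quantitative rate; combined with the $n^{-D}$-type concentration already used for Theorem \ref{thm1}, a net over $\mathcal C_j$ and Borel--Cantelli, this yields, a.s.\ and uniformly on $\mathcal C_j$, an expansion $\bm\xi\trans(\bm S_n-z)^{-1}\bm\xi=\mathcal R(z;\bm\xi)+o(1)$ in which $\mathcal R(\cdot;\bm\xi)$ is regular on $\mathcal C_j$ and has a single simple pole inside it, at $z=\rho_j$, whose residue is a quadratic form in $\bm\xi$ determined by the Stieltjes transform of $H_n$ at $\rho_j$.

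\textbf{Step 3 (residue and matching).} Integrating, $(\bm\xi\trans\bm u_j)^2$ equals minus the residue of $\mathcal R(\cdot;\bm\xi)$ at $\rho_j$, up to $o(1)$. Evaluating that residue and identifying the induced companion parameter with $\omega_j$ --- the quantity pinned down by the finite-$p$ equation \eqref{c7} --- one reads off $-\sum_i\chi_j(i)\xi_i^2$, with weight $\chi_j(j)$ on the spike direction and ``leakage'' weights $\chi_j(i)=\tfrac{s_j}{s_i-s_j}-\tfrac{\omega_j}{s_i-\omega_j}$ on the others, exactly as in Assumption \ref{as34}. As a check, $\bm\xi=\bm e_j$ recovers the classical overlap $\chi_j(j)\to(1-y\!\int\! t^2(s_j-t)^{-2}dH)/(1+y\!\int\! t(s_j-t)^{-1}dH)$, and $\sum_{i\ne j}\chi_j(i)$ carries off the complementary mass, consistent with $\|\bm u_j\|=1$.

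\textbf{Main obstacle.} The delicate point is capturing the leakage weights $\chi_j(i)$, $i\neq j$: the residue at $\rho_j$ is a \emph{dressed} rank-one operator, not $\propto\bm e_j\bm e_j\trans$, and the dressing --- which spreads mass over all population eigenvectors --- appears only once the second-order (fluctuation) behaviour of the bilinear forms entering $\bm M(z)$ and $\bm c(z)$ is tracked, not merely their first-order deterministic equivalents; equivalently, one must control the \emph{bulk} component of $\bm u_j$, not just its projection on $\bm v_j$. Compounding this, the answer genuinely depends on the finite-$p$ spectrum through $\omega_j$: since $\omega_j\to s_j$ but $s_j-\omega_j=\Theta(1/p)$, passing to the weak limit prematurely would force the wrong value $\chi_j(j)\to1$, so the deterministic equivalent must be carried with an honest $o(1)$ error stated in terms of $H_n$ rather than $H$. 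Assumptions \ref{as32}--\ref{as33} are used throughout to keep the denominators $s_i-\omega_j$, $s_i-s_j$ and the Woodbury determinant bounded away from zero.
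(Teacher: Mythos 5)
Your plan shares the paper's starting point (the Cauchy-integral representation of $\bm{u}_j\bm{u}_j\trans$ over a contour isolating $a_j$, justified by Assumptions \ref{as2} and \ref{as33}) but then diverges at the decisive step, and the divergence matters. The paper never introduces a spike-free reference $\bm{S}_n^{\circ}$ or a Woodbury identity: it substitutes the anisotropic deterministic equivalent of the \emph{full spiked} resolvent, $\bm{\xi}\trans(\bm{S}_n-z\bm{I}_p)^{-1}\bm{\xi}\approx\bm{\xi}\trans[-z\underline{m}(z)\bm{\Sigma}-z\bm{I}_p]^{-1}\bm{\xi}$ (Bai-type, valid for general $\bm{\Sigma}$ since finitely many spikes do not perturb $\underline{m}$), extends it to the contour via a dominated-convergence/almost-sure-separation argument, and then performs the change of variables $\omega=-1/\underline{m}(z)$ so that the image contour encloses the \emph{two} nearby poles $s_j$ and $\omega_j$; the residue theorem then delivers $\chi_j(j)$ and every leakage weight $\chi_j(i)$ by a purely deterministic finite-$p$ computation. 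This is exactly why the paper escapes what you correctly identify as your main obstacle: in your Woodbury route the residue at $\rho_j$ is built from bilinear forms $\bm{\xi}\trans(\bm{S}_n^{\circ}-z)^{-1}\tilde{\bm{v}}_j$ whose first-order deterministic equivalents vanish in the bulk directions, so the $\Theta(1)$ aggregate leakage $\sum_{i\neq j}\chi_j(i)\xi_i^2$ only emerges from the \emph{variance} of these forms — a genuinely second-order computation (in the spirit of isotropic local laws) that your sketch names but does not carry out, and that is the entire content of the lemma beyond the classical overlap $\chi_j(j)$. Your finite-$p$ bookkeeping instincts are exactly right ($s_j-\omega_j=\Theta(1/p)$, so passing to $H$ prematurely destroys the answer), and your sanity check at $\bm{\xi}=\bm{e}_j$ matches the known formula; but as written the proposal would need the second-order machinery to be supplied before it closes, whereas the paper's substitution-plus-residue argument makes that machinery unnecessary.
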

	\begin{Remark}
		Lemma \ref{lem2.4} extends the limiting result for the angle between the true and estimated spiked eigenvectors \citep{li2025Spectrallycorrected}. We relax the assumption that non-spiked eigenvalues are equal and generalize the result to the generalized spiked model. This enables further exploration of the theoretical properties of the SEDA classifier.
	\end{Remark}
	Then, we obtain the asymptotic misclassification rate of SEDA as shown in the following theorem.
	\begin{Theorem}[Asymptotic misclassification rate for SEDA]\label{thm2}
		Under the Assumptions \ref{as2}--\ref{as34}, for any $\lambda>0$ and $\begin{cases}
			\ell_j\leq 0,&j\in\mathbb{J}_1\\0\leq \ell_j<1,&j\in\mathbb{J}_2
		\end{cases}$, almost surely
		\begin{equation*}
			\r_{\text{SEDA}}(\bm{\theta})\rightarrow\frac{1}{2}\sum_{i=1}^2\Phi\left(-\frac{U_{1}(\lambda;H_f,G_f,y)+(-1)^i\left(y_1-y_2\right)T_{1}(\lambda;H_f,y)}{2 \sqrt{U_{2}(\lambda;H_f,G_f,y)+\left(y_1+y_2\right) T_{2}(\lambda;H_f,y)}}\right)
		\end{equation*}
	\end{Theorem}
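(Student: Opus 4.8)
The plan is to show that, inside each $\Phi$, the numerator $(2\bm{\mu}_i-\bar{\bm{x}}_1-\bar{\bm{x}}_2)\trans (\bm{S}_n+\lambda\bm{\mathcal{I}})^{-1}(\bar{\bm{x}}_1-\bar{\bm{x}}_2)$ and the squared denominator $(\bar{\bm{x}}_1-\bar{\bm{x}}_2)\trans(\bm{S}_n+\lambda\bm{\mathcal{I}})^{-1}\bm{\Sigma}(\bm{S}_n+\lambda\bm{\mathcal{I}})^{-1}(\bar{\bm{x}}_1-\bar{\bm{x}}_2)$ converge almost surely to $-\big(U_1+(-1)^i(y_1-y_2)T_1\big)$ and $U_2+(y_1+y_2)T_2$ evaluated at $(H_f,G_f,y)$, after which continuity of $\Phi$ finishes the proof. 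Two structural facts organize everything. First, for Gaussian data $\bar{\bm{x}}_1,\bar{\bm{x}}_2$ are independent of the pooled matrix $\bm{S}_n$, hence of $\bm{S}_n+\lambda\bm{\mathcal{I}}$, since $\bm{\mathcal{I}}=\bm{I}_p-\sum_{j\in\mathbb{J}}\ell_j\bm{u}_j\bm{u}_j\trans$ is a function of the spectral decomposition of $\bm{S}_n$; therefore all quadratic forms in $\bm{z}_i:=\bar{\bm{x}}_i-\bm{\mu}_i$ may be treated conditionally on $\bm{S}_n$ by Hanson--Wright-type concentration and Borel--Cantelli. Second, with $A:=(\bm{S}_n+\lambda\bm{\mathcal{I}})^{-1}$ and $A_0:=(\bm{S}_n+\lambda\bm{I}_p)^{-1}$, the matrix $\bm{S}_n+\lambda\bm{\mathcal{I}}=(\bm{S}_n+\lambda\bm{I}_p)-\lambda\sum_{j\in\mathbb{J}}\ell_j\bm{u}_j\bm{u}_j\trans$ is a rank-$r$ perturbation of the RLDA matrix, and because the $\bm{u}_j$ are orthonormal,
\[
A=A_0+\sum_{j\in\mathbb{J}}\delta_j\,\bm{u}_j\bm{u}_j\trans,\qquad \delta_j=\frac{\lambda\ell_j}{(a_j+\lambda(1-\ell_j))(a_j+\lambda)},
\]
which is well defined and positive definite, using $\lambda>0$, $1-\ell_j>0$ and $a_j\ge0$.

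Writing $\bm{m}=\bm{\mu}_1-\bm{\mu}_2=\bm{\Sigma}^{1/2}\bm{\mu}$ and $\bm{h}=\bar{\bm{x}}_1-\bar{\bm{x}}_2=\bm{m}+\bm{z}_1-\bm{z}_2$, I would first expand the numerator and the denominator. The mixed terms $\bm{m}\trans A\bm{z}_i$, $\bm{m}\trans A\bm{\Sigma}A\bm{z}_i$ and $\bm{z}_1\trans(\cdot)\bm{z}_2$ vanish almost surely (they are centred with conditional variance $O(1/n)$), while $\bm{z}_i\trans A\bm{z}_i-n_i^{-1}\tr(\bm{\Sigma}A)\to0$ and $\bm{z}_i\trans A\bm{\Sigma}A\bm{z}_i-n_i^{-1}\tr(\bm{\Sigma}A\bm{\Sigma}A)\to0$ a.s. The problem thus reduces to the deterministic limits of the four functionals $\bm{m}\trans A\bm{m}$, $\bm{m}\trans A\bm{\Sigma}A\bm{m}$, $p^{-1}\tr(\bm{\Sigma}A)$, $p^{-1}\tr((\bm{\Sigma}A)^2)$. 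For the two traces, inserting $A=A_0+\sum_j\delta_j\bm{u}_j\bm{u}_j\trans$ produces rank-$r$ corrections of size $O(1)$ ($\|A_0\|\le\lambda^{-1}$, $\|\bm{\Sigma}\|\le c$, finitely many spikes), negligible against the $\Theta(p)=\Theta(n_i)$ main terms; hence $n_i^{-1}\tr(\bm{\Sigma}A)\to y_iT_1(\lambda;H,y)$ and $n_i^{-1}\tr((\bm{\Sigma}A)^2)\to y_iT_2(\lambda;H,y)$ by the Mar\v{c}enko--Pastur deterministic equivalents used in the proof of Theorem~\ref{thm1}, and since $f$ relocates only $r=o(p)$ eigenvalues, $H_{f_n}\Rightarrow H=H_f$ and $m$, $m_1$, $T_1$, $T_2$ are unchanged, which is why $T_1(\lambda;H_f,y)$, $T_2(\lambda;H_f,y)$ appear in the statement.

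The content sits in the two $\bm{m}$-functionals, where the rank-$r$ correction is itself of order one. For $\bm{m}\trans A\bm{m}=\bm{m}\trans A_0\bm{m}+\sum_{j\in\mathbb{J}}\delta_j(\bm{m}\trans\bm{u}_j)^2$, the first summand equals, up to $o(1)$, $\sum_i w_is_ig(s_i)$ with $w_i=\langle\bm{\mu},\bm{v}_i\rangle^2$ and $g(s)=[\,s(1-y+y\lambda m(-\lambda;H,y))+\lambda\,]^{-1}$; to the correction I apply Lemma~\ref{lem2.4} with $\bm{\xi}=\bm{m}/\|\bm{m}\|$ to get $(\bm{m}\trans\bm{u}_j)^2\to\sum_i\chi_j(i)s_iw_i=\chi_j(j)s_jw_j+o(1)$ (the residual sum is $o(1)$ because $\chi_j(i)=O(1/p)$ for $i\neq j$, which uses the spike separation in Assumption~\ref{as2} and the gaps in Assumption~\ref{as33}), together with the generalized spiked model to replace the random $a_j$ by its a.s. limit $\bar a_j$, so $\delta_j\to\bar\delta_j$. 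The functional $\bm{m}\trans A\bm{\Sigma}A\bm{m}$ is treated in parallel, using $A_0\bm{u}_j=(a_j+\lambda)^{-1}\bm{u}_j$, a polarized form of Lemma~\ref{lem2.4}, and $\bm{u}_j\trans\bm{\Sigma}\bm{u}_k\to0$ for $j\neq k$ (equivalently, one first establishes the anisotropic deterministic equivalents of $A$ and of $A\bm{\Sigma}A$ and reads the limits off). Collecting the pieces, and using that $f(s_i)=s_i+O(1/p)$ for $i\notin\mathbb{J}$ so that the corresponding residuals are again $o(1)$, Theorem~\ref{thm2} reduces to the per-spike identity, for each $j\in\mathbb{J}$,
\[
s_jg(s_j)+\bar\delta_j\,\chi_j(j)\,s_j\;=\;f(s_j)\,g\big(f(s_j)\big)+o(1),
\]
together with its $U_2$-analogue: the random rank-$r$ spectral update of the sample resolvent acts, in the limit, exactly as the substitution $s_j\mapsto f(s_j)$ in the population spectrum inside $U_1$ and $U_2$, so that $\bm{m}\trans A\bm{m}=\|\bm{\mu}\|^2\!\int\! s\,g(s)\,dG_{f_n}(s)+o(1)\to U_1(\lambda;H_f,G_f,y)$.

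Verifying this identity is the main obstacle: it hinges on the precise self-consistency between the Mar\v{c}enko--Pastur equation defining $m$, the equation~\eqref{c7} defining $\{\omega_j\}$ (hence the explicit coefficient $\chi_j(j)$), the generalized-spiked limit $\bar a_j$ of the sample spike entering $\bar\delta_j$, and the definition of $f$ in Assumption~\ref{as34} — a cancellation intrinsic to the generalized spiked model rather than anything following from the routine concentration or trace estimates. Once the identity and its $U_2$-counterpart are in hand, almost-sure convergence of every constituent, the strictly positive lower bound on $U_2(\lambda;H_f,G_f,y)+(y_1+y_2)T_2(\lambda;H_f,y)$ (from $\lambda>0$ and the norm bounds of Assumption~\ref{as32}), and continuity of $\Phi$ and of $(x,v)\mapsto x/(2\sqrt v)$ on the relevant domain yield the claimed almost-sure limit for $\r_{\text{SEDA}}(\bm{\theta})$.
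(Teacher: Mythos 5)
Your reduction is set up correctly: the Sherman--Morrison-type decomposition $A=(\bm{S}_n+\lambda\bm{\mathcal{I}})^{-1}=A_0+\sum_{j\in\mathbb{J}}\delta_j\bm{u}_j\bm{u}_j\trans$ with $\delta_j=\lambda\ell_j/[(a_j+\lambda(1-\ell_j))(a_j+\lambda)]$ is exact, the concentration of the $\bm{z}_i$-terms is routine, and the observation that $H_{f_n}$ and $H_n$ share the same weak limit (so $T_1,T_2,m,m_1$ are unaffected while $G_f$ genuinely changes) is right. But the proof has a genuine gap exactly where you flag it: the entire content of the theorem is compressed into the per-spike identity $s_jg(s_j)+\bar\delta_j\chi_j(j)s_j=f(s_j)g(f(s_j))$ and its $U_2$-analogue, and you assert rather than verify it. This is not a routine check: it requires the almost-sure limit $\bar a_j$ of the sample spike under the \emph{generalized} spiked model (for both large and small spikes), and then an algebraic cancellation tying together $m(-\lambda;H,y)$, the roots $\omega_j$ of \eqref{c7}, the coefficient $\chi_j(j)$, and the specific form of $f$ in Assumption \ref{as34}. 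The $U_2$ part is worse: expanding $A\bm{\Sigma}A$ produces cross terms such as $\bm{m}\trans A_0\bm{\Sigma}\bm{u}_j\bm{u}_j\trans\bm{m}$ in which the random matrix $A_0$ is correlated with $\bm{u}_j$, so neither Lemma \ref{lem2.4} nor its polarized form applies; you would need a joint anisotropic equivalent for $(\bm{S}_n+\lambda\bm{I}_p)^{-1}\bm{\Sigma}\,\bm{u}_j\bm{u}_j\trans$ and a limit for $\bm{u}_j\trans\bm{\Sigma}\bm{u}_k$, none of which is established. As written, the argument is a (correct) reformulation of what must be proved, not a proof.

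The paper's route avoids this matching problem entirely. It upgrades Lemma \ref{lem2.4} to the operator-norm statement $\bigl\|\sum_{j\in\mathbb{J}}\tfrac{\ell_j}{1-\ell_j}(\bm{u}_j\bm{u}_j\trans-\sum_i\chi_j(i)\bm{v}_i\bm{v}_i\trans)\bigr\|\to0$, which gives $\|\bm{M}_n-\bm{W}_n\|\to0$ with $\bm{W}_n=\bm{P}(\bm{P}\bm{S}_n\bm{P}+\lambda\bm{I}_p)^{-1}\bm{P}$ for a \emph{deterministic} $\bm{P}$ that commutes with $\bm{\Sigma}$. Since $\bm{P}\bm{S}_n\bm{P}$ is the sample covariance of data with population covariance $\bm{P}\bm{\Sigma}\bm{P}$, whose eigenvalues are exactly $f(s_i)$ with unchanged eigenvectors, every quadratic form reduces to the RLDA setting of Corollary \ref{cor} with $(H_f,G_f)$ in place of $(H,G)$: the function $f$ appears by construction, and no per-spike cancellation ever has to be checked. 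If you want to salvage your additive route, the cleanest fix is to prove your identity by showing it is equivalent to the paper's operator-norm replacement restricted to the direction $\bm{v}_j$ --- at which point you have essentially reproduced their Lemma \ref{lem4.3}.
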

	Theorem \ref{thm2} provides an explicit expression for the asymptotic misclassification rate, influenced by $p/n_1, p/n_2$, and the tuning parameter $\bm{\theta}$. To reduce the bias caused by unequal sample sizes, we present the bias-corrected results in the next subsection.
	\subsection{Bias correction}\label{sub3.2}
	When the sample sizes are different, the estimation bias in the intercept part of SEDA will lead to different misclassification rates. Since $\Phi(\cdot)$ is strictly convex on $(-\infty,0)$, we can reduce the misclassification rate by removing the unnecessary term $\left(y_1-y_2\right)T_1(\lambda;H_f,y)$. To this end, we consider the following classifier,
	\begin{align}
		D(\bm{x})=\I\left\{\left(\bm{x}-\frac{\bar{\bm{x}}_1+\bar{\bm{x}}_2}{2}\right)\trans \left(\bm{S}_n+\lambda \bm{\mathcal{I}}\right)^{-1}\left(\bar{\bm{x}}_1-\bar{\bm{x}}_2\right)+\alpha>0\right\}. 
	\end{align}
	By the Proposition 2 in \cite{mai2012Direct}, when the classification direction is $\left(\bm{S}_n+\lambda \bm{\mathcal{I}}\right)^{-1}\\\left(\bar{\bm{x}}_1-\bar{\bm{x}}_2\right)$, the optimal intercept corresponding to minimum misclassification rate is
	\begin{align*}
		\alpha_0=-\frac{1}{2}\left(\bm{\mu}_1+\bm{\mu}_2\right)\trans\left(\bm{S}_n+\lambda \bm{\mathcal{I}}\right)^{-1}\left(\bar{\bm{x}}_1-\bar{\bm{x}}_2\right),
	\end{align*}
	while for SEDA the intercept is set to be
	\begin{align*}
		\alpha_1=-\frac{1}{2}\left(\bar{\bm{x}}_1+\bar{\bm{x}}_2\right)\trans\left(\bm{S}_n+\lambda \bm{\mathcal{I}}\right)^{-1}\left(\bar{\bm{x}}_1-\bar{\bm{x}}_2\right).
	\end{align*}
	Then, we can calculate the difference between $\alpha_0$ and $\alpha_1$ to adjust the intercept term.
	\begin{align*}
		\alpha:=\alpha_0-\alpha_1=&\frac{1}{2n_{1}} \bm{w}_{1}^{T} \bm{\Sigma}^{\frac{1}{2}}\left(\bm{S}_n+\lambda \bm{\mathcal{I}}\right)^{-1} \bm{\Sigma}^{\frac{1}{2}} \bm{w}_{1}-\frac{1}{2n_{2}} \bm{w}_{2}^{T} \bm{\Sigma}^{\frac{1}{2}}\left(\bm{S}_n+\lambda \bm{\mathcal{I}}\right)^{-1} \bm{\Sigma}^{\frac{1}{2}} \bm{w}_{2}\\&-\frac{1}{2}\left(\frac{1}{\sqrt{n_1}}\bm{\Sigma}^{\frac{1}{2}} \bm{w}_1+\frac{1}{\sqrt{n_2}}\bm{\Sigma}^{\frac{1}{2}} \bm{w}_2\right)\trans\left(\bm{S}_n+\lambda\bm{\mathcal{I}}\right)^{-1}\left(\bm{\mu}_1-\bm{\mu}_2\right),
	\end{align*}
	where $\bm{w}_1,\bm{w}_2\sim N(0,\bm{I}_p)$ are independent with $\bm{S}_n$. Since $\alpha$ depends on the population covariance matrix $\bm{\Sigma}$, which is unknown in practice, we find an asymptotically equivalent 
	\begin{align}\label{alpha}
		\widehat{\alpha}=\left(\frac{p}{2n_1}-\frac{p}{2n_2}\right)\frac{1-\frac{1}{p}\tr\left[\frac{1}{\lambda}\bm{S}_n\bm{\bm{\mathcal{I}}}^{-1}+\bm{I}_p\right]^{-1}}{1-\frac{p}{n}+\frac{1}{n}\tr\left[\frac{1}{\lambda}\bm{S}_n\bm{\bm{\mathcal{I}}}^{-1}+\bm{I}_p\right]^{-1}}. 
	\end{align}
	The derivation of $\widehat{\alpha}$ is deferred to the Appendix. Based on the above, we propose the corrected SEDA classifier
	\begin{align}
		D_{\text{SEDA}}^c(\bm{x})=\I\left\{\left(\bm{x}-\frac{\bar{\bm{x}}_1+\bar{\bm{x}}_2}{2}\right)\trans\left(\bm{S}_n+\lambda \bm{\mathcal{I}}\right)^{-1}\left(\bar{\bm{x}}_1-\bar{\bm{x}}_2\right)+\widehat{\alpha}>0\right\}. 
	\end{align}
	Then the misclassification rate of the corrected SEDA is
	\begin{align*}
		\r^c_{\text{SEDA}}(\bm{\theta})=\frac{1}{2}\sum_{i=1}^2 \Phi\left(\frac{(-1)^i\left[\left(2\bm{\mu}_i-\bar{\bm{x}}_1-\bar{\bm{x}}_2\right)\trans \left(\bm{S}_n+\lambda \bm{\mathcal{I}}\right)^{-1}\left(\bar{\bm{x}}_1-\bar{\bm{x}}_2\right)+2\widehat{\alpha}\right] }{2\sqrt{\left(\bar{\bm{x}}_1-\bar{\bm{x}}_2\right)\trans \left(\bm{S}_n+\lambda \bm{\mathcal{I}}\right)^{-1}\bm{\Sigma} \left(\bm{S}_n+\lambda \bm{\mathcal{I}}\right)^{-1} \left(\bar{\bm{x}}_1-\bar{\bm{x}}_2\right)}}\right).
	\end{align*}
	
	Similar to Theorem \ref{thm2}, we obtain the asymptotic misclassification rate for the corrected SEDA as the following theorem.
	\begin{Corollary}[Asymptotic misclassification rate for corrected SEDA]\label{cor2.4}
		Under the conditions of Theorem \ref{thm2}, for the corrected SEDA,
		almost surely
		\begin{equation}
			\r^c_{\text{SEDA}}(\bm{\theta})\rightarrow\Phi\left(-\frac{U_1(\lambda;H_f,G_f,y)}{2 \sqrt{U_2(\lambda;H_f,G_f,y)+\left(y_1+y_2\right) T_2(\lambda;H_f,y)}}\right)
		\end{equation}	
	\end{Corollary}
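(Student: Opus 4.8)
The plan is to reduce the statement to Theorem~\ref{thm2} by pinning down the almost sure limit of the bias-correction term $\widehat{\alpha}$. Abbreviate $U_k=U_k(\lambda;H_f,G_f,y)$ and $T_k=T_k(\lambda;H_f,y)$, and set
\begin{align*}
N_i&:=(2\bm{\mu}_i-\bar{\bm{x}}_1-\bar{\bm{x}}_2)\trans(\bm{S}_n+\lambda\bm{\mathcal{I}})^{-1}(\bar{\bm{x}}_1-\bar{\bm{x}}_2),\\
D&:=(\bar{\bm{x}}_1-\bar{\bm{x}}_2)\trans(\bm{S}_n+\lambda\bm{\mathcal{I}})^{-1}\bm{\Sigma}(\bm{S}_n+\lambda\bm{\mathcal{I}})^{-1}(\bar{\bm{x}}_1-\bar{\bm{x}}_2),
\end{align*}
so that the argument of $\Phi$ inside $\r^c_{\text{SEDA}}(\bm{\theta})$ equals $(-1)^i(N_i+2\widehat{\alpha})/(2\sqrt{D})$. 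Now $N_i$ and $D$ are precisely the quantities whose deterministic equivalents are produced in the proof of Theorem~\ref{thm2}, which shows $(-1)^iN_i\xrightarrow{a.s.}-\big(U_1+(-1)^i(y_1-y_2)T_1\big)$ and $D\xrightarrow{a.s.}U_2+(y_1+y_2)T_2=:\sigma^2>0$. Hence the single additional ingredient required is $\widehat{\alpha}\xrightarrow{a.s.}\tfrac12(y_1-y_2)T_1$. Granting it, Slutsky's lemma gives
\[
\frac{(-1)^i(N_i+2\widehat{\alpha})}{2\sqrt{D}}=\frac{(-1)^iN_i}{2\sqrt{D}}+\frac{(-1)^i\widehat{\alpha}}{\sqrt{D}}\xrightarrow{a.s.}-\frac{U_1+(-1)^i(y_1-y_2)T_1}{2\sigma}+\frac{(-1)^i(y_1-y_2)T_1}{2\sigma}=-\frac{U_1}{2\sigma},
\]
the same value for $i=1$ and $i=2$; by continuity of $\Phi$ the two summands of $\r^c_{\text{SEDA}}(\bm{\theta})=\tfrac12\sum_{i=1}^2\Phi(\cdot)$ coincide in the limit, so $\r^c_{\text{SEDA}}(\bm{\theta})\xrightarrow{a.s.}\Phi\big(-U_1/(2\sigma)\big)$, which is the asserted expression. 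The point of the correction is exactly that $2\widehat{\alpha}$ cancels the intercept bias $(-1)^i(y_1-y_2)T_1$.

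It remains to establish $\widehat{\alpha}\xrightarrow{a.s.}\tfrac12(y_1-y_2)T_1$, which I would obtain directly from the explicit formula~\eqref{alpha}. Since $\{\bm{u}_j\}_{j\in\mathbb{J}}$ is an orthonormal family of eigenvectors of $\bm{S}_n$, $\bm{\mathcal{I}}^{-1}=\bm{I}_p+\sum_{j\in\mathbb{J}}\tfrac{\ell_j}{1-\ell_j}\bm{u}_j\bm{u}_j\trans$ commutes with $\bm{S}_n$, whence $\tfrac1\lambda\bm{S}_n\bm{\mathcal{I}}^{-1}+\bm{I}_p=\tfrac1\lambda(\widetilde{\bm{S}}_n+\lambda\bm{I}_p)$ with $\widetilde{\bm{S}}_n:=\bm{S}_n+\sum_{j\in\mathbb{J}}\tfrac{\ell_ja_j}{1-\ell_j}\bm{u}_j\bm{u}_j\trans$, i.e.\ $\bm{S}_n$ with its $r$ spiked eigenvalues $a_j$ replaced by $a_j/(1-\ell_j)$ — a perturbation of rank at most $r$. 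Consequently $\beta_n:=\tfrac1p\tr[\tfrac1\lambda\bm{S}_n\bm{\mathcal{I}}^{-1}+\bm{I}_p]^{-1}=\tfrac\lambda p\tr(\widetilde{\bm{S}}_n+\lambda\bm{I}_p)^{-1}=\tfrac\lambda p\tr(\bm{S}_n+\lambda\bm{I}_p)^{-1}+O(r/p)$. Because $f$ moves only the $r=o(p)$ spiked eigenvalues, the limiting ESD of $\bm{S}_n$ coincides with $H=H_f$, so the Mar{\v{c}}enko--Pastur law yields $\tfrac1p\tr(\bm{S}_n+\lambda\bm{I}_p)^{-1}\xrightarrow{a.s.}m(-\lambda;H,y)=:m$. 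Writing $\gamma:=1-y+y\lambda m$, the Mar{\v{c}}enko--Pastur equation $m=\int(s\gamma+\lambda)^{-1}dH(s)$ gives the algebraic identity
\[
T_1=T_1(\lambda;H,y)=\int\frac{s}{s\gamma+\lambda}\,dH(s)=\frac1\gamma\int\Big(1-\frac{\lambda}{s\gamma+\lambda}\Big)dH(s)=\frac{1-\lambda m}{\gamma}.
\]
Since $\tfrac1n\tr[\tfrac1\lambda\bm{S}_n\bm{\mathcal{I}}^{-1}+\bm{I}_p]^{-1}=\tfrac pn\beta_n$, substituting $\beta_n\to\lambda m$, $p/n\to y$ and $p/(2n_i)\to y_i/2$ into~\eqref{alpha} yields $\widehat{\alpha}\xrightarrow{a.s.}\tfrac12(y_1-y_2)\tfrac{1-\lambda m}{1-y+y\lambda m}=\tfrac12(y_1-y_2)T_1$, as wanted. (Alternatively, this limit follows from the Appendix derivation, where $\widehat{\alpha}$ is built as the deterministic equivalent of $\alpha=\alpha_0-\alpha_1$; the limit of $\alpha$ is obtained by quadratic-form concentration, using that $\bar{\bm{x}}_1,\bar{\bm{x}}_2$ are independent of $\bm{S}_n$, together with the deterministic equivalent of $\tfrac1p\tr[\bm{\Sigma}(\bm{S}_n+\lambda\bm{\mathcal{I}})^{-1}]$, which relies on Lemma~\ref{lem2.4}.)

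The substantive obstacle is the limit for $\widehat{\alpha}$; everything downstream is the Slutsky-and-continuity bookkeeping above, which merely records the cancellation of the $(-1)^i(y_1-y_2)T_1$ term and the ensuing collapse of $\tfrac12\sum_{i=1}^2\Phi(\cdot)$ into a single $\Phi$. Within that claim, the only genuinely nontrivial inputs are the finite-rank stability of the Stieltjes transform and the Mar{\v{c}}enko--Pastur convergence of $\tfrac1p\tr(\bm{S}_n+\lambda\bm{I}_p)^{-1}$; the commutation identity and the identity $T_1=(1-\lambda m)/\gamma$ are elementary. One technical point to watch, since Theorem~\ref{thm2} is stated uniformly in the constant $c$ of Assumptions~\ref{as32}--\ref{as33}, is that the remainder $\beta_n-\tfrac\lambda p\tr(\bm{S}_n+\lambda\bm{I}_p)^{-1}=O(r/p)$ and the conditioning of $\bm{\mathcal{I}}$ should be controlled uniformly over the admissible $\bm{\theta}$ — in particular that the $\ell_j$ with $j\in\mathbb{J}_2$ stay bounded away from $1$ — so that the convergences above hold uniformly rather than merely for each fixed $\bm{\theta}$.
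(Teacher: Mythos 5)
Your proposal is correct and follows essentially the same route as the paper: combine the deterministic equivalents of the numerator and denominator from the proof of Theorem~\ref{thm2} with the almost sure limit $\widehat{\alpha}\rightarrow\tfrac12(y_1-y_2)T_1(\lambda;H_f,y)$, so that the intercept correction exactly cancels the $(-1)^i(y_1-y_2)T_1$ bias and the two $\Phi$ terms collapse into one. The only cosmetic difference is that you extract the limit of $\widehat{\alpha}$ directly from the closed form \eqref{alpha} via a finite-rank perturbation of the Stieltjes transform and the identity $T_1=(1-\lambda m)/(1-y+y\lambda m)$, whereas the paper routes this through its Lemma~\ref{lem9.4.1} ($\widehat{m}\rightarrow m(-\lambda;H_f,y)$) and the appendix lemma $|\widehat{\alpha}-(\alpha_0-\alpha_1)|\xrightarrow{a.s.}0$; both rest on the same ingredients and yield the same cancellation.
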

	Again, since $\Phi(\cdot)$ is strictly convex on $(-\infty, 0)$, it can be concluded that the asymptotic misclassification rate of bias-corrected SEDA is smaller than that of SEDA. 
	\subsection{Selection of parameters}\label{sub3.3}
	The performance of SEDA depends critically on the choice of $\bm{\theta}$. Although methods like cross-validation are widely used for parameter selection, they can be computationally demanding when both $p$ and $n$ are large. To address this issue, we derive a direct estimator for the optimal parameter.
	
	By Corollary \ref{cor2.4}, the optimal $\bm{\theta}$ with minimum error rate is
	\begin{align*}
		\bm{\theta}_0\in \mathop{\arg\max}\limits_{\bm{\theta}}\frac{U_1^2(\lambda;H_f,G_f,y)}{U_2(\lambda;H_f,G_f,y)+\left(y_1+y_2\right)T_2(\lambda;H_f,y)}.
	\end{align*}
	Although the structure of the non-spiked part of $\Sigma$ is unobservable, direct estimates of the optimal parameters can still be obtained under additional conditions. Specifically, we consider the setup of the simple spiked model i.e., $s_{r_1+1}=s_{r_1+2}=\dots=s_{p-r_2}=\sigma^2$, noting that this condition is necessary only for estimating $\|\bm{\mu}\|$, and that it is relaxed for the other parts to $\langle \bm{\mu},\bm{v}_ {r_1+1}\rangle=\langle \bm{\mu},\bm{v}_{r_1+2}\rangle=\dots=\langle \bm{\mu},\bm{v}_{p-r_2}\rangle$. For simplicity, we treat $\sigma^2,s_j$ and $\chi_j(j)$ as known, since their consistent estimates are already given in \cite{jiang2021Generalized} and \cite{pu2024Asymptotic}. Then, we can obtain the following consistent estimates. The detailed calculation process is moved to the Appendix.
	\begin{align}
		\widehat{T}_2:=&\frac{1-\lambda\widehat{m}}{\left(1-\widehat{y}+\widehat{y}\lambda\widehat{m}\right)^3}-\frac{\lambda\widehat{m}-\lambda^2\widehat{m}'}{\left(1-\widehat{y}+\widehat{y}\lambda\widehat{m}\right)^4}\xrightarrow{a.s.}T_2(\lambda;H_f,y),\label{T2}\\
		\widehat{U}_1:=&\sum_{j\in\mathbb{J}}\beta_j\frac{\widetilde{s}_j}{\widetilde{s}_j\left(1-\widehat{y}+\widehat{y}\lambda\widehat{m}\right)+\lambda}+\left(\gamma-\sum_{j\in\mathbb{J}}\beta_j\right)\frac{1-\lambda\widehat{m}}{1-\widehat{y}+\widehat{y}\lambda\widehat{m}}\notag\\&\xrightarrow{a.s.}U_1(\lambda;H_f,G_f,y) ,\label{U1}\\
		\widehat{U}_2:=&\left(1+\widehat{y}\widehat{m}_1\right)\left\{\sum_{j\in\mathbb{J}}\beta_j\left(\frac{\widetilde{s}_j}{\widetilde{s}_j\left(1-\widehat{y}+\widehat{y}\lambda\widehat{m}\right)+\lambda}\right)^2+\left(\gamma-\sum_{j\in\mathbb{J}}\beta_j\right)\widehat{T}_2\right\}\notag\\&\xrightarrow{a.s.}U_2(\lambda;H_f,G_f,y) ,\label{U2}
	\end{align}
	where 
	\begin{gather*}
		\widehat{m}=\frac{1}{p}\tr\left[\bm{S}_n\bm{\bm{\mathcal{I}}}^{-1}+\lambda \bm{I}_p\right]^{-1},\quad\widehat{m}'=\frac{1}{p}\tr\left[\bm{S}_n\bm{\bm{\mathcal{I}}}^{-1}+\lambda \bm{I}_p\right]^{-2},\\
		\widetilde{s}_j=\left[1+\frac{\ell_j}{1-\ell_j}\chi_j(j)\right]s_j,\quad\widehat{m}_1=\frac{1}{\widehat{y}\left(1-\widehat{y}+\widehat{y}\lambda\widehat{m}\right)}-\frac{\widehat{y}\lambda\left(\widehat{m}-\lambda\widehat{m}'\right)}{\widehat{y}\left(1-\widehat{y}+\widehat{y}\lambda\widehat{m}\right)^2}-\frac{1}{\widehat{y}},\\
		\beta_j=\chi_j(j)\left\langle\bar{\bm{x}}_1-\bar{\bm{x}}_2,\bm{u}_j\right\rangle^2/s_j,\quad\gamma=\sum_{j\in\mathbb{J}}\left(1-s_j/\sigma^2\right)\beta_j+\|\bar{\bm{x}}_1-\bar{\bm{x}}_2\|^2/\sigma^2-\widehat{y}_1-\widehat{y}_2,
	\end{gather*}
	with $\widehat{y}_i=p/n_i,i=1,2$ and $\widehat{y}=p/n$. Then, the estimation of the optimal parameters is given by
	\begin{align}\label{theta}
		\widehat{\bm{\theta}}_0\in \mathop{\arg\max}\limits_{\bm{\theta}}\frac{\widehat{U}_1^2}{\widehat{U}_2+(\widehat{y}_1+\widehat{y}_2)\widehat{T}_2}.
	\end{align}
	
	We derive a theoretical estimate for the optimal parameters in simplified scenarios. However, extending this analysis to general structures presents significant theoretical challenges. Consequently, we focus our theoretical treatment on the basic case and defer the investigation of complex settings to numerical experiments. In Section \ref{sec4}, we evaluate our proposed parameter estimation method against cross-validation approaches.
	
	\section{Simulation}\label{sec4}
	In this section, we conducted several simulations to validate our results and discussed the performance of the SEDA classifier. For comparison, we also included RLDA, SRLDA, and SIDA.
	
	We independently generate the training samples $\bm{x}_{1,1},\bm{x}_{1,2},\dots,\bm{x}_{1,n_1}\sim N(\bm{\mu}_1,\bm{\Sigma})$ and $\bm{x}_{2,1},\\\bm{x}_{2,2},\dots,\bm{x}_{2,n_2}\sim N(\bm{\mu}_2,\bm{\Sigma})$. The elements of $\bm{\mu}_1$ are independent and identically distributed from $N(0, 1)$ and $\bm{\mu}_2 = \bm{0}_p$. The covariance matrices are generated as follows:
	\begin{enumerate}
		\item[] Case 1:\quad$\bm{\Sigma}=\text{diag}(0.01,0.05,1,\dots,1,10)$;
		\item[] Case 2:\quad$\bm{\Sigma}=\text{diag}(0.01,0.05,1,\dots,1,5,\dots,5,20)$;
		\item[] Case 3:\quad$\bm{\Sigma}=(\bm{\Sigma}_{ij})_{p\times p}$, $\bm{\Sigma}_{ij}=\mathbb{I}(i=j)-1/p\cdot\mathbb{I}(i\neq j)$.
	\end{enumerate}
	As a benchmark, adjust $\bm{\mu}_1=(\mu_{1,1},\mu_{1,2},\dots,\mu_{1,p})$ such that the true Bayes error rate reaches 10$\%$. 
	
	Case 1 is a simple case of homoscedasticity and independence, which we use as a benchmark. We will illustrate the limitations of SIDA and SRLDA with Case 2 and Case 3, where Case 2 does not satisfy homoscedasticity and Case 3 is a case of strong correlation.
	\subsection{Theoretical and empirical error rate of RLDA}
	We will examine the consistency between the theoretical error rates and empirical error rates of RLDA and SEDA. For convenience, we use the settings of Case 1 and set $\lambda=0.1,\ell_1=\ell_2=0.5 $ and $\ell_p=-1$. The data dimension $p$ ranges from 20 to 200 and the ratio $p/n$ is fixed as 0.5, 1, 1.5. To maintain structural consistency, fix $\mu_{1,1}=\mu_{1,2}=\mu_{1,p}=0.1$. Figure \ref{fig1} shows the box plot of the error rates of two classifiers based on 1000 repeated experiments. The vertical axis represents the percentage of empirical classification error rate, and the horizontal axis represents the dimension $p$. From Figure \ref{fig1}, we observe that the empirical error rates converge to the theoretical results, which is consistent with our conclusions.
	\begin{figure}
		\centering
		\begin{minipage}{0.30\textwidth}
			\includegraphics[width=\textwidth]{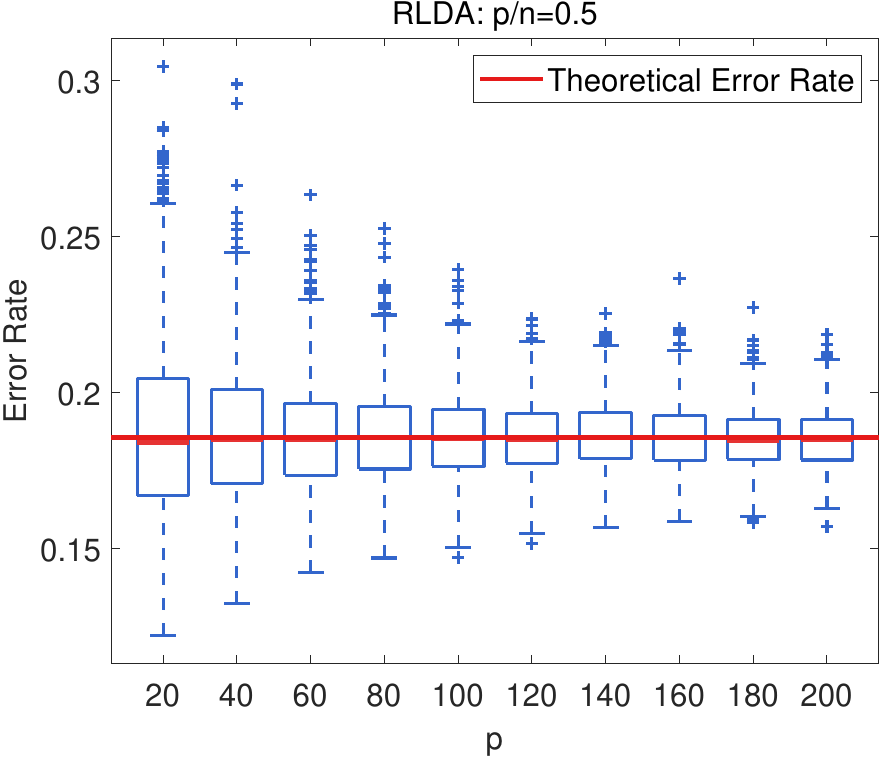}
		\end{minipage}\quad
		\begin{minipage}{0.30\textwidth}
			\includegraphics[width=\textwidth]{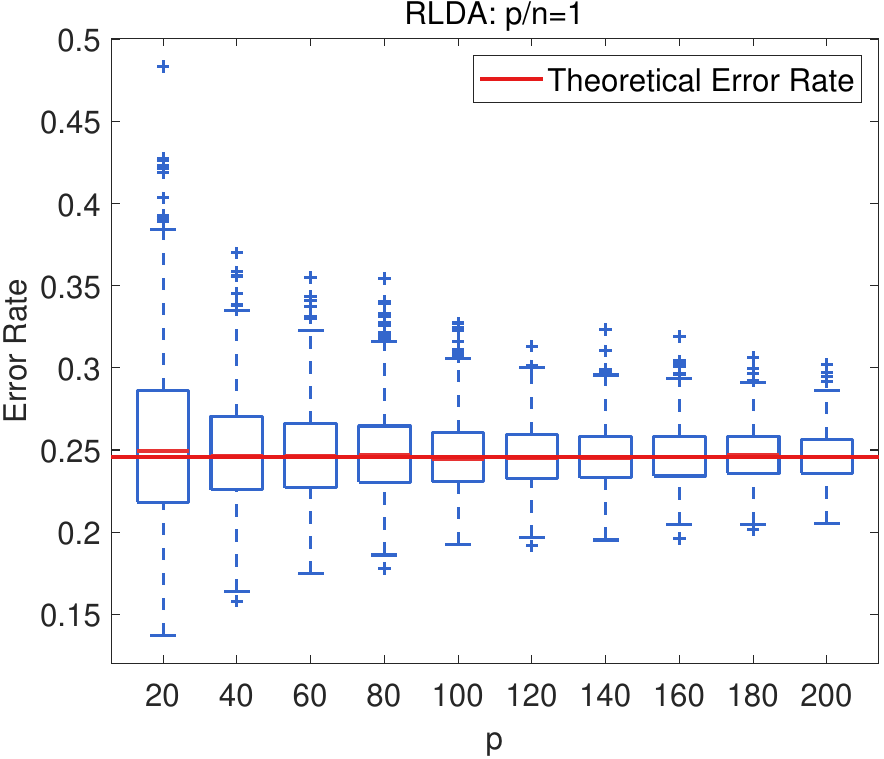}
		\end{minipage}\quad
		\begin{minipage}{0.30\textwidth}
			\includegraphics[width=\textwidth]{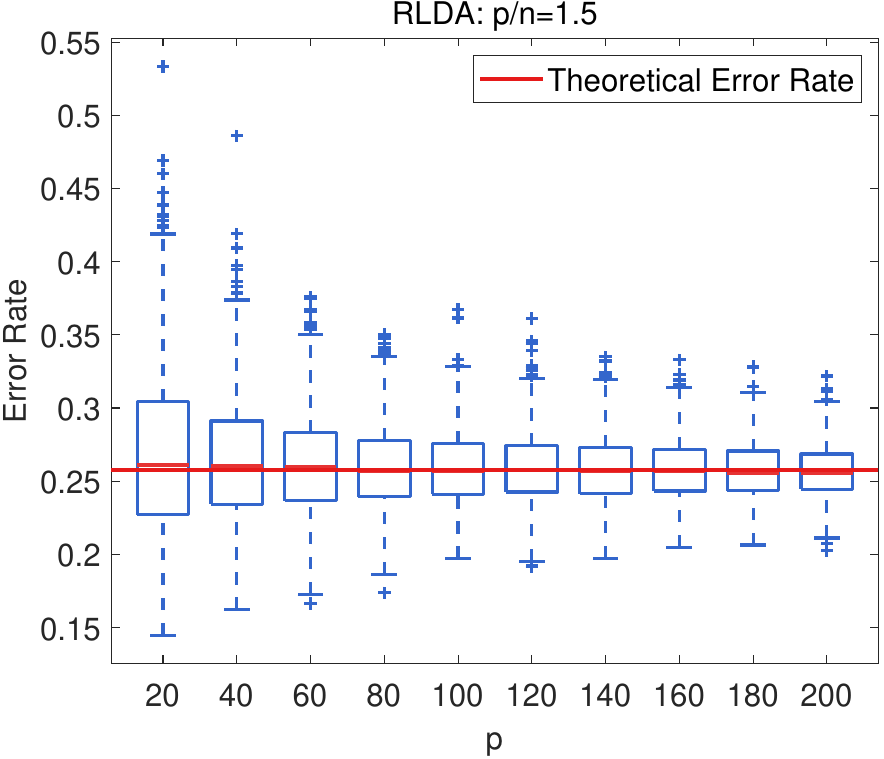}
		\end{minipage}
		\begin{minipage}{0.30\textwidth}
			\includegraphics[width=\textwidth]{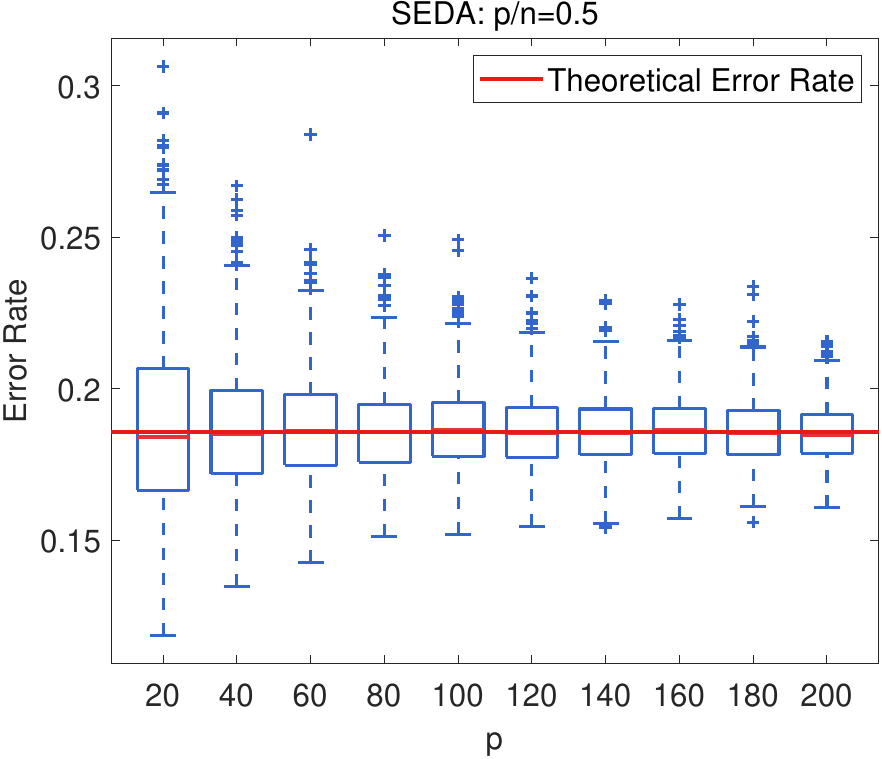}
		\end{minipage}\quad
		\begin{minipage}{0.30\textwidth}
			\includegraphics[width=\textwidth]{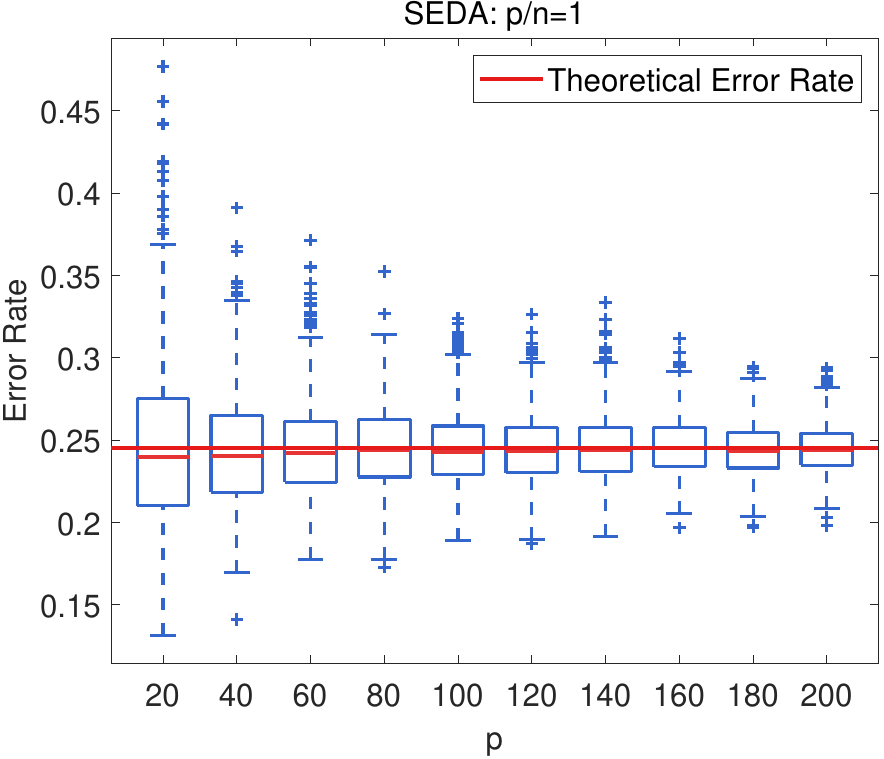}
		\end{minipage}\quad
		\begin{minipage}{0.30\textwidth}
			\includegraphics[width=\textwidth]{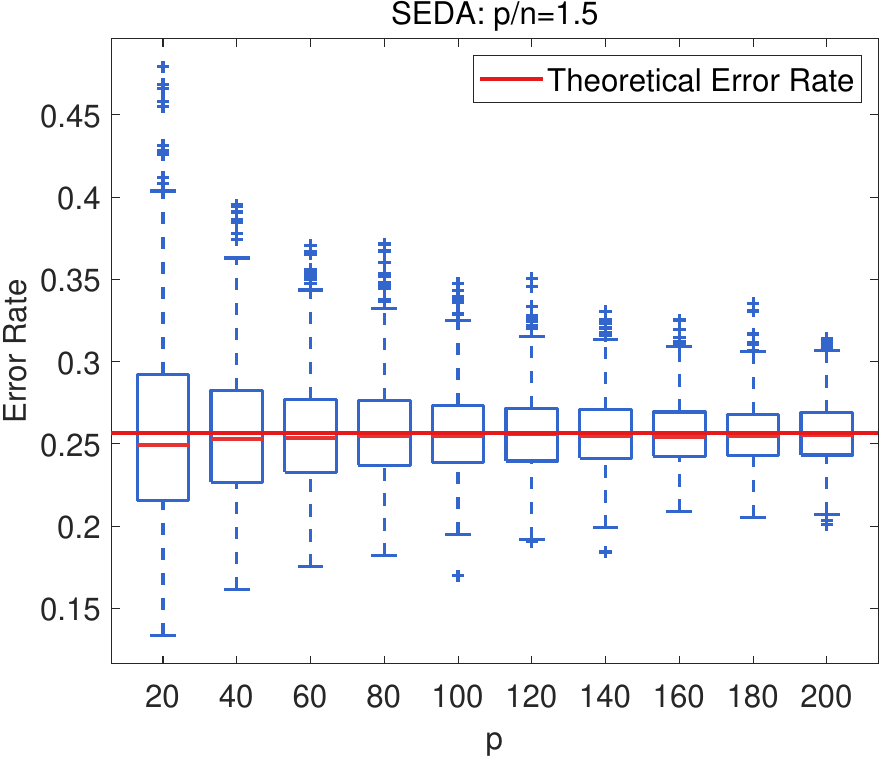}
		\end{minipage}
		\caption{Consistency of theoretical and empirical error rate.}
		\label{fig1}
	\end{figure}
	
	\subsection{Performance of Classifiers}
	In this subsection, we compare the misclassification rates of RLDA, SEDA, SIDA, SRLDA, and SEDA with optimal parameters (opSEDA) under different cases. Since \cite{li2025Highdimensional} did not provide a method for parameter selection for SIDA, we used 5-fold cross-validation for parameter tuning. For SEDA, we simultaneously compared cross-validation with our optimal parameter selection method. For each case, we fixed $p = 100$. Figure \ref{fig2} shows the empirical misclassification rate based on 1000 repetitions, where the testing sample size is set at 1000. The vertical axis represents the percentage of empirical classification error rate, and the horizontal axis represents the training sample size $n$.
	\begin{figure}
		\centering
		\begin{minipage}{0.96\textwidth}
			\includegraphics[width=\textwidth]{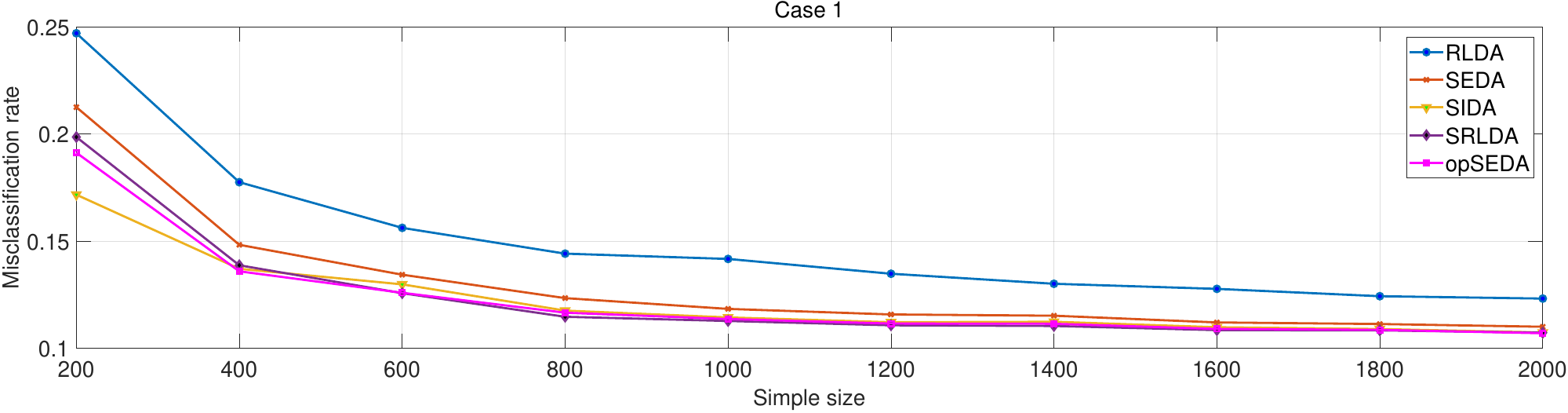}
		\end{minipage}
		\begin{minipage}{0.96\textwidth}
			\includegraphics[width=\textwidth]{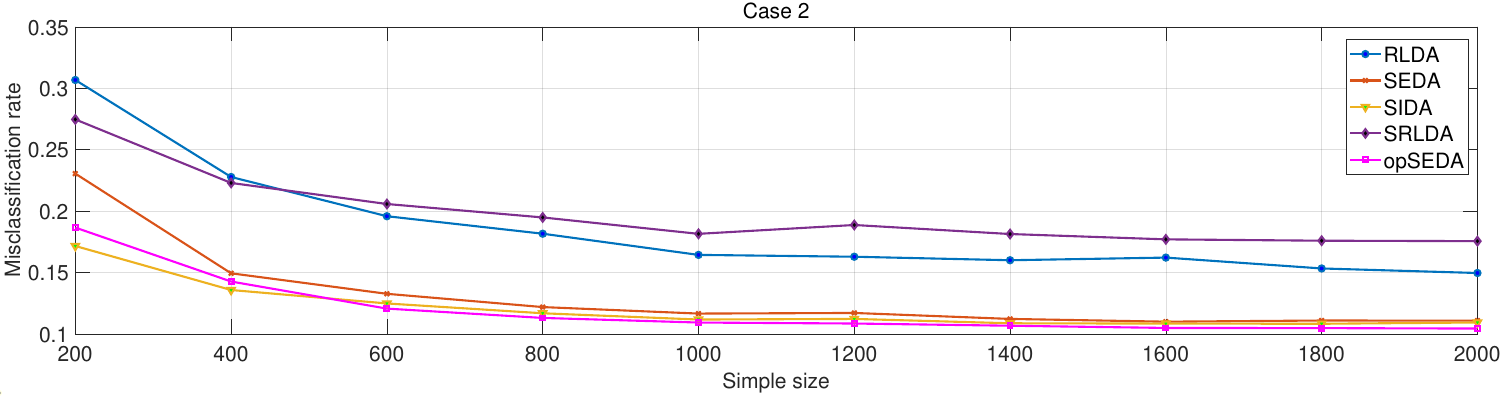}
		\end{minipage}
		\begin{minipage}{0.96\textwidth}
			\includegraphics[width=\textwidth]{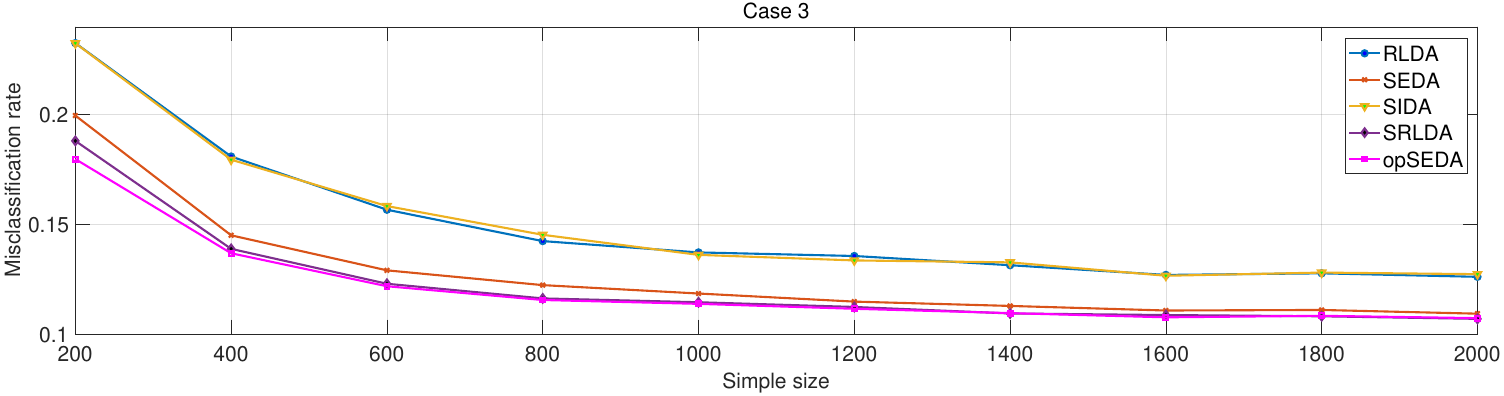}
		\end{minipage}
		\caption{Comparison of misclassification rates of RLDA, SEDA, SIDA, SRLDA, and opSEDA under different cases.}
		\label{fig2}
	\end{figure}
	
	From these simulation results, under the simple setup of Case 1, SEDA, SIDA, and SRLDA are all significantly better than the traditional RLDA since RLDA does not utilize structural information. For SRLDA, there is a noticeable decrease in accuracy when the assumptions of its model are not met in Case 2. For SIDA, its classification accuracy reaches its optimum when the dimensions of the samples are mutually independent. This is because when dealing with independent data, SIDA can directly normalize all eigenvalues of the population covariance matrix. However, under the strong correlation setting in Case 3, the performance of SIDA is as poor as that of RLDA. This aligns with our expectations; both SRLAD and SIDA have limitations in their application scenarios. In contrast, SEDA has demonstrated excellent performance under various conditions. And in all cases, our optimal parameter selection method outperforms cross-validation.
	\subsection{Bias correction}
	In this subsection, we compare the performance of SEDA and corrected SEDA under the setting of Case 1, using the classifier with the optimal intercept $\alpha_0$ as the benchmark. Set $\lambda=0.1,\ell_1=\ell_2=0.5$ and $\ell_p=-1$. Data dimension $p\in\{100,200,400\}$, sample size $n=n_1+n_2=200$, and $n_1$ ranges from 30 to 170. The vertical axis represents the percentage of empirical classification error rate, and the horizontal axis represents the training sample size $n_1$. The testing sample size is set at 100, and the simulation times are 1000. Figure \ref{fig3} shows that when the sample sizes are unequal, the corrected SEDA has a lower misclassification rate than SEDA and is close to the classifier with optimal intercept, indicating that our bias correction is effective and close to optimal. 
	\begin{figure}
		\centering
		\begin{minipage}{0.30\textwidth}
			\includegraphics[width=\textwidth]{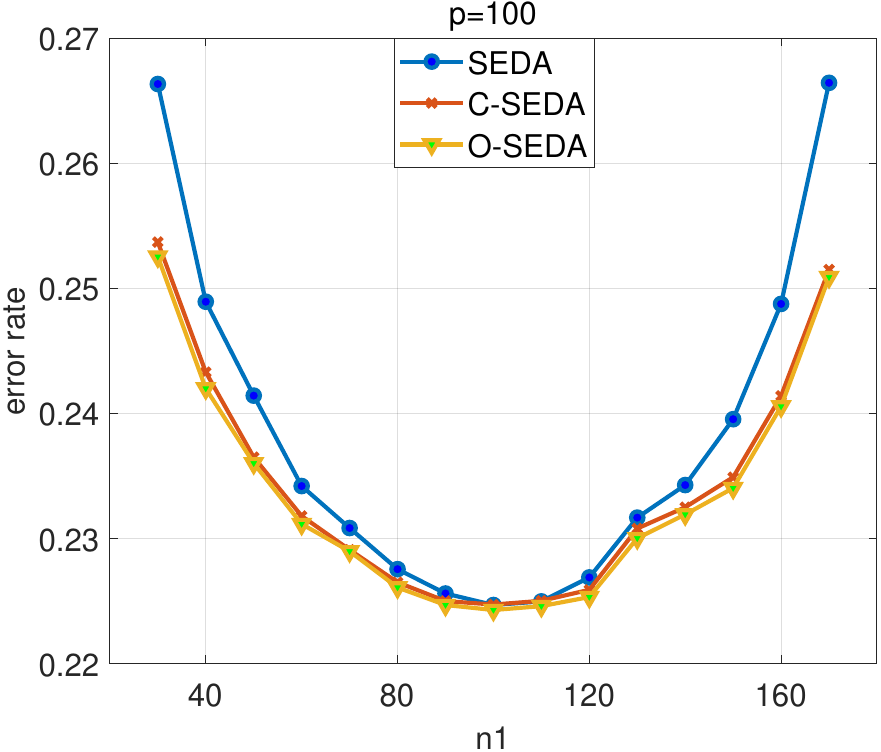}
		\end{minipage}\quad
		\begin{minipage}{0.30\textwidth}
			\includegraphics[width=\textwidth]{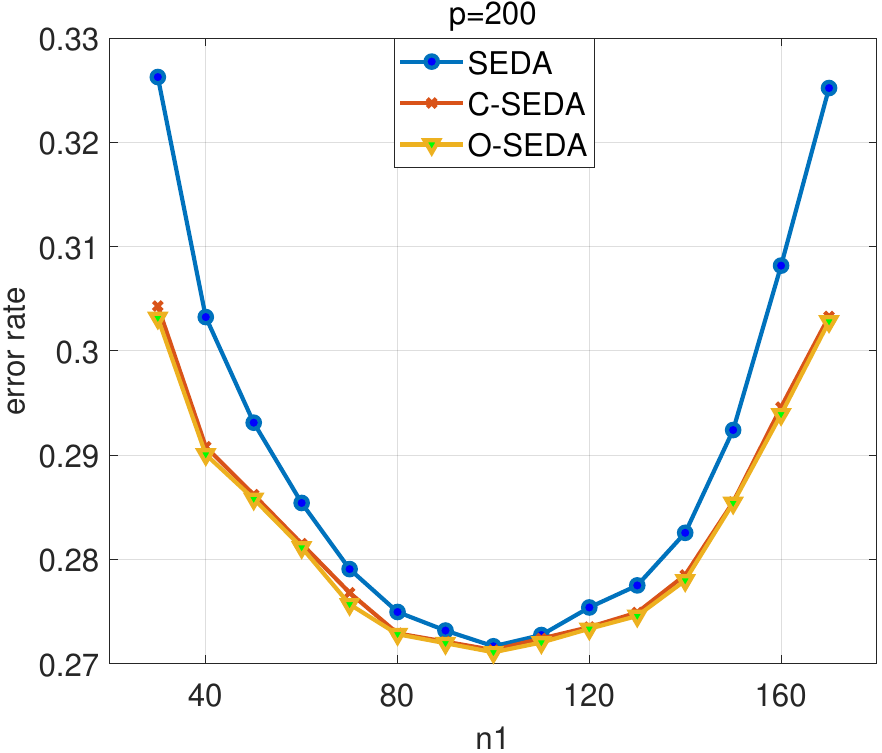}
		\end{minipage}\quad\begin{minipage}{0.30\textwidth}
			\includegraphics[width=\textwidth]{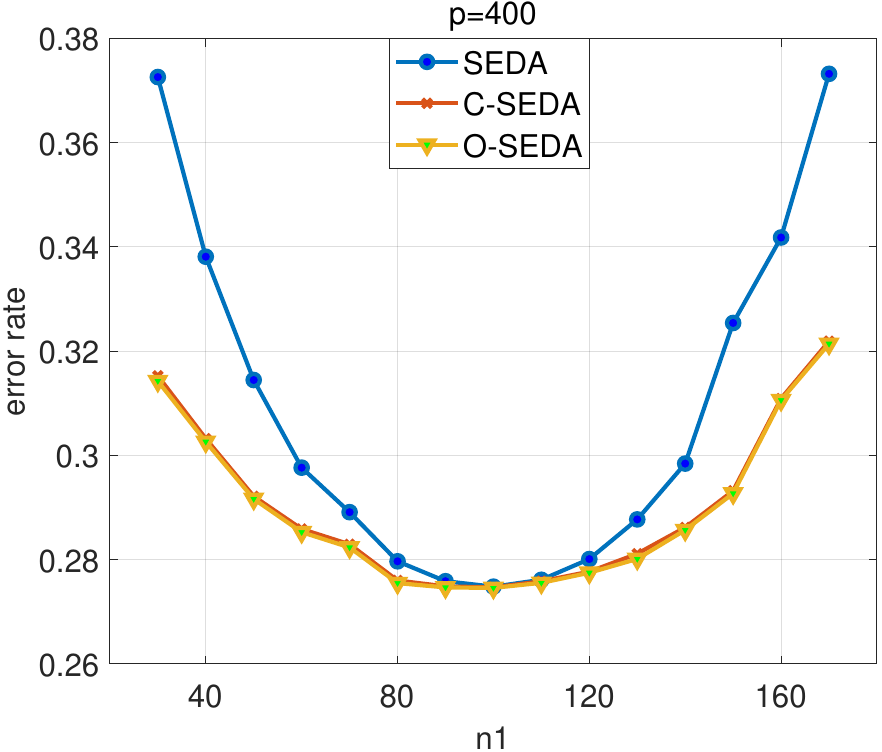}
		\end{minipage}
		\caption{Simulations for SEDA, bias corrected SEDA (C-SEDA) and SEDA with optimal intercept (O-SEDA).}
		\label{fig3}
	\end{figure}
	\section{Real data analysis}\label{sec5}
	In this section, we evaluate the performance of our proposed SEDA classifier using two benchmark datasets. The first dataset is the MNIST Handwritten Digits Database obtained from the UCI Machine Learning Repository, which comprises 70,000 grayscale images of handwritten digits (0-9) with a resolution of 28 × 28 pixels. The second dataset is the CIFAR-10 dataset, which contains 60,000 color images across 10 classes, including airplanes, cars, birds, cats, deer, dogs, frogs, horses, ships, and trucks, with a resolution of 32 × 32 pixels.
	
	In practical applications, more attention has been paid to the dimensionality reduction performance of LDA algorithms in multiple-class problems. Therefore, we first give the extension of the SEDA algorithm to multiple-class problems in the first subsection and examine its effect on real data in Subsection \ref{sub5.3}.
	\subsection{Extension to multiple-class SEDA}\label{sub5.1}
	In this subsection, we discussed the extension of SEDA to the $K$-class LDA. More specifically, we consider the following data setting. Suppose we have $K$ different classes, each with samples drawn from a $p$-dimensional multivariate normal distribution with mean vector $\bm{\mu}_k$ and covariance matrix $\bm{\Sigma}$, where $k = 1, 2, \dots, K$. We randomly select $n_k$ samples from the k-th class, that is $C_k:  \bm{x}_{k,1},\bm{x}_{k,2},\dots,\bm{x}_{k,n_k}\sim N(\bm{\mu}_k,\bm{\Sigma})$. The total sample size is $n=\sum_{k=1}^{K}n_k$.
	
	The goal of $K$-class LDA is to find a subspace with a maximum dimension of $(K-1)$ that maximizes the inter-class distance and minimizes the intra-class distance. In other words, the optimal projection matrix $\bm{W}^*$ is
	\begin{align}\label{W}
		\bm{W}^*=\mathop{\arg\max}\limits_{\bm{W}}\frac{\tr \left(\bm{W}\trans\bm{\Sigma}_b \bm{W}\right)}{\tr \left(\bm{W}\trans\bm{\Sigma} \bm{W}\right)},
	\end{align}
	in which
	\begin{align*}
		\bm{\Sigma}_b=\sum_{k=1}^{K}\left(\bm{\mu}_k-\bar{\bm{\mu}}\right)\left(\bm{\mu}_k-\bar{\bm{\mu}}\right)\trans,
	\end{align*}
	with $\bar{\bm{\mu}}=\frac{1}{K}\sum_{k=1}^{K}\bm{\mu}_k$. The optimal solution $\bm{W}^*$ of \eqref{W} consists of the eigenvectors corresponding to the $(K-1)$ largest eigenvalues of $\bm{\Sigma}^{-1}\bm{\Sigma}_b$.
	
	For SEDA, we use $\left(\bm{S}_w+\lambda \bm{\mathcal{I}}\right)$ in \eqref{L} as an estimate of $\bm{\Sigma}$, where $\bm{u}_j$ is the eigenvector corresponding to the $j$-th largest eigenvalue of the within-class scatter matrix $\bm{S}_w$,
	\begin{align*}
		\bm{S}_w=\frac{1}{n}\sum_{k=1}^{K}\sum_{j=1}^{n_k}\left(\bm{x}_{k,j}-\bar{\bm{x}}_k\right)\left(\bm{x}_{k,j}-\bar{\bm{x}}_k\right)\trans,
	\end{align*}
	where $\bar{\bm{x}}_k=\frac{1}{n_k}\sum_{j=1}^{n_k}\bm{x}_{k,j}$. And $\bm{\Sigma}_b$ is estimated by the between-class scatter matrix $\bm{S}_b$,
	\begin{align*}
		\bm{S}_b=\sum_{k=1}^{K}\frac{n_k}{n}\left(\bm{x}_k-\bar{\bm{x}}\right)\left(\bm{x}_k-\bar{\bm{x}}\right)\trans,
	\end{align*}
	where $\bar{\bm{x}}=\frac{1}{n}\sum_{k=1}^{K}\sum_{j=1}^{n_k}\bm{x}_{k,j}$. Then, the estimation of the optimal parameters is given by the joint error rate function
	\begin{align}
		\widehat{\bm{\theta}}=\mathop{\arg\max}\limits_{\bm{\theta}}\sum_{i\neq j}R_{ij}^{\text{SEDA}},
	\end{align}
	where $R_{ij}^{\text{SEDA}}$ is the estimated value of the asymptotic misclassification between $C_i$ and $C_j$. And we can use the expressions given in Subsection \ref{sub3.3} to obtain it.
	\subsection{The case of two classes}
	In this subsection, we evaluate the binary classification performance of SEDA using the MNIST dataset. We select handwritten digits 3 and 8 as the target classes and conduct experiments with different sample sizes $n \in \{300, 600, 900\}$ with the ratio $n_1/n_2=0.5$. Additionally, we evaluate the combined performance of SEDA after kernel transformation and PCA dimensionality reduction using the same experimental setup. Specifically, we set the PCA dimensionality reduction rate to 0.5; we chose a polynomial kernel function and set the degree to 2. Since SIDA can be considered a standardization method, we use the standardized data in classifiers other than SIDA. Table \ref{table1} shows the accuracy of several classifiers under different scenarios. It can be seen that SEDA is the best in terms of both direct and combined performance. In fact, \cite{li2025Spectrallycorrected}'s experiments showed that the MNIST dataset contains a large number of spiked eigenvalues, and it is difficult to effectively adjust them with just standardization. Especially under kernel transformation, the advantages of SRLDA and SEDA are more pronounced. Secondly, SRLDA, due to its overly simplistic model assumptions, has lost a significant amount of sample information, resulting in overall performance that is lower than that of SEDA, especially when $p>n$.
	\begin{table}[h]
		\makeatletter
		\renewcommand{\@makecaption}[2]{
			\vskip\abovecaptionskip
			#1: #2\par
			\vskip\belowcaptionskip
		}
		\makeatother
		\caption{Comparison of the performance of RLDA, SIDA, SRLDA, and SEDA using the MNIST dataset of handwritten digits 3 and 8 under different sample sizes and data processing methods.}
		\label{table1}
		\setlength{\extrarowheight}{3pt}
		\normalsize
		\begin{tabularx}{\textwidth}{
				>{\raggedleft\arraybackslash\hsize=0.3\hsize}X 
				>{\centering\arraybackslash\hsize=0.5\hsize}X 
				>{\centering\arraybackslash\hsize=0.5\hsize}X 
				>{\centering\arraybackslash\hsize=0.5\hsize}X 
				>{\centering\arraybackslash\hsize=0.5\hsize}X
			}
			\toprule
			& \textbf{RLDA} & \textbf{SIDA} & \textbf{SRLDA} & \textbf{SEDA} \\
			\midrule
			\addlinespace[0.1cm]
			&\multicolumn{4}{c}{ \textbf{Unprocessed} }\\
			\addlinespace[0.1cm]
			n=300 & 0.622 & 0.765 & 0.645 & 0.779 \\
			600 & 0.726 & 0.848 & 0.690 & 0.856 \\
			900 & 0.757 & 0.884 & 0.794 & 0.899 \\
			\addlinespace[0.1cm]
			&\multicolumn{4}{c}{ \textbf{PCA dimensionality reduction} }\\
			\addlinespace[0.1cm]
			300 & 0.645 & 0.767 & 0.727 & 0.780 \\
			600 & 0.738 & 0.850 & 0.822 & 0.857 \\
			900 & 0.759 & 0.885 & 0.875 & 0.899 \\
			\addlinespace[0.1cm]
			&\multicolumn{4}{c}{ \textbf{Kernel transformation} }\\
			\addlinespace[0.1cm]
			300 & 0.776 & 0.879 & 0.901 & 0.912 \\
			600 & 0.828 & 0.908 & 0.922 & 0.936 \\
			900 & 0.852 & 0.919 & 0.925 & 0.937 \\
			\bottomrule
		\end{tabularx}
	\end{table}
	\subsection{The case of multiple classes}\label{sub5.3}
	This subsection applies the SEDA method to feature selection and extraction in multi-class classification problems to test its dimensionality reduction effect. Specifically, we choose the CIFAR-10 dataset as the test dataset and select the HOG feature extraction method, with an extraction dimension of 324. The dataset consisting of 60,000 images is partitioned into 10 subsets, each containing 5,000 training images and 1,000 test images. Under the same conditions, RLDA, SIDA, SRLDA, and SEDA each reduced the data dimensions to 9. Using the data before dimensionality reduction as a benchmark, we compared the performance of the four dimensionality reduction methods under the kernel SVM classifier. Table \ref{table2} shows the dimensionality reduction effects of several algorithms across ten subsets. It can be seen that SEDA achieves significantly higher accuracy across different classes compared to other methods, while the dimensionality reduction loss is consistently controlled within 0.02.
	\begin{table}[h]
		\makeatletter
		\renewcommand{\@makecaption}[2]{
			\vskip\abovecaptionskip
			#1: #2\par
			\vskip\belowcaptionskip
		}
		\makeatother
		\caption{Comparison of the dimensionality reduction effects of RLDA, SRLDA, and SEDA using the CIFAR-10 dataset with kernel transformation.}
		\label{table2}
		\setlength{\extrarowheight}{3pt}
		\normalsize
		\begin{tabularx}{\textwidth}{
				>{\raggedleft\arraybackslash\hsize=0.3\hsize}X 
				>{\centering\arraybackslash\hsize=0.5\hsize}X 
				>{\centering\arraybackslash\hsize=0.5\hsize}X 
				>{\centering\arraybackslash\hsize=0.5\hsize}X 
				>{\centering\arraybackslash\hsize=0.5\hsize}X
			}  
			\toprule
			& \textbf{Naive} & \textbf{RLDA} & \textbf{SRLDA} & \textbf{SEDA} \\
			\midrule
			Subset 1 & 0.419 & 0.308 & 0.387 & 0.401 \\
			Subset 2 & 0.455 & 0.353 & 0.433 & 0.438 \\
			Subset 3 & 0.434 & 0.322 & 0.404 & 0.416 \\
			Subset 4 & 0.443 & 0.341 & 0.408 & 0.427 \\
			Subset 5 & 0.389 & 0.275 & 0.355 & 0.379 \\
			Subset 6 & 0.376 & 0.269 & 0.351 & 0.362 \\
			Subset 7 & 0.407 & 0.295 & 0.375 & 0.389 \\
			Subset 8 & 0.385 & 0.268 & 0.350 & 0.376 \\
			Subset 9 & 0.442 & 0.339 & 0.409 & 0.422 \\
			Subset 10 & 0.382 & 0.281 & 0.345 & 0.367 \\
			\bottomrule
		\end{tabularx}
	\end{table}
	\section{Conclusion}\label{sec6}
	This work provides a comprehensive theoretical analysis of regularized linear discriminant analysis (RLDA) and proposes an enhanced classification method based on spectral modification. A precise non-asymptotic approximation of the RLDA misclassification rate is derived, offering new insights into how the underlying data structure, particularly the eigenvectors of the population covariance matrix, affects classification performance. The analysis reveals that overemphasis on eigenvectors associated with small eigenvalues can significantly degrade accuracy, and a practical remedy is to amplify those eigenvalues.
	
	Motivated by these findings, we introduce the Spectral Enhanced Discriminant Analysis (SEDA) classifier, which improves classification by adjusting the spiked eigenvalues of the population covariance matrix. A new theoretical result concerning eigenvectors in random matrix theory is developed, leading to an asymptotic approximation of the SEDA misclassification rate. This theoretical foundation also enables the design of a bias correction scheme and a principled parameter selection strategy, making the classifier more robust and broadly applicable in high-dimensional settings.
	
	Future work will explore several promising directions, including extending the SEDA framework to nonlinear classification settings, developing distributed implementations for large-scale data environments, and applying the method to multi-class and imbalanced scenarios. These directions will further enhance the practical value and scalability of the proposed approach.
	
	\newpage
	
	\section*{Appendix}
	In the subsequent proofs, the letters $c,C> 0$ will be used interchangeably as constants independent of the key equation parameters and may be reused. Furthermore, the variable $\varepsilon>0$ will represent any small positive number, and the variable $D > 0$ will represent any large positive number. The variables $c, C$ may depend on $\varepsilon$ and $D$.
	\subsection*{Proof of Theorem \ref{thm1}}
	Write
	\begin{align*}
		& A_{1 n}=\left(2 \bm{\mu}_1-\bar{\bm{x}}_1-\bar{\bm{x}}_2\right)\trans \left(\bm{S}_n+\lambda \bm{I}_p\right)^{-1}\left(\bar{\bm{x}}_1-\bar{\bm{x}}_2\right), \\
		& A_{2 n}=-\left(2 \bm{\mu}_2-\bar{\bm{x}}_1+\bar{\bm{x}}_2\right)\trans \left(\bm{S}_n+\lambda \bm{I}_p\right)^{-1}\left(\bar{\bm{x}}_1-\bar{\bm{x}}_2\right), \\ 
		& A_{3 n}=\left(\bar{\bm{x}}_1-\bar{\bm{x}}_2\right)\trans \left(\bm{S}_n+\lambda \bm{I}_p\right)^{-1} \bm{\Sigma}\left(\bm{S}_n+\lambda \bm{I}_p\right)^{-1}\left(\bar{\bm{x}}_1-\bar{\bm{x}}_2\right) .
	\end{align*}
	Since $\bar{\bm{x}}_1 \stackrel{d}{=}\frac{1}{\sqrt{n_1}} \bm{\Sigma}^{\frac{1}{2}} \bm{w}_1+\bm{\mu}_1, \bar{\bm{x}}_2 \stackrel{d}{=} \frac{1}{\sqrt{n_2}} \bm{\Sigma}^{\frac{1}{2}} \bm{w}_2+\bm{\mu}_2$,
	where $\bm{w}_1, \bm{w}_2 \sim N\left(0, \bm{I}_p\right)$ and $\bm{w}_1, \bm{w}_2, \bm{S}_n$ are independent, we have
	\begin{align*}
		A_{1 n}& \stackrel{d}{=} \bm{\mu}\trans \bm{B}_n(\lambda) \bm{\mu}-\frac{2}{\sqrt{n_2}} \bm{\mu}\trans \bm{B}_n(\lambda) \bm{w}_2+\frac{1}{n_2} \bm{w}_2\trans \bm{B}_n(\lambda) \bm{w}_2-\frac{1}{n_1} \bm{w}_1\trans \bm{B}_n(\lambda) \bm{w}_1, \\
		A_{2 n}& \stackrel{d}{=} \bm{\mu}\trans \bm{B}_n(\lambda) \bm{\mu}+\frac{2}{\sqrt{n_1}} \bm{\mu}\trans \bm{B}_n(\lambda) \bm{w}_1+\frac{1}{n_1} \bm{w}_1\trans \bm{B}_n(x) \bm{w}_1-\frac{1}{n_2} \bm{w}_2\trans \bm{B}_n(\lambda) \bm{w}_2, \\
		A_{3 n} &\stackrel{d}{=}\left(\bm{\mu}+\sqrt{\frac{1}{n_1}} \bm{w}_1-\sqrt{\frac{1}{n_2}} \bm{w}_2\right)\trans \bm{B}_n^2(\lambda)\left(\bm{\mu}+\sqrt{\frac{1}{n_1}} \bm{w}_1-\sqrt{\frac{1}{n_2}} \bm{w}_2\right) \\
		&\stackrel{d}{=} \bm{\mu}\trans \bm{B}_n^2(\lambda) \bm{\mu}+2\sqrt{\frac{1}{n_1}+\frac{1}{n_2}} \bm{\mu}\trans \bm{B}_n^2(\lambda) \bm{w}_1+\left(\frac{1}{n_1}+\frac{1}{n_2}\right) \bm{w}_1\trans \bm{B}_n^2(\lambda) \bm{w}_1 .
	\end{align*}
	where $\bm{B}_n(\lambda)=\bm{\Sigma}^{\frac{1}{2}}\left(\bm{S}_n+\lambda \bm{I}_p\right)^{-1}\bm{\Sigma}^{\frac{1}{2}}$.
	
	Next, we use the following lemmas to construct the desired concentration inequality.
	\begin{lemma}\label{lem2}
		Under the conditions of Theorem \ref{thm1}, assuming $\bm{w}\sim N\left(0, \bm{I}_p\right)$ and $\bm{w}$ is independent with $\bm{S}_n$, we have
		\begin{align*}
			P\left( \left| \frac{1}{\sqrt{n_j}} \bm{\mu}\trans \bm{B}_n^k(\lambda) \bm{w} \right|  \geq  n^{-\frac{1-\varepsilon}{2}}\right) \leq C e^{-cn^\varepsilon},\quad j,k=1,2.
		\end{align*}
	\end{lemma}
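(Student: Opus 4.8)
The plan is to condition on $\bm{S}_n$ and treat $\frac{1}{\sqrt{n_j}}\bm{\mu}\trans \bm{B}_n^k(\lambda)\bm{w}$ as a linear functional of the Gaussian vector $\bm{w}\sim N(0,\bm{I}_p)$. Conditionally on $\bm{S}_n$, this quantity is a centered Gaussian random variable with variance $\sigma_{\bm{S}_n}^2 = \frac{1}{n_j}\|\bm{B}_n^k(\lambda)\bm{\mu}\|^2 = \frac{1}{n_j}\bm{\mu}\trans \bm{B}_n^{2k}(\lambda)\bm{\mu}$. By the standard Gaussian tail bound, $P\bigl(|\frac{1}{\sqrt{n_j}}\bm{\mu}\trans \bm{B}_n^k(\lambda)\bm{w}| \geq t \mid \bm{S}_n\bigr)\leq 2\exp(-t^2/(2\sigma_{\bm{S}_n}^2))$. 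So it suffices to control $\sigma_{\bm{S}_n}^2$ uniformly (on a high-probability event): we need $\sigma_{\bm{S}_n}^2 \leq C/n_j$ with probability at least $1 - Ce^{-cn^\varepsilon}$, since then taking $t = n^{-(1-\varepsilon)/2}$ gives $t^2/(2\sigma_{\bm{S}_n}^2)\geq c' n\cdot n^{-(1-\varepsilon)} = c' n^{\varepsilon}$, which yields the claimed bound after combining the two events.

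The key step is therefore the deterministic operator-norm control of $\bm{B}_n(\lambda) = \bm{\Sigma}^{1/2}(\bm{S}_n + \lambda\bm{I}_p)^{-1}\bm{\Sigma}^{1/2}$. Since $\bm{S}_n\succeq 0$, we have $(\bm{S}_n + \lambda\bm{I}_p)^{-1}\preceq \lambda^{-1}\bm{I}_p$, hence $\|\bm{B}_n(\lambda)\|\leq \lambda^{-1}\|\bm{\Sigma}\|\leq M^2$ using $\lambda\geq 1/M$ and Assumption \ref{as11}; this bound is in fact deterministic, requiring no high-probability event at all. Consequently $\|\bm{B}_n^{2k}(\lambda)\|\leq M^{4k}$ and $\bm{\mu}\trans \bm{B}_n^{2k}(\lambda)\bm{\mu}\leq M^{4k}\|\bm{\mu}\|^2\leq M^{4k+2}$ by Assumption \ref{as12}. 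Thus $\sigma_{\bm{S}_n}^2 \leq M^{4k+2}/n_j \leq M^{4k+3}/n$ (using $n_j\geq n/(M+1)$ or similar from Assumption \ref{as13}), so the required uniform variance bound holds with probability one, and the high-probability event is only needed trivially.

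With $\sigma_{\bm{S}_n}^2\leq C_0/n$ holding surely, set $t = n^{-(1-\varepsilon)/2}$; then $\frac{t^2}{2\sigma_{\bm{S}_n}^2}\geq \frac{n^{-(1-\varepsilon)}}{2C_0/n} = \frac{n^{\varepsilon}}{2C_0}$, and integrating the conditional Gaussian tail bound over $\bm{S}_n$ gives
\begin{align*}
	P\left(\left|\frac{1}{\sqrt{n_j}}\bm{\mu}\trans \bm{B}_n^k(\lambda)\bm{w}\right|\geq n^{-(1-\varepsilon)/2}\right)\leq 2\exp\left(-\frac{n^{\varepsilon}}{2C_0}\right) \leq Ce^{-cn^{\varepsilon}},
\end{align*}
with $c = 1/(2C_0)$ and $C = 2$, uniformly over $1/M\leq \lambda\leq M$ and $j,k\in\{1,2\}$. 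The only subtlety — and the step I would be most careful about — is making sure all constants depend solely on $M$ (not on $n,p,\lambda$, or the unknown $\bm{\Sigma},\bm{\mu}$), which is guaranteed here precisely because the operator-norm bound $\|\bm{B}_n(\lambda)\|\leq M^2$ is deterministic and uniform; no genuine random matrix concentration is needed for this particular lemma, unlike the harder quadratic-form estimates ($\bm{w}\trans \bm{B}_n^k\bm{w}$) that will appear elsewhere in the proof of Theorem \ref{thm1}.
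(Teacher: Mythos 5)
Your proof is correct and follows essentially the same route as the paper: both arguments reduce to the deterministic bound $\|\bm{B}_n^k(\lambda)\|\leq \lambda^{-k}\|\bm{\Sigma}\|^k\leq C(M)$ and then a Gaussian tail estimate for the linear functional $\bm{w}\mapsto \bm{\mu}\trans\bm{B}_n^k(\lambda)\bm{w}$ conditionally on $\bm{S}_n$ (the paper phrases this as the Gaussian concentration inequality for Lipschitz functions, which for a linear map is exactly your explicit variance computation).
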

	\begin{proof}
		For any $\bm{A} \in \mathbb{R}^{p \times p}$ such that $\|\bm{A}\| \leq C$, $f: \mathbb{R}^p \rightarrow \mathbb{R}, \bm{w} \mapsto \bm{\mu}\trans \bm{A} \bm{w}$ is $C $-Lipschitz, because for all $\bm{x},\bm{y} \in \mathbb{R}^p$,
		$$
		\|f(\bm{x})-f(\bm{y})\|=\|\bm{\mu}\trans \bm{A} \bm{x}-\bm{\mu}\trans \bm{A} \bm{y}\| \leq\|\bm{\mu}\| \|\bm{A}\| \|\bm{x}-\bm{y}\| \leq C \|\bm{x}-\bm{y}\| .
		$$
		Then, the maps $\bm{w} \mapsto \bm{\mu}\trans \bm{B}_n^k(\lambda) \bm{w}$ are Lipshitz with parameter $C$. By the \emph{Gaussian concentration inequality} for Lipschitz functions, the lemma is proven.
	\end{proof}
	\begin{lemma}\label{lem3}
		Under the conditions of Theorem \ref{thm1}, we have
		\begin{align*}
			P\left( \left| \bm{\mu}\trans \bm{B}_n(\lambda) \bm{\mu}-\bm{\mu}\trans \bm{\mathcal{B}} \bm{\mu} \right|  \geq  n^{-\frac{1-\varepsilon}{2}}\right) \leq C n^{-D},
		\end{align*}
		and
		\begin{align}
			P\left( \left| \bm{\mu}\trans \bm{B}_n^2(\lambda) \bm{\mu}-\left(1-y_n m_{n, 1}(-\lambda)\right) \bm{\mu}\trans \bm{\mathcal{B}}^{2} \bm{\mu} \right|  \geq  n^{-\frac{1-\varepsilon}{2}}\right) \leq C n^{-D},
		\end{align}
		where $\bm{\mathcal{B}}=\bm{\Sigma}\left(\lambda \bm{I}_p+\left(1-y_n+y_n\lambda m(-\lambda;H_n,y_n)\right) \bm{\Sigma}\right)^{-1}$.
	\end{lemma}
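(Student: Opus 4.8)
The plan is to prove each inequality by decomposing it into a \emph{concentration} step (the random bilinear form is close to its mean) and a \emph{deterministic-equivalent} step (the mean is close to the stated limit). Throughout one exploits the a priori bound $\|(\bm{S}_n+\lambda\bm{I}_p)^{-1}\|\le\lambda^{-1}\le M$, which holds deterministically because $\bm{S}_n$ is positive semidefinite and $\lambda\ge1/M$; hence $\|\bm{B}_n(\lambda)\|\le M^2$ and $\|\bm{B}_n^2(\lambda)\|\le M^4$ with no exceptional event. It is also convenient to replace the recentred pooled sample by $n-2$ i.i.d.\ whitened Gaussian vectors (an exact-in-distribution orthogonal-transformation reduction that keeps $\bar{\bm{x}}_1,\bar{\bm{x}}_2$ independent of $\bm{S}_n$ and alters the limiting formulas only at order $n^{-1}$), so that $\bm{S}_n\stackrel{d}{=}(n-2)^{-1}\bm{\Sigma}^{1/2}\bm{X}\bm{X}\trans\bm{\Sigma}^{1/2}$ with $\bm{X}$ having i.i.d.\ $N(0,1)$ entries.

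For the concentration step, regard $\bm{\mu}\trans\bm{B}_n^k(\lambda)\bm{\mu}$, $k=1,2$, as a function of $\bm{X}$. On the high-probability event $\{\|\bm{S}_n\|\le C\}$, the resolvent derivative identity $\partial_{X_{kl}}(\bm{S}_n+\lambda\bm{I}_p)^{-1}=-(\bm{S}_n+\lambda\bm{I}_p)^{-1}(\partial_{X_{kl}}\bm{S}_n)(\bm{S}_n+\lambda\bm{I}_p)^{-1}$, together with a Cauchy--Schwarz bound for $\sum_{k,l}|\partial_{X_{kl}}(\bm{\mu}\trans\bm{B}_n^k(\lambda)\bm{\mu})|^2$, shows this map is Lipschitz with constant of order $n^{-1/2}$ — the $n^{-1}$ from the normalisation of $\bm{S}_n$ dominating the $\|\bm{X}\|\asymp\sqrt{n}$ produced by the differentiated factor. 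Extending to a globally Lipschitz function and applying the Gaussian concentration inequality then gives, for any $\varepsilon,D>0$,
\[
\Pr\!\left(\left|\bm{\mu}\trans\bm{B}_n^k(\lambda)\bm{\mu}-\E\,\bm{\mu}\trans\bm{B}_n^k(\lambda)\bm{\mu}\right|\ge n^{-(1-\varepsilon)/2}\right)\le Ce^{-cn^{\varepsilon}}+e^{-cn}\le Cn^{-D}.
\]

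It then remains to match the two expectations with the stated limits. For $\E\,\bm{\mu}\trans\bm{B}_n(\lambda)\bm{\mu}$ this is the classical first-order deterministic equivalent for a sample-covariance resolvent: with $\bar{\bm{R}}=(\lambda\bm{I}_p+e_n\bm{\Sigma})^{-1}$ and $e_n=1-y_n+y_n\lambda m(-\lambda;H_n,y_n)$ one has $\bm{\Sigma}^{1/2}\bar{\bm{R}}\bm{\Sigma}^{1/2}=\bm{\mathcal{B}}$ by the definition of $m$, while the self-consistent Mar{\v{c}}enko--Pastur equation for $\E(\bm{S}_n+\lambda\bm{I}_p)^{-1}$ (obtained by Gaussian integration by parts and concentration of $\tfrac1p\tr[\bm{\Sigma}(\bm{S}_n+\lambda\bm{I}_p)^{-1}]$, or quoted from the anisotropic local law) yields $|\bm{a}\trans(\E(\bm{S}_n+\lambda\bm{I}_p)^{-1}-\bar{\bm{R}})\bm{a}|\le C\|\bm{a}\|^2n^{-1}$ with $\bm{a}=\bm{\mu}_1-\bm{\mu}_2=\bm{\Sigma}^{1/2}\bm{\mu}$; since $\|\bm{a}\|\le M^{3/2}$, this is $\ll n^{-(1-\varepsilon)/2}$ and the first inequality follows. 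For $\E\,\bm{\mu}\trans\bm{B}_n^2(\lambda)\bm{\mu}$ the middle $\bm{\Sigma}$ in $\bm{B}_n^2(\lambda)=\bm{\Sigma}^{1/2}(\bm{S}_n+\lambda\bm{I}_p)^{-1}\bm{\Sigma}(\bm{S}_n+\lambda\bm{I}_p)^{-1}\bm{\Sigma}^{1/2}$ blocks a direct resolvent identity, so I would introduce the one-parameter family $\bm{R}_t:=(\bm{S}_n+\lambda\bm{I}_p+t\bm{\Sigma})^{-1}$ and use $\bm{B}_n^2(\lambda)=-\left.\partial_t\,\bm{\Sigma}^{1/2}\bm{R}_t\bm{\Sigma}^{1/2}\right|_{t=0}$. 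For each complex $t$ near $0$, conjugation by $(\lambda\bm{I}_p+t\bm{\Sigma})^{-1/2}$ turns $\bm{S}_n+\lambda\bm{I}_p+t\bm{\Sigma}$ into a sample covariance matrix with population covariance $(\lambda\bm{I}_p+t\bm{\Sigma})^{-1/2}\bm{\Sigma}(\lambda\bm{I}_p+t\bm{\Sigma})^{-1/2}$ at spectral parameter $-1$, so the first-order equivalent applies uniformly on a small disc and gives $\bm{\mu}\trans\bm{\Sigma}^{1/2}\bm{R}_t\bm{\Sigma}^{1/2}\bm{\mu}\approx\bm{\mu}\trans\bm{\Sigma}(\lambda\bm{I}_p+g_n(t)\bm{\Sigma})^{-1}\bm{\mu}=:\phi_n(t)$, where $g_n$ is the analytic solution of $g_n(t)-t=1-y_n+y_n\int\tfrac{\lambda+ts}{\lambda+g_n(t)s}\,dH_n(s)$ with $g_n(0)=e_n$. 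Cauchy's integral formula transfers the uniform $O(n^{-1/2})$ bound on the disc to the $t$-derivative at $0$, so $\bm{\mu}\trans\bm{B}_n^2(\lambda)\bm{\mu}\approx-\phi_n'(0)=g_n'(0)\,\bm{\mu}\trans\bm{\mathcal{B}}^2\bm{\mu}$; differentiating the self-consistent equation at $t=0$ and simplifying via the Mar{\v{c}}enko--Pastur identity $y_n\int\tfrac{s}{\lambda+e_ns}\,dH_n(s)=(1-e_n)/e_n$ identifies $g_n'(0)$ with the scalar factor multiplying $\bm{\mu}\trans\bm{\mathcal{B}}^2\bm{\mu}$ in the statement.

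The delicate point is this second-order term. One must establish the first-order deterministic equivalent \emph{uniformly} over a complex neighbourhood of $t=0$ with a rate robust enough to survive the Cauchy-integral differentiation, and then carry out the purely algebraic — but easily mis-signed — reduction of $g_n'(0)$ to the closed form in the statement using the Mar{\v{c}}enko--Pastur equation (the conditions of Theorem~\ref{thm1}, in particular $|1-p/n|\ge1/M$, are what keep $e_n$ and $g_n'(0)$ bounded and bounded away from zero). An alternative that avoids the differentiation is to invoke a second-order deterministic equivalent for $\E[(\bm{S}_n+\lambda\bm{I}_p)^{-1}\bm{\Sigma}(\bm{S}_n+\lambda\bm{I}_p)^{-1}]$ directly; in either route the extra fluctuation-averaging needed to pin down the correction term is the crux, whereas the concentration step and the first-order equivalent are routine once the a priori resolvent bound is in hand.
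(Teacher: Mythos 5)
Your plan is essentially the paper's: the paper proves this lemma by citing Theorem 5 and Eq.~(A.21) of \cite{hastie2022Surprises}, whose proofs consist of exactly the Gaussian--Lipschitz concentration step and first-order deterministic equivalent you describe, and the perturbed family $\bm{R}_t=(\bm{S}_n+\lambda\bm{I}_p+t\bm{\Sigma})^{-1}$ you introduce for the second-order functional is precisely the paper's $\mathcal{D}(\eta,\lambda)$ with $t=\lambda\eta$ (same conjugation by $(\bm{I}_p+\eta\bm{\Sigma})^{-1/2}$, same reduction to a modified population covariance). The one step you rightly flag as delicate --- transferring the uniform-in-$t$ equivalent to the $t$-derivative at $0$ without losing the $n^{-(1-\varepsilon)/2}$ rate --- is exactly the step the paper does not redo but inherits wholesale from the cited reference.
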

	\begin{proof}
		The second inequality was proved in Theorem 5 of \cite{hastie2022Surprises}. We consider the first inequality. Since $\bm{S}_n$ has the same distribution as that of $\bm{S}=\frac{1}{n}\bm{\Sigma}^{\frac{1}{2}}\bm{X}\bm{X}\trans\bm{\Sigma}^{\frac{1}{2}}$, where $\bm{X}=[\bm{X}_1,\dots,\bm{X}_n]$ denotes a $p\times n$ random matrix such that the random vectors $\bm{X}_i$ have the standard multivariate Gaussian distribution. It is convenient to rewrite $\bm{S}_n$ as $\bm{S}$, and introduce the notation $\bar{\bm{S}}=\frac{1}{n}\bm{X}\bm{X}\trans$. For $\Re(\eta)>-1/M$ define
		\begin{align*}
			\mathcal{D}(\eta,\lambda)=&\lambda\bm{\mu}\trans\bm{\Sigma}^{\frac{1}{2}}\left(\bm{S}_n+\lambda \bm{I}_p+\lambda\eta\bm{\Sigma}\right)^{-1}\bm{\Sigma}^{\frac{1}{2}}\bm{\mu}
			=\lambda\bm{\mu}_\eta\trans\left(\bm{\Sigma}_\eta^{\frac{1}{2}}\bar{\bm{S}}\bm{\Sigma}_\eta^{\frac{1}{2}}+\lambda \bm{I}_p\right)^{-1}\bm{\mu}_\eta,
		\end{align*}
		where
		\begin{align*}
			\bm{\Sigma}_\eta=\bm{\Sigma}\left(\bm{I}_p+\eta\bm{\Sigma}\right)^{-1},\quad\bm{\mu}_\eta=\left(\bm{I}_p+\eta\bm{\Sigma}\right)^{-\frac{1}{2}}\bm{\Sigma}^{\frac{1}{2}}\bm{\mu}.
		\end{align*}
		By Eq. (A.21) in \cite{hastie2022Surprises}, we obtain, with
		probability at least $1-Cn^{-D}$
		\begin{align*}
			\left|\mathcal{D}(\lambda,\eta)-\bm{\mu}_\eta\trans\left(\bm{I}_p+r_n\left(-\lambda,\eta\right)\bm{\Sigma}_\eta\right)^{-1}\bm{\mu}_\eta\right|\leq\frac{1}{n^{(1-\epsilon)/2}}.
		\end{align*}
		Here, $r_n=r_n(-\lambda,\eta)$ is defined as the unique solution of
		\begin{align*}
			\frac{1}{r_n}=\lambda+\frac{y_n}{p}\sum_{i=1}^{p}\frac{s_i(\eta)}{1+s_i(\eta)r_n},
		\end{align*}
		where $s_1(\eta)\geq s_2(\eta)\geq\dots\geq s_p(\eta)$ are the eigenvalues of $\bm{\Sigma}_\eta$. By taking $\eta=0$, the proof is completed.
	\end{proof}
	\begin{lemma}\label{lem4}
		Under the conditions of Theorem \ref{thm1}, assuming $\bm{w}\sim N\left(0, \bm{I}_p\right)$ and $\bm{w}$ is independent with $\bm{S}_n$, we have
		\begin{equation}
			\begin{aligned}
				P\left( \left| \frac{1}{n_j} \bm{w}\trans \bm{B}_n^k(\lambda) \bm{w}-y_{jn}T_k(\lambda;H_n,y_n) \right|  \geq  n^{-\frac{1-\varepsilon}{2}}\right) \leq C n^{-D},\quad j,k=1,2.
			\end{aligned}
		\end{equation}
	\end{lemma}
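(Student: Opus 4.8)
The plan is to reduce the statement to two ingredients and then combine them. Since $y_{jn}=p/n_j\le M$ by Assumption \ref{as13}, we may write $\frac{1}{n_j}\bm{w}\trans\bm{B}_n^k(\lambda)\bm{w}=y_{jn}\cdot\frac{1}{p}\bm{w}\trans\bm{B}_n^k(\lambda)\bm{w}$, so it suffices to show that $\frac{1}{p}\bm{w}\trans\bm{B}_n^k(\lambda)\bm{w}$ is within $n^{-(1-\varepsilon)/2}$ of $T_k(\lambda;H_n,y_n)$ with probability at least $1-Cn^{-D}$. I would establish this via (a) a conditional (on $\bm{S}_n$) concentration of the Gaussian quadratic form around its mean $\frac{1}{p}\tr\bm{B}_n^k(\lambda)$, and (b) a deterministic-equivalent identity $\frac{1}{p}\tr\bm{B}_n^k(\lambda)\approx T_k(\lambda;H_n,y_n)$ with polynomial error.

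For (a), the key point is that $\|\bm{B}_n^k(\lambda)\|$ is bounded \emph{deterministically}: since $\bm{S}_n\succeq\bm 0$ and $\lambda\ge1/M$ one has $\|(\bm{S}_n+\lambda\bm{I}_p)^{-1}\|\le1/\lambda\le M$, hence $\|\bm{B}_n(\lambda)\|\le\|\bm{\Sigma}\|\,\|(\bm{S}_n+\lambda\bm{I}_p)^{-1}\|\le M^2$, so $\|\bm{B}_n^k(\lambda)\|\le M^{2k}$ and $\|\bm{B}_n^k(\lambda)\|_F^2\le pM^{4k}$. Conditioning on $\bm{S}_n$ and applying the Hanson--Wright inequality to the Gaussian quadratic form, with $t=n^{-(1-\varepsilon)/2}$ and $p\asymp n$, gives $P\bigl(\bigl|\tfrac{1}{p}\bm{w}\trans\bm{B}_n^k(\lambda)\bm{w}-\tfrac{1}{p}\tr\bm{B}_n^k(\lambda)\bigr|\ge t\,\big|\,\bm{S}_n\bigr)\le2\exp\bigl(-c\min(t^2p/M^{4k},\,tp/M^{2k})\bigr)\le2\exp(-cn^{\varepsilon})$. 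Because the norm bound holds for every realization of $\bm{S}_n$, this bound is uniform in $\bm{S}_n$ and therefore also holds unconditionally.

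For (b), I would reuse the deterministic-equivalent machinery of Lemma \ref{lem3}; its proof (through Eq.~(A.21) and Theorem~5 of \cite{hastie2022Surprises}) only uses that the test vector has bounded norm, so it applies verbatim to any deterministic unit vector $\bm{\xi}$ in place of $\bm{\mu}$: with probability at least $1-Cn^{-D}$, $|\bm{\xi}\trans\bm{B}_n(\lambda)\bm{\xi}-\bm{\xi}\trans\bm{\mathcal{B}}\bm{\xi}|\le n^{-(1-\varepsilon)/2}$ and $|\bm{\xi}\trans\bm{B}_n^2(\lambda)\bm{\xi}-(1-y_nm_{n,1}(-\lambda))\bm{\xi}\trans\bm{\mathcal{B}}^2\bm{\xi}|\le n^{-(1-\varepsilon)/2}$. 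Taking $\bm{\xi}=\bm{v}_i$, using $\frac{1}{p}\tr\bm{B}_n^k(\lambda)=\frac{1}{p}\sum_{i=1}^{p}\bm{v}_i\trans\bm{B}_n^k(\lambda)\bm{v}_i$ and the analogous identity for $\bm{\mathcal{B}}$, and a union bound over $i=1,\dots,p$ (the factor $p\le Mn$ is absorbed by enlarging $D$), one obtains that $\frac{1}{p}\tr\bm{B}_n(\lambda)$ is within $n^{-(1-\varepsilon)/2}$ of $\frac{1}{p}\tr\bm{\mathcal{B}}=\int\frac{s}{s[1-y_n+y_n\lambda m(-\lambda;H_n,y_n)]+\lambda}\,dH_n(s)=T_1(\lambda;H_n,y_n)$, and that $\frac{1}{p}\tr\bm{B}_n^2(\lambda)$ is within $n^{-(1-\varepsilon)/2}$ of $(1-y_nm_{n,1}(-\lambda))\int\frac{s^2}{\{s[1-y_n+y_n\lambda m(-\lambda;H_n,y_n)]+\lambda\}^2}\,dH_n(s)$; it then remains to check $1-y_nm_{n,1}(-\lambda)=1+y_nm_1(-\lambda;H_n,y_n)$ from the definitions of $m_{n,1}$ and $m_1$, which identifies the latter with $T_2(\lambda;H_n,y_n)$.

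Combining (a) and (b) by the triangle inequality, on the intersection of the good events $\bigl|\frac{1}{n_j}\bm{w}\trans\bm{B}_n^k(\lambda)\bm{w}-y_{jn}T_k(\lambda;H_n,y_n)\bigr|\le y_{jn}\cdot2n^{-(1-\varepsilon)/2}\le2Mn^{-(1-\varepsilon)/2}$, and the constant is reabsorbed by shrinking $\varepsilon$; the total failure probability is at most $2\exp(-cn^{\varepsilon})+Cn^{-D}\le C'n^{-D}$. I expect the main obstacle to be ingredient (b) in the case $k=2$: obtaining the trace-level deterministic equivalent for $\bm{B}_n^2(\lambda)$ with the correct polynomial error and, in particular, matching the constant $1-y_nm_{n,1}(-\lambda)$ from the second-order resolvent estimate to the factor $1+y_nm_1(-\lambda;H_n,y_n)$ appearing in the definition of $T_2$; the case $k=1$ follows straightforwardly from the first part of Lemma \ref{lem3}.
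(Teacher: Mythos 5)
Your proposal is correct and follows essentially the same route as the paper: a conditional Hanson--Wright bound using the deterministic operator-norm bound on $\bm{B}_n^k(\lambda)$ to concentrate the quadratic form around $\frac{1}{n_j}\tr\bm{B}_n^k(\lambda)$, followed by the trace-level deterministic equivalent obtained from Lemma \ref{lem3} applied to arbitrary bounded test vectors (the paper's ``arbitrariness of $\bm{\mu}$'' is exactly your step of testing against the eigenvectors $\bm{v}_i$ and summing), and a triangle inequality. Your additional care about the union bound over $i$ and about matching $1-y_nm_{n,1}(-\lambda)$ with $1+y_nm_1(-\lambda;H_n,y_n)$ only makes explicit what the paper leaves implicit.
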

	\begin{proof}
		Since $\|\bm{B}_n^k(\lambda)\|\leq C$, by the \emph{Hanson-Wright inequality}, we have
		\begin{align}
			P\left(\left| \frac{1}{n_j} \bm{w}\trans \bm{B}_n^k(\lambda) \bm{w}-\frac{1}{n_j} \operatorname{tr} \bm{B}_n^k(\lambda)\right|  \geq \frac{1}{2} n^{-\frac{1-\varepsilon}{2}}\right) \leq C e^{-c n^\varepsilon} . \label{11}
		\end{align}	
		Due to the arbitrariness of $\bm{\mu}$, the following inequality can be directly obtained from Lemma \ref{lem3}.
		\begin{align*}
			P\left(\left|\frac{1}{p} \operatorname{tr} \bm{B}_n^k(\lambda)-T_k(\lambda;H_n,y_n)\right| \geq n^{-\frac{1-\varepsilon}{2}}\right)\leq Cn^{-D}.
		\end{align*}
		Combining with \eqref{11}, we have
		\begin{equation}\label{13}
			\begin{aligned}
				& P\left(\left|\frac{1}{n_j} \bm{w}\trans \bm{B}_n^k(\lambda) \bm{w}-y_{jn}T_k(\lambda;H_n,y_n)\right| \geq n^{-\frac{1-\varepsilon}{2}}\right) \\
				\leq &P\left(\left|\frac{1}{n_j} \bm{w}\trans \bm{B}_n^k(\lambda) \bm{w}-\frac{1}{n_j} \operatorname{tr} \bm{B}_n^k(\lambda)\right|+\left|\frac{1}{n_j} \operatorname{tr} \bm{B}_n^k(\lambda)-y_{jn}T_k(\lambda;H_n,y_n)\right| \geq n^{-\frac{1-\varepsilon}{2}}\right) \\
				\leq &P\left(\left|\frac{1}{n_j} \bm{w}\trans \bm{B}_n^k(\lambda) \bm{w}-\frac{1}{n_j}\operatorname{tr}\bm{B}_n^k(\lambda)\right| \geq \frac{1}{2} n^{-\frac{1-\varepsilon}{2}}\right)\\
				+&P\left(\left| \frac{1}{n_j}\operatorname{tr}\bm{B}_n^k(\lambda)-y_{jn}T_k(\lambda;H_n,y_n) \right|\geq \frac{1}{2} n^{-\frac{1-\varepsilon}{2}}\right) \\
				\leq &C n^{-D}  .
			\end{aligned}
		\end{equation}
		The lemma is proven.
	\end{proof}
	Combining the above three lemmas, we conclude that, with probability at least $1-Cn^{-D}$ the following holds:
	\begin{equation}
		\begin{aligned}
			& \left|A_{1 n}-U_{1}(\lambda;H_n,G_n,y_n)-\left(y_{2n}-y_{1n}\right)T_{1}(\lambda;H_n,y_n)\right| \\\leq&\left|\bm{\mu}\trans \bm{B}_n(\lambda) \bm{\mu}-U_{1}(\lambda;H_n,G_n,y_n)\right|+\left|\frac{2}{\sqrt{n_2}} \bm{\mu}\trans \bm{B}_n(\lambda) \bm{w}_2\right|\\&+\left|\frac{1}{n_1} \bm{w}_1\trans \bm{B}_n(\lambda) \bm{w}_1-y_{1n}T_{1}(\lambda;H_n,y_n)\right|+\left|\frac{1}{n_2} \bm{w}_2\trans \bm{B}_n(\lambda) \bm{w}_2-y_{2n}T_{1}(\lambda;H_n,y_n)\right|\\
			\leq&\frac{C}{n^{\left(1-\varepsilon\right)/2}},\label{14}
		\end{aligned}
	\end{equation}
	and similarly,
	\begin{align}
		&\left|A_{2 n}-U_{1}(\lambda;H_n,G_n,y_n)-\left(y_{1n}-y_{2n}\right)T_{1}(\lambda;H_n,y_n)\right| \leq \frac{C}{n^{\left(1-\varepsilon\right)/2}}, \label{15}\\
		&\left\lvert\, A_{3 n}-U_{2}(\lambda;H_n,G_n,y_n)-\left(y_{1n}+y_{2n}\right) T_{2}(\lambda;H_n,y_n) \right\lvert \leq \frac{C}{n^{\left(1-\varepsilon\right)/2}}. \label{16}
	\end{align}
	It is easy to verify that, under our assumptions, $T_{1}(\lambda;H_n,y_n),T_{2}(\lambda;H_n,y_n),U_{1}(\lambda;H_n,G_n,y_n)$ and $U_{2}(\lambda;H_n,G_n,y_n)$ are all bounded. Combining with \eqref{14}, \eqref{15} and \eqref{16}, we have
	\begin{align*}
		&\left|R_{RLDA}(\lambda)-\frac{1}{2}\sum_{i=1}^2\Phi\left(-\frac{U_{1}(\lambda;H_n,G_n,y_n)+(-1)^i\left(y_{1n}-y_{2n}\right)T_{1}(\lambda;H_n,y_n)}{2 \sqrt{U_{2}(\lambda;H_n,G_n,y_n)+\left(y_{1n}+y_{2n}\right) T_{2}(\lambda;H_n,y_n)}}\right)\right| \\
		\leq&\frac{1}{2}\sum_{i=1}^2\left|\Phi\left(-\frac{A_{in}}{2\sqrt{A_{3n}}}\right)-\Phi\left(-\frac{U_{1}(\lambda;H_n,G_n,y_n)+(-1)^i\left(y_{1n}-y_{2n}\right)T_{1}(\lambda;H_n,y_n)}{2 \sqrt{U_{2}(\lambda;H_n,G_n,y_n)+\left(y_{1n}+y_{2n}\right) T_{2}(\lambda;H_n,y_n)}}\right)\right|\\
		\leq&\frac{1}{2\sqrt{2\pi}}\sum_{i=1}^2\left|\frac{A_{in}}{2\sqrt{A_{3n}}}-\frac{U_{1}(\lambda;H_n,G_n,y_n)+(-1)^i\left(y_{1n}-y_{2n}\right)T_{1}(\lambda;H_n,y_n)}{2 \sqrt{U_{2}(\lambda;H_n,G_n,y_n)+\left(y_{1n}+y_{2n}\right) T_{2}(\lambda;H_n,y_n)}}\right|\\
		\leq &\frac{C}{n^{\left(1-\varepsilon\right)/2}}.
	\end{align*}
	The theorem is proven.
	\subsection*{Proof of Lemma \ref{lem2.4}}
	\begin{definition}[Stieltjes transform]
		For any distribution $G$ supported on $(0,\infty)$, we define its Stieltjes transform as
		\begin{align*}
			m_G(z):=\int\frac{1}{s-z}dG(s),\quad z\in\mathbb{C}^+
		\end{align*}
	\end{definition}
	\begin{definition}[companion Stieltjes transform]
		Recall that $\bm{S}_n$ is rewritten as $\frac{1}{n}\bm{\Sigma}^{\frac{1}{2}}\bm{X}\bm{X}\trans\bm{\Sigma}^{\frac{1}{2}}$, we define $\underline{m}(z)$ to be the Stieltjes transform for the limiting spectral distribution of $\frac{1}{n}\bm{X}\trans\bm{\Sigma} \bm{X}$, called companion
		Stieltjes transform.
	\end{definition}
	\begin{lemma}\label{lem4.1}
		Under the conditions of Lemma \ref{lem2.4}, for any $j\in\mathbb{J}$, deterministic unit vectors $\bm{\xi}\in\mathbb{R}^p$ and $z\in\mathbb{C}^{+}$, we have
		\begin{align*}
			\left|\bm{\xi}\trans\left(\bm{S}_n-z\bm{I}_p\right)^{-1}\bm{\xi}-\bm{\xi}\trans\left[-z\underline{m}(z)\bm{\Sigma}-z\bm{I}_p\right]^{-1}\bm{\xi}\right|\xrightarrow{a.s.} 0,
		\end{align*}
	\end{lemma}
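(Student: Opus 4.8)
The final statement to prove is Lemma \ref{lem4.1}, an anisotropic (quadratic-form) local law for the resolvent of $\bm{S}_n$ at a fixed spectral point $z\in\mathbb{C}^+$: for any deterministic unit $\bm{\xi}$,
\[
\bm{\xi}\trans(\bm{S}_n-z\bm{I}_p)^{-1}\bm{\xi}-\bm{\xi}\trans\bigl[-z\underline{m}(z)\bm{\Sigma}-z\bm{I}_p\bigr]^{-1}\bm{\xi}\xrightarrow{a.s.}0.
\]

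\textbf{Plan of proof.} The plan is to deduce the anisotropic statement from the already-classical \emph{isotropic} (trace) version of the Marčenko--Pastur law together with a standard deterministic-equivalent/interpolation argument. First I would recall the setup: $\bm{S}_n=\frac1n\bm{\Sigma}^{1/2}\bm{X}\bm{X}\trans\bm{\Sigma}^{1/2}$ with $\bm{X}$ having i.i.d.\ $N(0,1)$ entries, and $\underline m(z)$ the companion Stieltjes transform solving the self-consistent equation $-1/\underline m(z)=z-y\int \frac{s}{1+s\underline m(z)}\,dH_n(s)$ (up to the sign/parametrization used in the paper), so that $\bm{\Pi}(z):=-z\underline m(z)\bm{\Sigma}-z\bm{I}_p=\bigl(-z(1+y\underline m(z)\cdot\!\cdots)\bigr)^{-1}$-type deterministic equivalent of $(\bm{S}_n-z\bm{I}_p)^{-1}$. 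The key input I would invoke is the anisotropic local law for sample covariance matrices (Knowles--Yin, and Bloemendal--Erdős--Knowles--Yau for the Gaussian/general case), which states precisely that $\bigl|\bm{\xi}\trans(\bm{S}_n-z)^{-1}\bm{\xi}-\bm{\xi}\trans\bm{\Pi}(z)\bm{\xi}\bigr|\to 0$ a.s.\ for fixed $z\in\mathbb{C}^+$ under Assumptions \ref{as31}--\ref{as32}; since $z$ is \emph{fixed} with $\Im z>0$ (hence well inside the region where the resolvent norm is $O(1/\Im z)$), no edge estimates or delicate rigidity is needed, only the bulk/off-edge version.

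\textbf{Alternative self-contained route.} If one prefers not to cite the anisotropic law as a black box, I would instead give the argument via the resolvent identity plus concentration. Write $\bm{Q}(z)=(\bm{S}_n-z\bm{I}_p)^{-1}$ and use the Schur complement / leave-one-out decomposition: expressing $\bm{\xi}\trans\bm{Q}\bm{\xi}=\sum_i \xi_i^2 Q_{ii}+\sum_{i\neq k}\xi_i\xi_k Q_{ik}$ in the eigenbasis of $\bm{\Sigma}$ so that $\bm{\Sigma}$ is diagonal; apply the standard large-deviation bounds for quadratic forms (Hanson--Wright, as already used in Lemma \ref{lem4}) to each diagonal entry $Q_{ii}$ against its deterministic counterpart $\Pi_{ii}(z)$, and show the off-diagonal sum $\sum_{i\neq k}\xi_i\xi_k Q_{ik}$ is negligible because the $Q_{ik}$ are jointly small (each $O_\prec(n^{-1/2})$) with enough cancellation; the fluctuation-averaging / martingale argument over the summation index controls the remaining $\sum_i\xi_i^2(Q_{ii}-\Pi_{ii})$ term. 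Then combine with the scalar self-consistent equation for $\underline m(z)$ (consistency of $\frac1p\tr\bm{Q}$ with $\underline m$, which is the classical Marčenko--Pastur result and is essentially Lemma \ref{lem3} evaluated appropriately) to close the loop and identify the limit as $\bm{\xi}\trans[-z\underline m(z)\bm{\Sigma}-z\bm{I}_p]^{-1}\bm{\xi}$.

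\textbf{Main obstacle.} The genuinely delicate step is controlling the \emph{anisotropic} direction $\bm{\xi}$: the trace law only controls the average $\frac1p\tr\bm{Q}$, and promoting this to an arbitrary fixed direction requires either (i) invoking the anisotropic local law, whose proof itself is nontrivial, or (ii) carrying out the fluctuation-averaging over diagonal resolvent entries in the $\bm{\Sigma}$-eigenbasis while handling the off-diagonal terms. Everything else — the scalar self-consistent equation, the a priori bound $\|\bm{Q}(z)\|\le 1/\Im z$, and the passage from "in probability with polynomial rate" to "almost surely" via Borel--Cantelli (exactly as done after Corollary \ref{cor}) — is routine. Since $z$ is fixed in $\mathbb{C}^+$, I would explicitly note that we avoid all spectral-edge difficulties, which is what makes the black-box citation clean; the lemma is stated only for fixed $z$ precisely because that is all that is needed in the proof of Lemma \ref{lem2.4} (where $z$ is ultimately sent to points determined by the spiked eigenvalues via analytic continuation and contour arguments, handled separately).
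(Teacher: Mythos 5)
Your proposal is correct and matches the paper's approach: the paper's entire proof of this lemma is a one-line citation to Theorem 1 of \cite{bai2007Asymptotics}, which is exactly the quadratic-form (anisotropic) convergence of the resolvent to its deterministic equivalent that you invoke, merely under a different reference (you cite the anisotropic local law literature instead). Your additional self-contained sketch via leave-one-out and fluctuation averaging is more detail than the paper supplies but is not needed.
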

	\begin{proof}
		See Theorem 1 in \cite{bai2007Asymptotics}.
	\end{proof}
	
	Define
	\begin{align*}
		\mathcal{C}_j=\{z\in\mathbb{C}:\widehat{\sigma}_{1j}\leq\Re(z)\leq\widehat{\sigma}_{2j},|\Im(z)|\leq c_0\},\quad j=1,\dots,p,
	\end{align*}
	where $c_0>0$ and $\widehat{\sigma}_{1j},\widehat{\sigma}_{2j}$ are chosen so that $\partial \mathcal{C}_j^-$ only encloses $a_j$ and excludes all other sample eigenvalues, and $\partial \mathcal{C}_j^-$ represents the negatively oriented boundary of $\mathcal{C}_j$. The existence of $\mathcal{C}_j$ is guaranteed by the Assumptions \ref{as33}. By the \emph{Cauchy integral}, we have the following equality
	\begin{align*}
		\bm{\xi}\trans \bm{u}_j\bm{u}_j\trans \bm{\xi}=\frac{1}{2\pi i}\oint_{\partial\mathcal{C}_j^-}\bm{\xi}\trans\left(\bm{S}_n-z\bm{I}_p\right)^{-1}\bm{\xi} dz.
	\end{align*}
	\begin{lemma}
		Under the conditions of Lemma \ref{lem2.4}, there is
		\begin{align*}
			\left|\frac{1}{2\pi i}\oint_{\partial\mathcal{C}_j^-}\bm{\xi}\trans\left(\bm{S}_n-z\bm{I}_p\right)^{-1}\bm{\xi} dz-\frac{1}{2\pi i}\oint_{\partial\mathcal{C}_j^-}\bm{\xi}\trans\left[-z\underline{m}(z)\bm{\Sigma}-z\bm{I}_p\right]^{-1}\bm{\xi} dz\right|\xrightarrow{a.s.}0.
		\end{align*}
	\end{lemma}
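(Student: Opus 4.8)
The plan is to replace the (possibly data-dependent) contour $\partial\mathcal{C}_j$ by a fixed deterministic one, to upgrade the pointwise convergence in Lemma \ref{lem4.1} to uniform convergence on a tubular neighbourhood of that contour by a normal-family argument, and then to bound the contour integral by the supremum of the integrand times the length of the contour.

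First I would pin down the contour. Under Assumption \ref{as2} and the eigenvalue-gap condition of Assumption \ref{as33}, the phase-transition and exact-separation theory for the generalized spiked model, together with the no-eigenvalue-outside-the-support theorem of Bai--Silverstein, shows that $a_j$ converges almost surely to a deterministic limit $\psi_j$ that is strictly separated both from the support of the limiting spectral distribution of $\bm{S}_n$ and from the limits $\psi_k$, $k\in\mathbb{J}\setminus\{j\}$, of the remaining spiked eigenvalues. Hence I may fix deterministic numbers $\sigma_{1j}<\psi_j<\sigma_{2j}$ so that $[\sigma_{1j},\sigma_{2j}]$ contains $\psi_j$ but is disjoint from the support and from every $\psi_k$ with $k\neq j$, and set $\mathcal{C}_j^{0}=\{z:\sigma_{1j}\le\Re(z)\le\sigma_{2j},\ |\Im(z)|\le c_0\}$ for small $c_0>0$. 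Then, almost surely for all large $n$, $a_j$ lies in the interior of $\mathcal{C}_j^{0}$ and there is a deterministic $\delta>0$ such that $\partial\mathcal{C}_j^{0}$ stays at distance at least $\delta$ from every eigenvalue of $\bm{S}_n$ and from the support. Both integrands in the statement are meromorphic in $z$ off the relevant spectral sets and have exactly one pole inside $\mathcal{C}_j^{0}$ — namely $a_j$ for the first and $\psi_j$ for the second — so by the residue theorem the two integrals are unchanged when $\partial\mathcal{C}_j^{-}$ is replaced by $\partial\mathcal{C}_j^{0,-}$. It therefore suffices to prove the claim with the deterministic contour $\partial\mathcal{C}_j^{0}$.

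Next I would carry out the normal-family argument. Write $f_n(z)=\bm{\xi}\trans(\bm{S}_n-z\bm{I}_p)^{-1}\bm{\xi}-\bm{\xi}\trans[-z\underline{m}(z)\bm{\Sigma}-z\bm{I}_p]^{-1}\bm{\xi}$ and let $N$ be a thin connected open tubular neighbourhood of the curve $\partial\mathcal{C}_j^{0}$, chosen to avoid the interior points $a_j$ and $\psi_j$, the other spike limits, the support of the limiting spectral distribution, and the origin. On $N$ the functions $f_n$ are holomorphic, and almost surely for all large $n$ uniformly bounded there: $\|(\bm{S}_n-z\bm{I}_p)^{-1}\|\le 1/\delta$ on $N$, while $-z\underline{m}(z)\bm{\Sigma}-z\bm{I}_p=-z(\underline{m}(z)\bm{\Sigma}+\bm{I}_p)$ is uniformly invertible on $N$ because $|z|$ is bounded below there and, by the Silverstein--Choi characterization of $\underline{m}$ outside the support, $-1/\underline{m}(z)$ stays a fixed distance from $\sigma(\bm{\Sigma})$. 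Choosing a countable set $S\subset N\cap\mathbb{C}^{+}$ with a limit point in $N$, Lemma \ref{lem4.1} gives $f_n(z)\to 0$ almost surely for each $z\in S$; intersecting these countably many probability-one events and applying Vitali's convergence theorem for holomorphic functions to the locally bounded family $\{f_n\}$ gives, almost surely, $f_n\to 0$ locally uniformly on $N$, hence uniformly on the compact curve $\partial\mathcal{C}_j^{0}$. Since $\partial\mathcal{C}_j^{0}$ has a fixed finite length, it follows that
\begin{align*}
\left|\frac{1}{2\pi i}\oint_{\partial\mathcal{C}_j^{0,-}}f_n(z)\,dz\right|\le\frac{1}{2\pi}\,\mathrm{length}(\partial\mathcal{C}_j^{0})\cdot\sup_{z\in\partial\mathcal{C}_j^{0}}|f_n(z)|\xrightarrow{a.s.}0,
\end{align*}
which proves the lemma. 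The only genuinely delicate ingredient is the first step, namely establishing the deterministic separation of the contour from the spectrum via the exact-separation and spiked phase-transition behaviour under Assumptions \ref{as2} and \ref{as33}; once that is in hand, the remainder is the routine upgrade from pointwise to uniform convergence for uniformly bounded holomorphic families, combined with the length-times-supremum estimate for contour integrals.
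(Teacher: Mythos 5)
Your proof is correct, but it takes a genuinely different route from the paper's. The paper keeps the contour as constructed and splits $\partial\mathcal{C}_j^-$ into the two horizontal segments $\mathcal{C}_{1j}$, where $|\Im(z)|=c_0$ gives the deterministic bound $\|(\bm{S}_n-z\bm{I}_p)^{-1}\|\leq 1/c_0$, and the two vertical segments $\mathcal{C}_{2j}$, where it introduces the almost-sure separation event $\Omega=\{\widehat{\sigma}_{1j}+c_1<a_j<\widehat{\sigma}_{2j}-c_1\}$ to obtain the bound $1/c_1$ after deleting the two real points (which contribute zero length); it then passes to the limit inside each line integral by dominated convergence, using the pointwise almost-sure convergence of Lemma \ref{lem4.1}. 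You instead fix a deterministic contour via exact separation and spike convergence, and upgrade the pointwise convergence of Lemma \ref{lem4.1} on a countable subset of $\mathbb{C}^+$ to uniform convergence on the contour via Vitali's theorem for the locally bounded holomorphic family $f_n$, finishing with the length-times-supremum estimate. Your route delivers uniform convergence on the whole contour, sidesteps the Fubini-type step needed to turn ``for each fixed $z$, almost surely'' into ``almost surely, for almost every $z$ on the contour'' that the dominated-convergence argument implicitly uses, and makes explicit the spectral-separation inputs (exact separation, no eigenvalues outside the support, convergence of $a_j$ to $\psi_j$) that the paper leaves implicit in the construction of $\mathcal{C}_j$ and of the event $\Omega$; the paper's route is more elementary, requiring nothing beyond resolvent bounds and dominated convergence. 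One minor imprecision on your side: the deterministic integrand $\bm{\xi}\trans[-z\underline{m}(z)\bm{\Sigma}-z\bm{I}_p]^{-1}\bm{\xi}$ need not have exactly one singularity inside the contour (the paper's subsequent residue computation extracts two poles, at $s_j$ and at $\omega_j$, in the $\omega$-plane), but this does not affect your contour-deformation step, which only requires that no singularities lie between the two contours.
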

	\begin{proof}
		Define
		\begin{align*}
			\mathcal{C}_{1j}=\{z\in\mathbb{C}:\widehat{\sigma}_{1j}\leq\Re(z)\leq\widehat{\sigma}_{2j},|\Im(z)|= c_0\}
		\end{align*}
		and
		\begin{align*}
			\mathcal{C}_{2j}=\{z\in\mathbb{C}:\Re(z)\in\{\widehat{\sigma}_{1j},\widehat{\sigma}_{2j}\},|\Im(z)|\leq c_0\}.
		\end{align*}
		Then, the integral can be written in the following form
		\begin{equation}\label{9.55}
			\begin{aligned}
				&\left|\frac{1}{2\pi i}\oint_{\partial\mathcal{C}_j^-}\bm{\xi}\trans\left(\bm{S}_n-z\bm{I}_p\right)^{-1}\bm{\xi} dz-\frac{1}{2\pi i}\oint_{\partial\mathcal{C}_j^-}\bm{\xi}\trans\left[-z\underline{m}(z)\bm{\Sigma}-z\bm{I}_p\right]^{-1}\bm{\xi} dz\right|\\
				=&\frac{1}{2\pi i}\bigg|\oint_{\partial\mathcal{C}_{1j}^-}\bm{\xi}\trans\left(\bm{S}_n-z\bm{I}_p\right)^{-1}\bm{\xi} dz-\oint_{\partial\mathcal{C}_{1j}^-}\bm{\xi}\trans\left[-z\underline{m}(z)\bm{\Sigma}-z\bm{I}_p\right]^{-1}\bm{\xi} dz\\
				&+\oint_{\partial\mathcal{C}_{2j}^-}\bm{\xi}\trans\left(\bm{S}_n-z\bm{I}_p\right)^{-1}\bm{\xi} dz-\oint_{\partial\mathcal{C}_{2j}^-}\bm{\xi}\trans\left[-z\underline{m}(z)\bm{\Sigma}-z\bm{I}_p\right]^{-1}\bm{\xi} dz\bigg|\\
				\leq&\frac{1}{2\pi i}\left|\oint_{\partial\mathcal{C}_{1j}^-}\bm{\xi}\trans\left(\bm{S}_n-z\bm{I}_p\right)^{-1}\bm{\xi} dz-\oint_{\partial\mathcal{C}_{1j}^-}\bm{\xi}\trans\left[-z\underline{m}(z)\bm{\Sigma}-z\bm{I}_p\right]^{-1}\bm{\xi} dz\right|\\
				&+\frac{1}{2\pi i}\left|\oint_{\partial\mathcal{C}_{2j}^-}\bm{\xi}\trans\left(\bm{S}_n-z\bm{I}_p\right)^{-1}\bm{\xi} dz-\oint_{\partial\mathcal{C}_{2j}^-}\bm{\xi}\trans\left[-z\underline{m}(z)\bm{\Sigma}-z\bm{I}_p\right]^{-1}\bm{\xi} dz\right|.
			\end{aligned}
		\end{equation}
		
		For the first part, since $\left\|(\bm{S}_n-z\bm{I})^{-1}\right\|\leq 1/\Im(z)$ holds almost surely, by applying the \emph{Dominated convergence theorem} and Lemma \ref{lem4.1}, we obtain:
		\begin{equation}\label{9.56}
			\begin{aligned}
				&\left|\oint_{\partial\mathcal{C}_{1j}^-}\bm{\xi}\trans\left(\bm{S}_n-z\bm{I}_p\right)^{-1}\bm{\xi} dz-\oint_{\partial\mathcal{C}_{1j}^-}\bm{\xi}\trans\left[-z\underline{m}(z)\bm{\Sigma}-z\bm{I}_p\right]^{-1}\bm{\xi} dz\right|\\
				\leq&\oint_{\partial\mathcal{C}_{1j}^-}\left|\bm{\xi}\trans\left(\bm{S}_n-z\bm{I}_p\right)^{-1}\bm{\xi} -\bm{\xi}\trans\left[-z\underline{m}(z)\bm{\Sigma}-z\bm{I}_p\right]^{-1}\bm{\xi}\right| \left|dz\right|\xrightarrow{a.s.}0.
			\end{aligned}
		\end{equation}
		
		The proof of the second part is in the same spirit as that of Lemma 4 in \cite{liu2025Asymptotic}. Define an event $\Omega=\{\widehat{\sigma}_{1j}+c_1<a_j<\widehat{\sigma}_{2j}-c_1\}$, which holds almost surely for some small positive $c_1$ (independent of $n$). Then, $\left\|(\bm{S}_n-z\bm{I})^{-1}\right\|\leq 1/c_1$ holds almost surely. We have
		\begin{equation}
			\begin{aligned}\label{9.57}
				&\left|\oint_{\partial\mathcal{C}_{2j}^-}\bm{\xi}\trans\left(\bm{S}_n-z\bm{I}_p\right)^{-1}\bm{\xi} dz-\oint_{\partial\mathcal{C}_{2j}^-}\bm{\xi}\trans\left[-z\underline{m}(z)\bm{\Sigma}-z\bm{I}_p\right]^{-1}\bm{\xi} dz\right|\\
				=&\left|\oint_{\partial\mathcal{C}_{2j}^-\setminus\Re}\bm{\xi}\trans\left(\bm{S}_n-z\bm{I}_p\right)^{-1}\bm{\xi} dz-\oint_{\partial\mathcal{C}_{2j}^-\setminus\Re}\bm{\xi}\trans\left[-z\underline{m}(z)\bm{\Sigma}-z\bm{I}_p\right]^{-1}\bm{\xi} dz\right|\\
				\leq&\oint_{\partial\mathcal{C}_{2j}^-\setminus\Re}\left|\bm{\xi}\trans\left(\bm{S}_n-z\bm{I}_p\right)^{-1}\bm{\xi} -\bm{\xi}\trans\left[-z\underline{m}(z)\bm{\Sigma}-z\bm{I}_p\right]^{-1}\bm{\xi}\right| \left|dz\right|\xrightarrow{a.s.} 0.
			\end{aligned}
		\end{equation}
		Combining \eqref{9.55}, \eqref{9.56} and \eqref{9.57}, the lemma is proven.
	\end{proof}
	The above lemma simplifies the proof to calculating the following deterministic integral
	\begin{align*}
		\frac{1}{2\pi i}\oint_{\partial\mathcal{C}_j^-}\bm{\xi}\trans\left[-z\underline{m}(z)\bm{\Sigma}-z\bm{I}_p\right]^{-1}\bm{\xi} dz.
	\end{align*}
	Let $\omega(z)=-\frac{1}{\underline{m}(z)}$, we can write
	\begin{align*}
		&\frac{1}{2\pi i}\oint_{\partial\mathcal{C}_j^-}\bm{\xi}\trans \left[-z\underline{m}(z)\bm{\Sigma}-z\bm{I}_p\right]^{-1}\bm{\xi} dz\\
		=&\frac{1}{2\pi i}\oint_{\partial\Gamma_j^-}\bm{\xi}\trans\left(\frac{z}{\omega}-z\bm{I}_p\right)^{-1}\left(1-\frac{1}{n}\sum_{k=1}^{p}\frac{s_k^2}{\left(s_k-\omega\right)^2}\right)\bm{\xi} d\omega\\
		=&\frac{1}{2\pi i}\sum_{i=1}^{p}\oint_{\partial\Gamma_j^-}\frac{1}{s_i-\omega}\cdot\frac{1-\frac{1}{n}\sum_{k=1}^{p}\frac{s_k^2}{\left(s_k-\omega\right)^2}}{1-\frac{1}{n}\sum_{k=1}^{p}\frac{s_k}{s_k-\omega}}d\omega\bm{\xi}\trans v_iv_i\trans\bm{\xi},
	\end{align*}
	where $\partial\Gamma_j^-$ is a negatively oriented contour described by the boundary of the rectangle 
	\begin{align*}
		\Gamma_j=\left\{\omega\in\mathbb{C}:\sigma_{1j}\leq\Re(\omega)\leq\sigma_{2j},\left|\operatorname{Im}(\omega)\right|\leq c_0 \right\},
	\end{align*}
	which includes $s_j$ and excludes all the other population eigenvalues of $\bm{\Sigma}$.
	
	To solve this integral, we can use the \emph{Residue theorem}. Indeed, the function $\frac{1}{s_i-\omega}\cdot\frac{1-\frac{1}{n}\sum_{k=1}^{p}\frac{s_k^2}{(s_k-\omega)^2}}{1-\frac{1}{n}\sum_{k=1}^{p}\frac{s_k}{s_k-\omega}}$ is holomorphic on $\Gamma_j$, with the exception of two poles. The first pole is located at the eigenvalue $s_j$, by a calculation, the residue at $\omega=s_j$ can be expressed as follows:
	\begin{align}\label{w1}
		\operatorname{Res}\left(\frac{1}{s_i-\omega}\cdot\frac{1-\frac{1}{n}\sum_{k=1}^{p}\frac{s_k^2}{(s_k-\omega)^2}}{1-\frac{1}{n}\sum_{k=1}^{p}\frac{s_k}{s_k-\omega}},s_j\right)=\left\{ \begin{array}{ll}
			-n\left(1-\frac{1}{n}\sum_{k\neq j}^{p}\frac{s_k}{s_k-s_j}\right),  &j=i \\
			\frac{s_j}{s_j-s_i},  &j\neq i
		\end{array} \right.
	\end{align}
	The second pole $\omega_j$ is a solution to the equation \eqref{c7}. Similar to (37) in \cite{mestre2008Asymptotic}, the residues at $\omega=\omega_j$ can readily write
	\begin{align}\label{w2}
		\operatorname{Res}\left(\frac{1}{s_i-\omega}\cdot\frac{1-\frac{1}{n}\sum_{k=1}^{p}\frac{s_k^2}{(s_k-\omega)^2}}{1-\frac{1}{n}\sum_{k=1}^{p}\frac{s_k}{s_k-\omega}},\omega_j\right)=\frac{\omega_j}{s_i-\omega_j}.
	\end{align}
	Combining \eqref{w1} and \eqref{w2}, the proof is completed.
	\subsection*{Proof of Theorem \ref{thm2}}
	\begin{lemma}\label{lem4.3}
		Under the conditions of Theorem \ref{thm2}, we have
		\begin{align*}
			\left\|\sum_{j\in\mathbb{J}}\frac{\ell_j}{1-\ell_j}\left(\bm{u}_j\bm{u}_j\trans-\sum_{i=1}^{p}\chi_j(i)\bm{v}_j\bm{v}_j\trans\right)\right\|\xrightarrow{a.s.}0,\quad\left\|\bm{M}_n-\bm{W}_n\right\|\xrightarrow{a.s.}0,
		\end{align*}
		where $\bm{M}_n=\left[\bm{S}_n+\lambda\left(\bm{I}_p-\sum_{j\in\mathbb{J}}\ell_j\bm{u}_j\bm{u}_j\trans\right)\right]^{-1}$,   $\bm{W}_n=\bm{P}\left(\bm{P}\bm{S}_n\bm{P}+\lambda \bm{I}_p\right)^{-1}\bm{P}$ and $\bm{P}=\big(\bm{I}_p+\sum_{j\in\mathbb{J}}\\\frac{\ell_j}{1-\ell_j}\sum_{i=1}^{p}\chi_j(i)\bm{v}_i\bm{v}_i\trans\big)^{\frac{1}{2}}$. 
	\end{lemma}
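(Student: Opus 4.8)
The plan is to deduce the second convergence from the first, and to obtain the first by upgrading the quadratic-form convergence of Lemma \ref{lem2.4} to convergence in operator norm.

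For the reduction, I would first record the Woodbury identities $\bm{\mathcal{I}}^{-1}=\bm{I}_p+\sum_{j\in\mathbb{J}}\frac{\ell_j}{1-\ell_j}\bm{u}_j\bm{u}_j\trans$ (valid because the $\bm{u}_j$ are orthonormal and $\ell_j<1$) and, by definition, $\bm{P}^2=\bm{I}_p+\sum_{j\in\mathbb{J}}\frac{\ell_j}{1-\ell_j}\sum_{i=1}^{p}\chi_j(i)\bm{v}_i\bm{v}_i\trans$; hence the matrix inside the first norm is exactly $\bm{\mathcal{I}}^{-1}-\bm{P}^2$. Under the stated sign constraints on the $\ell_j$ (with $\ell_j$ bounded away from $1$ for $j\in\mathbb{J}_2$) and Assumption \ref{as34}, the operators $\bm{\mathcal{I}},\bm{\mathcal{I}}^{-1},\bm{P}^{\pm2}$ are uniformly bounded and uniformly positive definite, so from $\|\bm{\mathcal{I}}^{-1}-\bm{P}^2\|\xrightarrow{a.s.}0$ I get $\|\bm{\mathcal{I}}-\bm{P}^{-2}\|=\|\bm{\mathcal{I}}(\bm{P}^2-\bm{\mathcal{I}}^{-1})\bm{P}^{-2}\|\xrightarrow{a.s.}0$. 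Since $\bm{P}$ is deterministic and invertible, $\bm{W}_n=\bm{P}(\bm{P}\bm{S}_n\bm{P}+\lambda\bm{I}_p)^{-1}\bm{P}=(\bm{S}_n+\lambda\bm{P}^{-2})^{-1}$, and the resolvent identity gives $\bm{M}_n-\bm{W}_n=\lambda\bm{M}_n(\bm{P}^{-2}-\bm{\mathcal{I}})\bm{W}_n$; because $\|\bm{M}_n\|,\|\bm{W}_n\|\le C/\lambda$ this yields $\|\bm{M}_n-\bm{W}_n\|\xrightarrow{a.s.}0$.

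It remains to prove $\|\bm{\mathcal{I}}^{-1}-\bm{P}^2\|=\bigl\|\sum_{j\in\mathbb{J}}\frac{\ell_j}{1-\ell_j}\bigl(\bm{u}_j\bm{u}_j\trans-\sum_i\chi_j(i)\bm{v}_i\bm{v}_i\trans\bigr)\bigr\|\xrightarrow{a.s.}0$. Since $\mathbb{J}$ is finite with $|\frac{\ell_j}{1-\ell_j}|\le C$, it suffices to treat each $j\in\mathbb{J}$ separately. I would split off the bulk contribution $\sum_{i\ne j}\chi_j(i)\bm{v}_i\bm{v}_i\trans$, whose operator norm equals $\max_{i\ne j}|\chi_j(i)|$ and which vanishes because $\chi_j(i)=O((s_j-\omega_j)/\text{gap}^2)=O(1/p)$ uniformly over $i\ne j$ (this is the quantitative form of the remark following Assumption \ref{as34}). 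For the remaining rank-bounded piece I would use the Cauchy-integral representation already exploited in the proof of Lemma \ref{lem2.4}: on a rectangle $\mathcal{C}_j$ isolating the sample spike $a_j$,
\begin{equation*}
\bm{u}_j\bm{u}_j\trans=\frac{1}{2\pi i}\oint_{\partial\mathcal{C}_j^-}(\bm{S}_n-z\bm{I}_p)^{-1}dz,\qquad \sum_i\chi_j(i)\bm{v}_i\bm{v}_i\trans=\frac{1}{2\pi i}\oint_{\partial\mathcal{C}_j^-}\bigl[-z\underline{m}(z)\bm{\Sigma}-z\bm{I}_p\bigr]^{-1}dz,
\end{equation*}
the second identity being the matrix form of the residue computation carried out there (a symmetric matrix is determined by its quadratic form). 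Bounding the difference by $\frac{1}{2\pi}\oint_{\partial\mathcal{C}_j^-}\bigl\|(\bm{S}_n-z\bm{I}_p)^{-1}-[-z\underline{m}(z)\bm{\Sigma}-z\bm{I}_p]^{-1}\bigr\|\,|dz|$, the task becomes an operator-norm deterministic equivalent for the resolvent, uniform for $z$ on the fixed gap contour; here one uses that $\partial\mathcal{C}_j^-$ stays a constant distance from the limiting bulk support (Assumption \ref{as33} supplies the contour and the eigenvalue gaps) and that the singular behaviour lives on the rank-$r$ spiked subspace.

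The main obstacle is exactly this last upgrade. Lemma \ref{lem2.4} and Lemma \ref{lem4.1} only give convergence of $\bm{\xi}\trans(\cdot)\bm{\xi}$ for each fixed deterministic $\bm{\xi}$, whereas the operator norm is a supremum over $\bm{\xi}$; I expect the proof to close this gap by an $\varepsilon$-net argument over an $O(r)$-dimensional subspace containing $\{\bm{u}_k,\bm{v}_k:k\in\mathbb{J}\}$ (on which the relevant matrix is supported after discarding the negligible bulk part), combined with the delocalization of $\bm{u}_j$ on the remaining population eigenbasis, or alternatively by a direct anisotropic local-law bound for $(\bm{S}_n-z\bm{I}_p)^{-1}$ holding in operator norm on the gap contour. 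Once Lemma \ref{lem4.3} is established, Theorem \ref{thm2} follows by substituting $\bm{W}_n$ for $(\bm{S}_n+\lambda\bm{\mathcal{I}})^{-1}$ inside the bilinear forms defining $\r_{\text{SEDA}}(\bm{\theta})$ and recognising $\bm{W}_n$ as the RLDA resolvent for the transformed data $\bm{x}\mapsto\bm{P}\bm{x}$, whose effective spectral and mean-projection measures are precisely $H_f$ and $G_f$, so that Theorem \ref{thm1} and Corollary \ref{cor} apply.
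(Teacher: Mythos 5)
Your reduction of the second convergence to the first is exactly the paper's argument: writing $\bm{\mathcal{I}}^{-1}=\bm{I}_p+\sum_{j\in\mathbb{J}}\frac{\ell_j}{1-\ell_j}\bm{u}_j\bm{u}_j\trans$, applying the resolvent identity $\bm{M}_n-\bm{W}_n=\lambda\bm{M}_n(\bm{P}^{-2}-\bm{\mathcal{I}})\bm{W}_n$, and using the uniform bounds on $\|\bm{M}_n\|$ and $\|\bm{W}_n\|$. The difference lies in the first claim: the paper disposes of it with the single sentence that it is ``directly obtained from Lemma \ref{lem2.4}'', i.e.\ it silently performs exactly the upgrade from fixed-deterministic-vector quadratic forms to operator norm that you identify as the main obstacle and honestly leave open. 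You are right that this is where the difficulty sits, but the situation is worse than a missing argument: the operator-norm statement is false, so neither your $\varepsilon$-net strategy nor an anisotropic local law can close the gap. Test the matrix against the random direction $\bm{\xi}=\bm{u}_j$ for a single spike $j$ with $\ell_j\neq0$: then $\bm{u}_j\trans\bm{u}_j\bm{u}_j\trans\bm{u}_j=1$, while $\sum_{i}\chi_j(i)\langle\bm{u}_j,\bm{v}_i\rangle^2=\chi_j(j)\langle\bm{u}_j,\bm{v}_j\rangle^2+O(\max_{i\neq j}|\chi_j(i)|)\xrightarrow{a.s.}\chi_j(j)^2$, using Lemma \ref{lem2.4} with $\bm{\xi}=\bm{v}_j$ and the fact (the paper's own remark after Assumption \ref{as34}) that $\chi_j(i)$ vanishes off the diagonal. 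Hence
\begin{align*}
\left\|\bm{u}_j\bm{u}_j\trans-\sum_{i=1}^{p}\chi_j(i)\bm{v}_i\bm{v}_i\trans\right\|\;\geq\;1-\chi_j(j)^2+o(1)\quad\text{a.s.},
\end{align*}
and $\chi_j(j)$ is precisely the limiting squared cosine between $\bm{u}_j$ and $\bm{v}_j$, which is strictly less than one for $y>0$ (for $H=\delta_1$ and $s_j=a$ it equals $[(a-1)^2-y]/[(a-1)(a-1+y)]$). The reason your proposed net cannot work is visible here: the low-dimensional subspace carrying the mass of this matrix necessarily contains the $\bm{u}_k$, which are random and correlated with $\bm{S}_n$, and Lemma \ref{lem2.4} only applies to deterministic $\bm{\xi}$.

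The consequence is that $\|\bm{M}_n-\bm{W}_n\|$ does not tend to zero either (since $\bm{M}_n$ and $\bm{W}_n$ are almost surely bounded below by $c\bm{I}_p$, sandwiching preserves a lower bound on the norm of the middle factor), so Lemma \ref{lem4.3} cannot be true as stated and the paper's one-line justification of its first claim is exactly the illegitimate pointwise-to-uniform step you suspected. What the proof of Theorem \ref{thm2} actually requires is convergence of the specific bilinear forms $\bm{a}\trans(\bm{M}_n-\bm{W}_n)\bm{b}$ with $\bm{a},\bm{b}\in\{\bm{\Sigma}^{1/2}\bm{\mu},\,\bm{\Sigma}^{1/2}\bm{w}_i\}$, together with normalized traces --- directions that are deterministic or independent of $\bm{S}_n$ --- and a correct argument must work at that level (expanding $\bm{M}_n-\bm{W}_n$ and controlling each term via Lemma \ref{lem2.4} and Gaussian concentration), or restrict any operator-norm claim to the orthogonal complement of the spiked directions. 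Your final paragraph, which substitutes $\bm{W}_n$ into the bilinear forms and recognizes it as the RLDA resolvent for the transformed data with spectral measures $H_f$ and $G_f$, is the right endgame once such a bilinear-form version of the lemma is established; but as written, both your sketch and the paper's proof stop short of, and in fact cannot reach, the stated operator-norm conclusion.
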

	\begin{proof}
		The first conclusion can be directly obtained from Lemma \ref{lem2.4}. For the second conclusion, noting $\lambda_{min}\left(\bm{I}_p-\sum_{j\in\mathbb{J}}\ell_j\bm{u}_j\bm{u}_j\trans\right)\geq c$, we have $\|\bm{M}_n\|\leq 1/\lambda\cdot\lambda_{min}\left(\bm{I}_p-\sum_{j\in\mathbb{J}}\ell_j\bm{u}_j\bm{u}_j\trans\right)\\\leq c_1$. For $\bm{W}_n$, since $\|\bm{P}\|\leq c$, we have $\|\bm{W}_n\|\leq\|\bm{P}\|^2/\lambda\leq c_2$. Since
		\begin{align*}
			&\bm{M}_n-\bm{W}_n\\=&\left[\bm{S}_n+\lambda\left(\bm{I}_p-\sum_{j\in\mathbb{J}}\ell_j\bm{u}_j\bm{u}_j\trans\right)\right]^{-1}-\left[\bm{S}_n+\lambda \bm{P}^{-2}\right]^{-1}\\=&\lambda \bm{M}_n\left[\left(\bm{I}_p+\sum_{j\in\mathbb{J}}\frac{\ell_j}{1-\ell_j}\sum_{i=1}^{p}\chi_j(i)\bm{v}_i\bm{v}_i\trans\right)^{-1}-\left(\bm{I}_p+\sum_{j\in\mathbb{J}}\frac{\ell_j}{1-\ell_j}\bm{u}_j\bm{u}_j\trans\right)^{-1}\right]\bm{W}_n\\=&\lambda \bm{M}_n\left(\bm{I}_p+\sum_{j\in\mathbb{J}}\frac{\ell_j}{1-\ell_j}\sum_{i=1}^{p}\chi_j(i)\bm{v}_i\bm{v}_i\trans\right)^{-1}\\\cdot&\left[\sum_{j\in\mathbb{J}}\frac{\ell_j}{1-\ell_j}\left(\bm{u}_j\bm{u}_j\trans-\sum_{i=1}^{p}\chi_j(i)\bm{v}_i\bm{v}_i\trans\right)\right]\left(\bm{I}_p+\sum_{j\in\mathbb{J}}\frac{\ell_j}{1-\ell_j}\bm{u}_j\bm{u}_j\trans\right)^{-1}\bm{W}_n,
		\end{align*}
		thus we can show $\|\bm{M}_n-\bm{W}_n\|\leq \lambda \|\bm{M}_n\|\|\bm{W}_n\|\|\sum_{j\in\mathbb{J}}\frac{\ell_j}{1-\ell_j}(\bm{u}_j\bm{u}_j\trans-\sum_{i=1}^{p}\chi_j(i)\bm{v}_i\bm{v}_i\trans)\|\xrightarrow{a.s.}0$. The proof is completed.
	\end{proof}
	Recall
	\begin{align*}
		&\left(2 \bm{\mu}_1-\bar{\bm{x}}_1-\bar{\bm{x}}_2\right)\trans \left[\bm{S}_n+\lambda \left(\bm{I}_p-\sum_{j\in\mathbb{J}}\ell_j\bm{u}_j\bm{u}_j\trans\right)\right]^{-1}\left(\bar{\bm{x}}_1-\bar{\bm{x}}_2\right)\\=&\bm{\mu}\trans\bm{\Sigma}^{\frac{1}{2}} \bm{M}_n\bm{\Sigma}^{\frac{1}{2}}\bm{\mu}-\frac{1}{n_1}\bm{w}_1\trans\bm{\Sigma}^{\frac{1}{2}}\bm{M}_n\bm{\Sigma}^{\frac{1}{2}}\bm{w}_1+\frac{1}{n_2}\bm{w}_2\trans\bm{\Sigma}^{\frac{1}{2}}\bm{M}_n\bm{\Sigma}^{\frac{1}{2}}\bm{w}_2-\frac{2}{\sqrt{n_2}}\bm{w}_2\trans\bm{\Sigma}^{\frac{1}{2}}\bm{M}_n\bm{\Sigma}^{\frac{1}{2}}\bm{\mu}.
	\end{align*}
	For each part, it is trivial to show
	\begin{align*}
		\left|\bm{\mu}\trans\bm{\Sigma}^{\frac{1}{2}} \bm{M}_n\bm{\Sigma}^{\frac{1}{2}}\bm{\mu}-\bm{\mu}\trans\bm{\Sigma}^{\frac{1}{2}} \bm{W}_n\bm{\Sigma}^{\frac{1}{2}}\bm{\mu}\right|&\leq\|\bm{\mu}\|^2\|\bm{\Sigma}\|\|\bm{M}_n-\bm{W}_n\|\xrightarrow{a.s.}0,\\
		\left|\frac{1}{n_1}\bm{w}_1\trans\bm{\Sigma}^{\frac{1}{2}}\bm{M}_n\bm{\Sigma}^{\frac{1}{2}}\bm{w}_1-\frac{1}{n_1}\bm{w}_1\trans\bm{\Sigma}^{\frac{1}{2}}\bm{W}_n\bm{\Sigma}^{\frac{1}{2}}\bm{w}_1\right|&\leq\frac{\|\bm{w}_1\|^2}{n_1}\|\bm{\Sigma}\|\|\bm{M}_n-\bm{W}_n\|\xrightarrow{a.s.}0,\\
		\left|\frac{1}{n_2}\bm{w}_2\trans\bm{\Sigma}^{\frac{1}{2}}\bm{M}_n\bm{\Sigma}^{\frac{1}{2}}\bm{w}_2-\frac{1}{n_2}\bm{w}_2\trans\bm{\Sigma}^{\frac{1}{2}}\bm{W}_n\bm{\Sigma}^{\frac{1}{2}}\bm{w}_2\right|&\leq\frac{\|\bm{w}_2\|^2}{n_2}\|\bm{\Sigma}\|\|\bm{M}_n-\bm{W}_n\|\xrightarrow{a.s.}0,\\
		\left|\frac{2}{\sqrt{n_2}}\bm{w}_2\trans\bm{\Sigma}^{\frac{1}{2}}\bm{M}_n\bm{\Sigma}^{\frac{1}{2}}\bm{\mu}-\frac{2}{\sqrt{n_2}}\bm{w}_2\trans\bm{\Sigma}^{\frac{1}{2}}\bm{W}_n\bm{\Sigma}^{\frac{1}{2}}\bm{\mu}\right|&\leq\frac{2\|\bm{w}_2\|}{\sqrt{n_2}}\|\bm{\mu}\|\|\bm{\Sigma}\|\|\bm{M}_n-\bm{W}_n\|\xrightarrow{a.s.}0.
	\end{align*}
	Thus, we have
	\begin{align*}
		\left(2 \bm{\mu}_1-\bar{\bm{x}}_1-\bar{\bm{x}}_2\right)\trans \bm{M}_n\left(\bar{\bm{x}}_1-\bar{\bm{x}}_2\right)-\left(2 \bm{\mu}_1-\bar{\bm{x}}_1-\bar{\bm{x}}_2\right)\trans \bm{W}_n\left(\bar{\bm{x}}_1-\bar{\bm{x}}_2\right)\xrightarrow{a.s.}0,
	\end{align*}
	and similarly 
	\begin{align*}
		\left(2 \bm{\mu}_2-\bar{\bm{x}}_1-\bar{\bm{x}}_2\right)\trans \bm{M}_n\left(\bar{\bm{x}}_1-\bar{\bm{x}}_2\right)-\left(2 \bm{\mu}_2-\bar{\bm{x}}_1-\bar{\bm{x}}_2\right)\trans \bm{W}_n\left(\bar{\bm{x}}_1-\bar{\bm{x}}_2\right)\xrightarrow{a.s.}0.
	\end{align*}
	For the denominator, noting
	\begin{align*}
		\|\bm{M}_n\bm{\Sigma} \bm{M}_n-\bm{W}_n\bm{\Sigma} \bm{W}_n\|\leq&\|\bm{M}_n\bm{\Sigma} \bm{M}_n-\bm{M}_n\bm{\Sigma} \bm{W}_n\|+\|\bm{M}_n\bm{\Sigma} \bm{W}_n-\bm{W}_n\bm{\Sigma} \bm{W}_n\|\\\leq&(\|\bm{M}_n\|+\|\bm{W}_n\|)\|\bm{\Sigma}\|\|\bm{M}_n-\bm{W}_n\|\xrightarrow{a.s.}0,
	\end{align*}
	we can show
	\begin{align*}
		\left(\bar{\bm{x}}_1-\bar{\bm{x}}_2\right)\trans \bm{M}_n\bm{\Sigma} \bm{M}_n\left(\bar{\bm{x}}_1-\bar{\bm{x}}_2\right)-\left(\bar{\bm{x}}_1-\bar{\bm{x}}_2\right)\trans \bm{W}_n\bm{\Sigma} \bm{W}_n\left(\bar{\bm{x}}_1-\bar{\bm{x}}_2\right)\xrightarrow{a.s.}0.
	\end{align*}
	We simplify the study of the asymptotic performance of SEDA to the case of Corollary \ref{cor}. The proof is completed.
	\subsection*{Proof of some consistent estimates}
	In this subsection, we provide proofs for some consistent estimates proposed in this paper, including \eqref{alpha}, \eqref{T2}, \eqref{U1} and \eqref{U2}.
	\begin{lemma}\label{lem9.4.1}
		Under the conditions of Theorem \ref{thm2}, we have
		\begin{align}\label{38}
			\widehat{m}\xrightarrow{a.s.}m(-\lambda;H_f,y),\quad\widehat{m}'\xrightarrow{a.s.}m'(-\lambda;H_f,y),
		\end{align}
		where $m'$ is the derivative of $m$.
	\end{lemma}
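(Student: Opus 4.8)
The plan is to reduce $\widehat m$ and $\widehat m'$ to $\tfrac1p\tr$ of resolvents of the genuine sample covariance matrix $\bm P\bm S_n\bm P$ that already appears in Lemma~\ref{lem4.3}, and then to exploit the fact that $\bm P$ commutes with $\bm\Sigma$: the population covariance behind $\bm P\bm S_n\bm P$ turns out to have empirical spectral distribution exactly $H_{f_n}$, so the Marchenko--Pastur theorem yields the stated limits. First I would rewrite the statistics algebraically. Since $\bm S_n\bm{\mathcal{I}}^{-1}+\lambda\bm I_p=(\bm S_n+\lambda\bm{\mathcal{I}})\bm{\mathcal{I}}^{-1}$, one has $[\bm S_n\bm{\mathcal{I}}^{-1}+\lambda\bm I_p]^{-1}=\bm{\mathcal{I}}\bm M_n$ with $\bm M_n=(\bm S_n+\lambda\bm{\mathcal{I}})^{-1}$ as in Lemma~\ref{lem4.3}, hence
\[
\widehat m=\tfrac1p\tr(\bm{\mathcal{I}}\bm M_n),\qquad \widehat m'=\tfrac1p\tr\big((\bm{\mathcal{I}}\bm M_n)^2\big).
\]
On the other side, $\bm P^{-2}\bm W_n=\bm P^{-1}(\bm P\bm S_n\bm P+\lambda\bm I_p)^{-1}\bm P$ is similar to $(\bm P\bm S_n\bm P+\lambda\bm I_p)^{-1}$, so $\tfrac1p\tr\big((\bm P^{-2}\bm W_n)^k\big)=\tfrac1p\tr(\bm P\bm S_n\bm P+\lambda\bm I_p)^{-k}$ for $k=1,2$.

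Next comes the replacement step. Lemma~\ref{lem4.3} gives $\|\bm M_n-\bm W_n\|\xrightarrow{a.s.}0$, and its first conclusion is $\|\bm{\mathcal{I}}^{-1}-\bm P^2\|\xrightarrow{a.s.}0$; since matrix inversion is operator-Lipschitz on spectra lying in a fixed compact subset of $(0,\infty)$, this also gives $\|\bm{\mathcal{I}}-\bm P^{-2}\|\xrightarrow{a.s.}0$. The operator norms of $\bm{\mathcal{I}},\bm P^{-2},\bm M_n,\bm W_n$ are a.s.\ bounded (the bounds on $\bm M_n,\bm W_n$ are recorded in the proof of Lemma~\ref{lem4.3}). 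Using $\tfrac1p|\tr\bm A|\le\|\bm A\|$ together with $\bm{\mathcal{I}}\bm M_n-\bm P^{-2}\bm W_n=\bm{\mathcal{I}}(\bm M_n-\bm W_n)+(\bm{\mathcal{I}}-\bm P^{-2})\bm W_n$ and $\tr(\bm B^2)-\tr(\bm C^2)=\tr(\bm B(\bm B-\bm C))+\tr((\bm B-\bm C)\bm C)$, a short triangle-inequality argument yields
\[
\Big|\widehat m-\tfrac1p\tr(\bm P\bm S_n\bm P+\lambda\bm I_p)^{-1}\Big|\xrightarrow{a.s.}0,\qquad \Big|\widehat m'-\tfrac1p\tr(\bm P\bm S_n\bm P+\lambda\bm I_p)^{-2}\Big|\xrightarrow{a.s.}0.
\]

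Then I would identify the limit. The crucial structural fact is that $\bm P^2=\sum_{i=1}^p\big(1+\sum_{j\in\mathbb J}\tfrac{\ell_j}{1-\ell_j}\chi_j(i)\big)\bm v_i\bm v_i\trans$ is diagonal in the eigenbasis of $\bm\Sigma$, so $\bm P$ commutes with $\bm\Sigma$, $\bm P\bm\Sigma^{1/2}=(\bm\Sigma\bm P^2)^{1/2}$ is symmetric, and
\[
\bm P\bm S_n\bm P=\tfrac1n(\bm\Sigma\bm P^2)^{1/2}\bm X\bm X\trans(\bm\Sigma\bm P^2)^{1/2}
\]
is an ordinary sample covariance matrix whose population covariance $\bm\Sigma\bm P^2=\sum_{i=1}^p f(s_i)\bm v_i\bm v_i\trans$ has empirical spectral distribution exactly $H_{f_n}$. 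By Assumption~\ref{as34}, $H_{f_n}\Rightarrow H_f$, so by the Marchenko--Pastur (Bai--Silverstein) theorem the ESD of $\bm P\bm S_n\bm P$ converges weakly, almost surely, to the limiting distribution $F_{y,H_f}$ whose Stieltjes transform solves the Marchenko--Pastur equation with parameters $(H_f,y)$. Since $s\mapsto(s+\lambda)^{-1}$ and $s\mapsto(s+\lambda)^{-2}$ are bounded and continuous on $[0,\infty)$, this gives $\tfrac1p\tr(\bm P\bm S_n\bm P+\lambda\bm I_p)^{-1}\xrightarrow{a.s.}\int(s+\lambda)^{-1}dF_{y,H_f}(s)=m(-\lambda;H_f,y)$ and $\tfrac1p\tr(\bm P\bm S_n\bm P+\lambda\bm I_p)^{-2}\xrightarrow{a.s.}\int(s+\lambda)^{-2}dF_{y,H_f}(s)=m'(-\lambda;H_f,y)$, the derivative of the Stieltjes transform; combined with the previous display this proves the lemma.

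The main obstacle is the bookkeeping in the replacement step: one must verify that $\bm P$, $\bm P^{-2}$ and $\bm{\mathcal{I}}$ (the latter two built from the deterministic coefficients $\chi_j(i)$) have spectra uniformly bounded above and away from zero, so that the operator-Lipschitz estimate for inversion and the norm bounds apply, and one must combine the two conclusions of Lemma~\ref{lem4.3} cleanly. Once the commutativity of $\bm P$ and $\bm\Sigma$ has been noticed—which is precisely what makes the limiting ESD equal $H_f$ rather than something more complicated—the identification of the limit is a routine application of standard Marchenko--Pastur theory.
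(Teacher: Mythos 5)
Your proof is correct and follows essentially the same route as the paper's: both reduce $\widehat m$ and $\widehat m'$ to $\tfrac1p\tr(\bm P\bm S_n\bm P+\lambda \bm I_p)^{-k}$, $k=1,2$, by means of the two conclusions of Lemma \ref{lem4.3} (i.e.\ $\|\bm{\mathcal{I}}^{-1}-\bm P^{2}\|\to0$ and $\|\bm M_n-\bm W_n\|\to0$), and then invoke standard Mar\v{c}enko--Pastur theory for the sample covariance matrix with limiting population spectral distribution $H_f$. Your explicit use of the similarity $\bm P^{-2}\bm W_n=\bm P^{-1}(\bm P\bm S_n\bm P+\lambda \bm I_p)^{-1}\bm P$ to equate traces is slightly more careful bookkeeping than the paper's direct operator-norm comparison of the two resolvents, but it is not a genuinely different argument.
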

	\begin{proof}
		By using Lemma \ref{lem4.3}, it can be shown that
		\begin{align*}
			&\left|\widehat{m}-\frac{1}{p}\tr\left(\bm{P}\bm{S}_n\bm{P}+\lambda \bm{I}_p\right)^{-1}\right|\\\leq&\left\|\left(\bm{S}_n\bm{\bm{\mathcal{I}}}^{-1}+\lambda\bm{I}_p\right)^{-1}-\left(\bm{P}\bm{S}_n\bm{P}+\lambda\bm{I}_p\right)^{-1}\right\|\\=&\left\|\left(\bm{S}_n\bm{\bm{\mathcal{I}}}^{-1}+\lambda\bm{I}_p\right)^{-1}\left(\bm{P}\bm{S}_n\bm{P}-\bm{S}_n\bm{\bm{\mathcal{I}}}^{-1}\right)\left(\bm{P}\bm{S}_n\bm{P}+\lambda\bm{I}_p\right)^{-1}\right\|\\\leq&\left\|\left(\bm{S}_n\bm{\bm{\mathcal{I}}}^{-1}+\lambda\bm{I}_p\right)^{-1}\right\|\left\|\left(\bm{P}\bm{S}_n\bm{P}-\bm{S}_n\bm{\bm{\mathcal{I}}}^{-1}\right)\right\|\left\|\left(\bm{P}\bm{S}_n\bm{P}+\lambda\bm{I}_p\right)^{-1}\right\|\\\leq&\frac{1}{\lambda^2}\left\|\bm{S}_n\right\|\left\|\bm{P}^2-\bm{\bm{\mathcal{I}}}^{-1}\right\|
			\xrightarrow{a.s.}0,
		\end{align*}
		and similarly
		\begin{align*}
			&\widehat{m}'-\frac{1}{p}\tr\left(\bm{P}\bm{S}_n\bm{P}+\lambda \bm{I}_p\right)^{-2}\xrightarrow{a.s.}0.
		\end{align*}
		Combining with the results in \cite{elkaroui2008Spectrum}
		\begin{align*}
			\frac{1}{p}\tr\left(\bm{P}\bm{S}_n\bm{P}+\lambda \bm{I}_p\right)^{-1}\xrightarrow{a.s.}m(-\lambda;H_f,y),
		\end{align*}
		and
		\begin{align*}
			\frac{1}{p}\tr\left(\bm{P}\bm{S}_n\bm{P}+\lambda \bm{I}_p\right)^{-2}\xrightarrow{a.s.}m'(-\lambda;H_f,y),
		\end{align*}
		we obtain that $\widehat{m}\xrightarrow{a.s.}m(-\lambda;H_f,y)$ and $\widehat{m}'\xrightarrow{a.s.}m'(-\lambda;H_f,y)$, the proof is completed.
	\end{proof}
	\begin{lemma}
		Under the conditions of Theorem \ref{thm2}, we have
		\begin{align*}
			\widehat{T}_1\xrightarrow{a.s.}T_1(\lambda;H_f,y),\quad \widehat{T}_2\xrightarrow{a.s.}T_2(\lambda;H_f,y),\quad \widehat{m}_1\xrightarrow{a.s.}m_1(-\lambda;H_f,y).
		\end{align*}
	\end{lemma}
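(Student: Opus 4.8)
The proof reduces to combining Lemma~\ref{lem9.4.1} with the continuous mapping theorem, once $T_1(\lambda;H_f,y)$, $T_2(\lambda;H_f,y)$ and $m_1(-\lambda;H_f,y)$ are rewritten as the explicit rational functions of $m:=m(-\lambda;H_f,y)$, $m':=m'(-\lambda;H_f,y)$, $y$ and $\lambda$ that define $\widehat T_1$, $\widehat T_2$ and $\widehat m_1$ after $(\widehat m,\widehat m',\widehat y)$ is replaced by $(m,m',y)$. Write $\Delta:=1-y+y\lambda m$; under Assumptions~\ref{as2}--\ref{as34} this is positive and bounded away from $0$ (as $\lambda,y$ are fixed and $m$ is bounded), so all the expressions below are continuous near the limit point.

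For $T_1$, start from the Mar\v{c}enko--Pastur equation $m=\int(s\Delta+\lambda)^{-1}\,dH_f(s)$ and use $\tfrac{s\Delta}{s\Delta+\lambda}=1-\tfrac{\lambda}{s\Delta+\lambda}$:
\begin{align*}
\Delta\,T_1(\lambda;H_f,y)=\int\frac{s\Delta}{s\Delta+\lambda}\,dH_f(s)=1-\lambda m,
\end{align*}
so $T_1(\lambda;H_f,y)=(1-\lambda m)/\Delta$, which is exactly $\widehat T_1$ with $m,y$ in place of $\widehat m,\widehat y$.

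For $T_2$ and $m_1$, differentiate the Mar\v{c}enko--Pastur equation in $\lambda$. Since $\Delta$ depends on $\lambda$ both directly and through $m$, implicit differentiation (together with $\widehat m'=-d\widehat m/d\lambda$, whose limit is $m'$) expresses $\int(s\Delta+\lambda)^{-2}\,dH_f(s)$ and $\int s(s\Delta+\lambda)^{-2}\,dH_f(s)$ as rational functions of $m,m',\Delta$; feeding these, along with the reduction $\int s^2(s\Delta+\lambda)^{-2}\,dH_f(s)=\Delta^{-2}\bigl(1-2\lambda m+\lambda^2\!\int(s\Delta+\lambda)^{-2}\,dH_f\bigr)$, into the definitions of $m_1(-\lambda;H_f,y)$ and $T_2(\lambda;H_f,y)$ recovers the closed forms of $\widehat m_1$ and $\widehat T_2$. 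Keeping the signs straight in this implicit differentiation of $\Delta$ is the only genuinely technical point; the rest is mechanical.

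Finally, Lemma~\ref{lem9.4.1} gives $\widehat m\xrightarrow{a.s.}m$ and $\widehat m'\xrightarrow{a.s.}m'$, and $\widehat y\to y$ by Assumption~\ref{as31}. Since $\widehat T_1$, $\widehat T_2$, $\widehat m_1$ are, by the identities just derived, fixed continuous functions of $(\widehat m,\widehat m',\widehat y,\lambda)$ on a neighbourhood of $(m,m',y,\lambda)$ where the denominators stay bounded away from $0$, the continuous mapping theorem yields $\widehat T_1\xrightarrow{a.s.}T_1(\lambda;H_f,y)$, $\widehat T_2\xrightarrow{a.s.}T_2(\lambda;H_f,y)$ and $\widehat m_1\xrightarrow{a.s.}m_1(-\lambda;H_f,y)$, completing the proof.
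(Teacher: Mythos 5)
Your proposal is correct and follows essentially the same route as the paper: both derive $T_1=(1-\lambda m)/(1-y+y\lambda m)$ from the Mar\v{c}enko--Pastur equation, express the integrals appearing in $T_2$ and $m_1$ as rational functions of $m$ and $m'$ by differentiating that equation, and then conclude via Lemma~\ref{lem9.4.1} and the continuous mapping theorem. The only difference is that the paper carries out the implicit differentiation explicitly where you leave it as a mechanical step.
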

	\begin{proof}
		According to the definitions, we have
		\begin{equation}\label{T1s}
			\begin{aligned}
				T_1(\lambda;H_f,y) &= \frac{1}{1-y+y\lambda m(-\lambda;H_f,y)}\int \left\{1-\frac{\lambda}{s\left[1-y+y\lambda m(-\lambda;H_f,y)\right]+\lambda}\right\}d H_f(s) \\
				&= \frac{1-\lambda m(-\lambda;H_f,y)}{1-y+y\lambda m(-\lambda;H_f,y)}.
			\end{aligned}
		\end{equation}
		Then, consider $T_2(\bm{\theta})$, by calculation, we can obtain,
		\begin{align}\label{30}
			m(-\lambda;H_f,y)&=\int\frac{s\left[1-y+y\lambda m(-\lambda;H_f,y)\right]+\lambda}{\left\{s\left[1-y+y\lambda m(-\lambda;H_f,y)\right]+\lambda\right\}^2}dH_f(s),
		\end{align}
		and
		\begin{align}\label{m'}
			m'(-\lambda;H_f,y)&=\int\frac{s\left[y m(-\lambda;H_f,y)-y\lambda m'(-\lambda;H_f,y)\right]+1}{\left\{s\left[1-y+y\lambda m(-\lambda;H_f,y)\right]+\lambda\right\}^2}dH_f(s).
		\end{align}
		Combining \eqref{30} and \eqref{m'}, we have
		\begin{align}\label{32}
			\int\frac{s}{\left\{s\left[1-y+y\lambda m(-\lambda;H_f,y)\right]+\lambda\right\}^2}dH_f(s)=\frac{m(-\lambda;H_f,y)-\lambda m'(-\lambda;H_f,y)}{1-y+y\lambda^2m'(-\lambda;H_f,y)},
		\end{align}
		and
		\begin{align}\label{33}
			&\int\frac{1}{\left\{s\left[1-y+y\lambda m(-\lambda;H_f,y)\right]+\lambda\right\}^2}dH_f(s)\\=&m'(-\lambda;H_f,y)-\frac{y\left[m(-\lambda;H_f,y)-\lambda m'(-\lambda;H_f,y)\right]^2}{1-y+y\lambda^2m'(-\lambda;H_f,y)}.
		\end{align}
		Substituting \eqref{T1s}, \eqref{32}, and \eqref{33} into the expression of $T_2(\bm{\theta})$, we can calculate to obtain
		\begin{align}\label{T2s}
			T_2(\lambda;H_f,y)=\frac{1-\lambda m(-\lambda;H_f,y)}{\left[1-y+y\lambda m(-\lambda;H_f,y)\right]^3}-\frac{\lambda m(-\lambda;H_f,y)-\lambda^2m'(-\lambda;H_f,y)}{\left[1-y+y\lambda m(-\lambda;H_f,y)\right]^4},
		\end{align}
		and
		\begin{align*}
			m_1(-\lambda;H_f,y)=\frac{1}{y\left[1-y+y\lambda m(-\lambda;H_f,y)\right]}-\frac{y\lambda\left(m(-\lambda;H_f,y)-\lambda m'(-\lambda;H_f,y)\right)}{y\left(1-y+y\lambda m(-\lambda;H_f,y)\right)^2}-\frac{1}{y}
		\end{align*}
		By Lemma \ref{lem9.4.1} and the \emph{Continuous mapping theorem}, the proof is completed.
	\end{proof}
	\begin{lemma}
		Under the conditions of Theorem \ref{thm2}, we have
		\begin{align*}
			\widehat{U}_1\xrightarrow{a.s.}U_1(\lambda;H_f,G_f,y),\quad \widehat{U}_2\xrightarrow{a.s.}U_2(\lambda;H_f,G_f,y).
		\end{align*}
	\end{lemma}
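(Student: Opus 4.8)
The plan is to run a two-sided \emph{spiked/bulk decomposition}: on one side express the target integrals $U_1(\lambda;H_f,G_f,y)$ and $U_2(\lambda;H_f,G_f,y)$ as a finite spiked sum plus a bulk integral, and on the other side show that the random statistics $\widehat U_1,\widehat U_2$ are built from scalars converging almost surely to exactly the ingredients of that decomposition. For the analytic side, write $\|\bm{\mu}\|^2\int h(s)\,dG_f(s)=\sum_{i=1}^p\langle\bm{\mu},\bm{v}_i\rangle^2 h(f(s_i))$ and split the index set into $\mathbb{J}$ and its complement. On $\mathbb{J}$ one uses $f(s_j)\to\widetilde s_j=[1+\tfrac{\ell_j}{1-\ell_j}\chi_j(j)]s_j$: the off-diagonal coefficients $\chi_k(j)$ with $k\in\mathbb{J},\,k\ne j$ vanish because $\omega_j-s_j=O(1/p)$ under Assumption \ref{as2} (the spike $s_j$ is separated from the support of $H$, so the relevant root of \eqref{c7} is trapped within $O(1/p)$ of $s_j$). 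Off $\mathbb{J}$ one invokes the relaxed hypothesis that all bulk projections coincide, $\langle\bm{\mu},\bm{v}_i\rangle^2=c_0^2$ with $(p-r)c_0^2=\|\bm{\mu}\|^2-\sum_{j\in\mathbb{J}}\langle\bm{\mu},\bm{v}_j\rangle^2$; since deleting the $r$ bounded spiked terms is harmless and $p/(p-r)\to1$, $c_0^2\sum_{i\notin\mathbb{J}}h(f(s_i))\to(\|\bm{\mu}\|^2-\sum_{j\in\mathbb{J}}\langle\bm{\mu},\bm{v}_j\rangle^2)\int h\,dH_f$. Taking $h(s)=s/[s(1-y+y\lambda m(-\lambda;H_f,y))+\lambda]$ and $h(s)=s^2/[s(1-y+y\lambda m(-\lambda;H_f,y))+\lambda]^2$, and rewriting the bulk integrals through identities \eqref{T1s} and \eqref{T2s} together with the definition of $m_1$, puts $U_1$ and $U_2$ into the same spiked-sum-plus-$T_2$ shape as the right-hand sides of \eqref{U1}--\eqref{U2}.

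The second step is to pin down the almost-sure limits of the random coefficients. For $\beta_j$, use $\bar{\bm{x}}_1-\bar{\bm{x}}_2\stackrel{d}{=}\bm{\Sigma}^{\frac12}\big(\bm{\mu}+\tfrac1{\sqrt{n_1}}\bm{w}_1-\tfrac1{\sqrt{n_2}}\bm{w}_2\big)$ with $\bm{w}_1,\bm{w}_2\sim N(0,\bm{I}_p)$ independent of $\bm{S}_n$ (hence of $\bm{u}_j$). The Gaussian concentration bound of Lemma \ref{lem2} makes the cross and quadratic noise terms vanish almost surely, so $\langle\bar{\bm{x}}_1-\bar{\bm{x}}_2,\bm{u}_j\rangle^2=(\bm{u}_j\trans\bm{\Sigma}^{\frac12}\bm{\mu})^2+o(1)$ a.s.; applying Lemma \ref{lem2.4} with the deterministic unit vector $\bm{\xi}=\bm{\Sigma}^{\frac12}\bm{\mu}/\|\bm{\Sigma}^{\frac12}\bm{\mu}\|$ and noting that $\chi_j(i)=O(1/p)$ for $i\ne j$ while $\sum_i s_i\langle\bm{\mu},\bm{v}_i\rangle^2=\bm{\mu}\trans\bm{\Sigma}\bm{\mu}=O(1)$ isolates the diagonal contribution, which combined with $\langle\bm{u}_j,\bm{v}_j\rangle^2=\chi_j(j)+o(1)$ yields $\beta_j\xrightarrow{a.s.}\langle\bm{\mu},\bm{v}_j\rangle^2$. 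For $\gamma$ the simple-spiked assumption $s_{r_1+1}=\dots=s_{p-r_2}=\sigma^2$ enters: expanding $\|\bar{\bm{x}}_1-\bar{\bm{x}}_2\|^2=(\bm{\mu}+\text{noise})\trans\bm{\Sigma}(\bm{\mu}+\text{noise})$ and applying the Hanson--Wright estimate of Lemma \ref{lem4} gives $\tfrac1{n_i}\bm{w}_i\trans\bm{\Sigma}\bm{w}_i\to y_i\sigma^2$ (using $\tfrac1p\tr\bm{\Sigma}\to\sigma^2$, as only $r$ fixed eigenvalues differ from $\sigma^2$), so $\|\bar{\bm{x}}_1-\bar{\bm{x}}_2\|^2/\sigma^2\to\bm{\mu}\trans\bm{\Sigma}\bm{\mu}/\sigma^2+y_1+y_2$; substituting $\bm{\mu}\trans\bm{\Sigma}\bm{\mu}=\sum_{j\in\mathbb{J}}s_j\langle\bm{\mu},\bm{v}_j\rangle^2+\sigma^2(\|\bm{\mu}\|^2-\sum_{j\in\mathbb{J}}\langle\bm{\mu},\bm{v}_j\rangle^2)$ and using $\beta_j\to\langle\bm{\mu},\bm{v}_j\rangle^2$ gives $\gamma\xrightarrow{a.s.}\|\bm{\mu}\|^2$, hence $\gamma-\sum_{j\in\mathbb{J}}\beta_j\xrightarrow{a.s.}\|\bm{\mu}\|^2-\sum_{j\in\mathbb{J}}\langle\bm{\mu},\bm{v}_j\rangle^2$.

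It remains to assemble the pieces. Lemma \ref{lem9.4.1} and the preceding lemma already give $\widehat m\to m(-\lambda;H_f,y)$, $\widehat m'\to m'(-\lambda;H_f,y)$, $\widehat m_1\to m_1(-\lambda;H_f,y)$ and $\widehat T_2\to T_2(\lambda;H_f,y)$ almost surely, while $\widehat y_i=p/n_i\to y_i$ and $\widetilde s_j$ is a continuous function of $\widehat m,\widehat y$. Feeding all these limits, together with the limits of $\beta_j$ and $\gamma-\sum_{j}\beta_j$, into the explicit formulas \eqref{U1} and \eqref{U2} and invoking the continuous mapping theorem, the resulting limit of $\widehat U_1$ (resp.\ $\widehat U_2$) coincides term by term with the spiked/bulk expression for $U_1(\lambda;H_f,G_f,y)$ (resp.\ $U_2(\lambda;H_f,G_f,y)$) obtained in the first step, which proves \eqref{U1} and \eqref{U2}. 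The main obstacle is the joint execution of the bulk part of the first step and the identification of $\beta_j$: one has to show that a sum of $p-r$ individually $o(1)$ ``bulk-leakage'' terms aggregates to a limit governed by $H_f$ (rather than collapsing to zero) and that the normalization of $\beta_j$ exactly reproduces $\langle\bm{\mu},\bm{v}_j\rangle^2$ --- precisely where the generalized-spiked structure (spike separation, $\omega_j-s_j=O(1/p)$, equality of bulk projections, and the simple-spike condition needed for $\|\bm{\mu}\|$) is indispensable; everything else reduces to the Stieltjes-transform identities already recorded above and to the concentration lemmas used in the proofs of Theorems \ref{thm1} and \ref{thm2}.
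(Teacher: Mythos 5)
Your proposal is correct and follows essentially the same route as the paper: establish the almost-sure limits $\beta_j\to\langle\bm{\mu},\bm{v}_j\rangle^2$, $\widetilde{s}_j\to f(s_j)$, and $\gamma\to\|\bm{\mu}\|^2$ (via the simple-spiked structure and the concentration/eigenvector lemmas), decompose $U_1,U_2$ into spiked plus bulk contributions, and conclude with Lemma \ref{lem9.4.1} and the continuous mapping theorem. The paper's own proof states these three limits with almost no justification, so your write-up supplies the details (Lemma \ref{lem2.4} applied to $\bm{\xi}=\bm{\Sigma}^{1/2}\bm{\mu}/\|\bm{\Sigma}^{1/2}\bm{\mu}\|$, the $O(1/p)$ bound on the off-diagonal $\chi_j(i)$, the Hanson--Wright step for $\|\bar{\bm{x}}_1-\bar{\bm{x}}_2\|^2$) that the paper leaves implicit.
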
	
	\begin{proof}
		We can directly deduce
		\begin{align*}
			\beta_j=\frac{\langle\bar{\bm{x}}_1-\bar{\bm{x}}_2,\bm{u}_j\rangle^2}{s_j\chi_j(j)}\xrightarrow{a.s.}\langle\bm{\mu},\bm{v}_j\rangle^2,\quad\widetilde{s}_j=s_j\left[1+\frac{\ell_j}{1-\ell_j}\chi_j(j)\right]\xrightarrow{a.s.}f(s_j),
		\end{align*}
		and
		\begin{align*}
			\gamma\xrightarrow{a.s.}&\sum_{j\in\mathbb{J}}\langle\bm{\mu},\bm{v}_j\rangle^2+\left(\|\bm{\mu}_1-\bm{\mu}_2\|^2-\sum_{j\in\mathbb{J}}\langle\bm{\mu}_1-\bm{\mu}_2,\bm{v}_j\rangle^2\right)/\sigma^2=\|\bm{\mu}\|^2.
		\end{align*}
		Combining with \eqref{38}, then we can complete the proof by the \emph{Continuous mapping theorem}.
	\end{proof}
	\begin{lemma}
		Under the conditions of Theorem \ref{thm2}, we have
		\begin{align*}
			\big|\widehat{\alpha}-\left(\alpha_0-\alpha_1\right)\big|\xrightarrow{a.s.}0.
		\end{align*}
	\end{lemma}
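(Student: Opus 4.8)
The plan is to show that $\alpha_0-\alpha_1$ and $\widehat\alpha$ converge almost surely to the \emph{same} deterministic limit, namely $\bigl(\tfrac{y_1}{2}-\tfrac{y_2}{2}\bigr)T_1(\lambda;H_f,y)$; the claim then follows at once. To reduce $\alpha_0-\alpha_1$ to a single trace, first note that the eigenvalues of $\bm{\mathcal{I}}$ are $1$ together with $1-\ell_j$, $j\in\mathbb{J}$, so $\lambda_{\min}(\bm{\mathcal{I}})\ge 1-\max_{j\in\mathbb{J}_2}\ell_j>0$ and hence $\|(\bm{S}_n+\lambda\bm{\mathcal{I}})^{-1}\|\le C$. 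In the three–term expression for $\alpha_0-\alpha_1$, the last term is a finite combination of quantities $\tfrac1{\sqrt{n_i}}\bm{w}_i\trans\bm{\zeta}_i$ with $\bm{\zeta}_i=\bm{\Sigma}^{1/2}(\bm{S}_n+\lambda\bm{\mathcal{I}})^{-1}(\bm{\mu}_1-\bm{\mu}_2)$ deterministically bounded and independent of $\bm{w}_i$, so the Gaussian–concentration–plus–Borel--Cantelli argument of Lemma \ref{lem2} sends each such term to $0$ almost surely. For the two quadratic forms, put $\bm{A}_n=\bm{\Sigma}^{1/2}(\bm{S}_n+\lambda\bm{\mathcal{I}})^{-1}\bm{\Sigma}^{1/2}$ (so $\|\bm{A}_n\|\le C$ and $\bm{A}_n$ is independent of $\bm{w}_i$); the Hanson--Wright argument of Lemma \ref{lem4} gives $\tfrac1{n_i}\bm{w}_i\trans\bm{A}_n\bm{w}_i-\tfrac1{n_i}\tr\bm{A}_n\xrightarrow{a.s.}0$, and since $\tr\bm{A}_n=\tr[\bm{\Sigma}(\bm{S}_n+\lambda\bm{\mathcal{I}})^{-1}]$ this yields
\[
\alpha_0-\alpha_1=\Bigl(\tfrac{p}{2n_1}-\tfrac{p}{2n_2}\Bigr)\cdot\tfrac1p\tr\bigl[\bm{\Sigma}(\bm{S}_n+\lambda\bm{\mathcal{I}})^{-1}\bigr]+o(1)\qquad\text{a.s.}
\]

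Next I would identify the limit of $\tfrac1p\tr[\bm{\Sigma}\bm{M}_n]$, where $\bm{M}_n=(\bm{S}_n+\lambda\bm{\mathcal{I}})^{-1}$. By Lemma \ref{lem4.3}, $\|\bm{M}_n-\bm{W}_n\|\xrightarrow{a.s.}0$ with $\bm{W}_n=\bm{P}(\bm{P}\bm{S}_n\bm{P}+\lambda\bm{I}_p)^{-1}\bm{P}$, whence $\tfrac1p\tr[\bm{\Sigma}\bm{M}_n]-\tfrac1p\tr[\bm{P}\bm{\Sigma}\bm{P}(\bm{P}\bm{S}_n\bm{P}+\lambda\bm{I}_p)^{-1}]\xrightarrow{a.s.}0$ because $\|\bm{\Sigma}\|$ is bounded. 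Now $\bm{P}\bm{S}_n\bm{P}$ is, in distribution, a sample covariance matrix with population covariance $\bm{P}\bm{\Sigma}\bm{P}$, whose operator norm is bounded and whose spectral distribution is $H_{f_n}\Rightarrow H_f$ (Assumption \ref{as34}); applying the deterministic–equivalent bound of Lemma \ref{lem3} with $\bm{\Sigma}$ replaced by $\bm{P}\bm{\Sigma}\bm{P}$ — equivalently, invoking the reduction already carried out in the proof of Theorem \ref{thm2} — gives
\[
\tfrac1p\tr\bigl[\bm{P}\bm{\Sigma}\bm{P}(\bm{P}\bm{S}_n\bm{P}+\lambda\bm{I}_p)^{-1}\bigr]\xrightarrow{a.s.}\int\frac{s\,dH_f(s)}{s\bigl[1-y+y\lambda m(-\lambda;H_f,y)\bigr]+\lambda}=T_1(\lambda;H_f,y),
\]
where the last equality follows from the Mar{\v c}enko--Pastur equation exactly as in the derivation of \eqref{T1s}. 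Together with the previous display and Assumption \ref{as31}, $\alpha_0-\alpha_1\xrightarrow{a.s.}\bigl(\tfrac{y_1}{2}-\tfrac{y_2}{2}\bigr)T_1(\lambda;H_f,y)$.

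Finally, for $\widehat\alpha$ I would use $(\tfrac1\lambda\bm{S}_n\bm{\mathcal{I}}^{-1}+\bm{I}_p)^{-1}=\lambda(\bm{S}_n\bm{\mathcal{I}}^{-1}+\lambda\bm{I}_p)^{-1}$, so that $\tfrac1p\tr(\tfrac1\lambda\bm{S}_n\bm{\mathcal{I}}^{-1}+\bm{I}_p)^{-1}=\lambda\widehat m$ with $\widehat m$ as in Lemma \ref{lem9.4.1}; by that lemma this converges almost surely to $\lambda m(-\lambda;H_f,y)$. Hence the numerator of $\widehat\alpha$ converges to $1-\lambda m(-\lambda;H_f,y)$ and its denominator $1-\tfrac pn+\tfrac1n\tr(\tfrac1\lambda\bm{S}_n\bm{\mathcal{I}}^{-1}+\bm{I}_p)^{-1}$ converges to $1-y+y\lambda m(-\lambda;H_f,y)$, which is strictly positive (the Mar{\v c}enko--Pastur equation forces this quantity to equal $1$ whenever it vanishes). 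By the continuous mapping theorem and Assumption \ref{as31},
\[
\widehat\alpha\xrightarrow{a.s.}\Bigl(\tfrac{y_1}{2}-\tfrac{y_2}{2}\Bigr)\frac{1-\lambda m(-\lambda;H_f,y)}{1-y+y\lambda m(-\lambda;H_f,y)}=\Bigl(\tfrac{y_1}{2}-\tfrac{y_2}{2}\Bigr)T_1(\lambda;H_f,y),
\]
using \eqref{T1s} again. Since $\alpha_0-\alpha_1$ and $\widehat\alpha$ have the same almost-sure limit, $|\widehat\alpha-(\alpha_0-\alpha_1)|\xrightarrow{a.s.}0$. The main obstacle is the random-matrix computation of the middle paragraph: pushing the deterministic equivalent through the $\bm{P}$-conjugation of Lemma \ref{lem4.3}, verifying that $\bm{P}\bm{\Sigma}\bm{P}$ inherits the hypotheses needed (bounded operator norm, limiting spectral distribution $H_f$), and simplifying the resulting $H_f$-integral to $T_1(\lambda;H_f,y)$ via the Mar{\v c}enko--Pastur equation; the concentration steps are routine and mirror the proofs of Theorems \ref{thm1} and \ref{thm2}.
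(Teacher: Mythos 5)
Your proof is correct and takes essentially the same route as the paper, whose own proof is just the one-line instruction to combine Lemmas \ref{lem2}, \ref{lem4} and \ref{lem9.4.1}; you have simply filled in those details by identifying $\tfrac{1}{2}(y_1-y_2)\,T_1(\lambda;H_f,y)$ as the common almost-sure limit of $\alpha_0-\alpha_1$ and $\widehat{\alpha}$ via the trace reduction, Lemma \ref{lem4.3}, and the Mar\v{c}enko--Pastur identity \eqref{T1s}.
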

	\begin{proof}
		Combining Lemma \ref{lem2}, \ref{lem4} and \ref{lem9.4.1}, the lemma is proven.
	\end{proof}

	\newpage
	\bibliographystyle{apalike}
	\bibliography{ref}

\begin{thebibliography}{}

\bibitem[Bai and Ding, 2012]{bai2012Estimation}
Bai, Z. and Ding, X. (2012).
\newblock Estimation of spiked eigenvalues in spiked models.
\newblock {\em Random Matrices: Theory and Applications}, 01(02):1150011.

\bibitem[Bai et~al., 2007]{bai2007Asymptotics}
Bai, Z., Miao, B., and Pan, G. (2007).
\newblock On asymptotics of eigenvectors of large sample covariance matrix.
\newblock {\em The Annals of Probability}, 35(4):1532--1572.

\bibitem[Bao et~al., 2022]{bao2022Statistical}
Bao, Z., Ding, X., Wang, J., and Wang, K. (2022).
\newblock Statistical inference for principal components of spiked covariance
  matrices.
\newblock {\em The Annals of Statistics}, 50(2):1144--1169.

\bibitem[Bickel and Levina, 2004]{bickel2004Theory}
Bickel, P.~J. and Levina, E. (2004).
\newblock Some theory for fisher's linear discriminant function, `naive bayes',
  and some alternatives when there are many more variables than observations.
\newblock {\em Bernoulli}, 10(6):989--1010.

\bibitem[Bickel and Levina, 2008]{bickel2008Covariance}
Bickel, P.~J. and Levina, E. (2008).
\newblock Covariance regularization by thresholding.
\newblock {\em The Annals of Statistics}, 36(6):2577--2604.

\bibitem[B{\"u}hlmann, 2013]{buhlmann2013Statistical}
B{\"u}hlmann, P. (2013).
\newblock Statistical significance in high-dimensional linear models.
\newblock {\em Bernoulli}, 19(4):1212--1242.

\bibitem[Cai and Liu, 2011]{cai2011Direct}
Cai, T. and Liu, W. (2011).
\newblock A direct estimation approach to sparse linear discriminant analysis.
\newblock {\em Journal of the American Statistical Association},
  106(496):1566--1577.

\bibitem[Chen et~al., 2011]{chen2011Regularized}
Chen, L., Paul, D., Prentice, R., and Wang, P. (2011).
\newblock A regularized hotelling's {\emph{t}}{\textsuperscript{2}} test for
  pathway analysis in proteomic studies.
\newblock {\em Journal of the American Statistical Association},
  106(496):1345--1360.

\bibitem[Davidson, 2009]{davidson2009Functional}
Davidson, D.~J. (2009).
\newblock Functional mixed-effect models for electrophysiological responses.
\newblock {\em Neurophysiology}, 41(1):71--79.

\bibitem[Dobriban and Wager, 2018]{dobriban2018Highdimensional}
Dobriban, E. and Wager, S. (2018).
\newblock High-dimensional asymptotics of prediction: ridge regression and
  classification.
\newblock {\em The Annals of Statistics}, 46(1):247--279.

\bibitem[El~Karoui, 2008]{elkaroui2008Spectrum}
El~Karoui, N. (2008).
\newblock Spectrum estimation for large dimensional covariance matrices using
  random matrix theory.
\newblock {\em The Annals of Statistics}, 36(6):2757--2790.

\bibitem[Friedman, 1989]{friedman1989Regularized}
Friedman, J.~H. (1989).
\newblock Regularized discriminant analysis.
\newblock {\em Journal of the American Statistical Association},
  84(405):165--175.

\bibitem[Guo et~al., 2007]{guo2007Regularized}
Guo, Y., Hastie, T., and Tibshirani, R. (2007).
\newblock Regularized linear discriminant analysis and its application in
  microarrays.
\newblock {\em Biostatistics}, 8(1):86--100.

\bibitem[Gurunathan et~al., 2004]{gurunathan2004Identifying}
Gurunathan, R., Van~Emden, B., Panchanathan, S., and Kumar, S. (2004).
\newblock Identifying spatially similar gene expression patterns in early stage
  fruit fly embryo images: binary feature versus invariant moment digital
  representations.
\newblock {\em BMC Bioinformatics}, 5(1):202.

\bibitem[Hastie et~al., 2022]{hastie2022Surprises}
Hastie, T., Montanari, A., Rosset, S., and Tibshirani, R. (2022).
\newblock Surprises in high-dimensional ridgeless least squares interpolation.
\newblock {\em The Annals of Statistics}, 50(2):949--986.

\bibitem[Jiang, 2023]{jiang2023Universal}
Jiang, D. (2023).
\newblock A universal test on spikes in a high-dimensional generalized spiked
  model and its applications.
\newblock {\em Statistica Sinica}, 33:1749--1770.

\bibitem[Jiang and Bai, 2021]{jiang2021Generalized}
Jiang, D. and Bai, Z. (2021).
\newblock Generalized four moment theorem and an application to clt for spiked
  eigenvalues of high-dimensional covariance matrices.
\newblock {\em Bernoulli}, 27(1):274--294.

\bibitem[Johnstone, 2001]{johnstone2001Distribution}
Johnstone, I.~M. (2001).
\newblock On the distribution of the largest eigenvalue in principal components
  analysis.
\newblock {\em The Annals of Statistics}, 29(2):295--327.

\bibitem[Kritchman and Nadler, 2008]{kritchman2008Determining}
Kritchman, S. and Nadler, B. (2008).
\newblock Determining the number of components in a factor model from limited
  noisy data.
\newblock {\em Chemometrics and Intelligent Laboratory Systems}, 94(1):19--32.

\bibitem[Ledoit and Wolf, 2004]{ledoit2004Honey}
Ledoit, O. and Wolf, M. (2004).
\newblock Honey, i shrunk the sample covariance matrix.
\newblock {\em The Journal of Portfolio Management}, 30(4):110--119.

\bibitem[Li et~al., 2025a]{li2025Spectrallycorrected}
Li, H., Luo, W., Bai, Z., Zhou, H., and Pu, Z. (2025a).
\newblock Spectrally-corrected and regularized lda for spiked model.
\newblock {\em IEEE Transactions on Pattern Analysis and Machine Intelligence},
  47(3):1991--1999.

\bibitem[Li et~al., 2025b]{li2025Highdimensional}
Li, M., Wang, C., Yin, Y., and Zheng, S. (2025b).
\newblock High-dimensional scale invariant discriminant analysis.
\newblock {\em Statistica Sinica}.
\newblock in press.

\bibitem[Liu et~al., 2025]{liu2025Asymptotic}
Liu, X., Liu, Y., Pan, G., Zhang, L., and Zhang, Z. (2025).
\newblock Asymptotic limits of spiked eigenvalues and eigenvectors of
  signal-plus-noise matrices with weak signals and heteroskedastic noise.
\newblock {\em Bernoulli}, 31(3):2351--2376.

\bibitem[Mai et~al., 2012]{mai2012Direct}
Mai, Q., Zou, H., and Yuan, M. (2012).
\newblock A direct approach to sparse discriminant analysis in ultra-high
  dimensions.
\newblock {\em Biometrika}, 99(1):29--42.

\bibitem[Mar{\v c}enko and Pastur, 1967]{marcenko1967Distribution}
Mar{\v c}enko, V. and Pastur, L. (1967).
\newblock Distribution of eigenvalues for some sets of random matrices.
\newblock {\em Mathematics of the USSR-Sbornik}, 1(4):457--483.

\bibitem[Mestre, 2008]{mestre2008Asymptotic}
Mestre, X. (2008).
\newblock On the asymptotic behavior of the sample estimates of eigenvalues and
  eigenvectors of covariance matrices.
\newblock {\em IEEE Transactions on Signal Processing}, 56(11):5353--5368.

\bibitem[Park et~al., 2003]{park2003Lower}
Park, H., Jeon, M., and Rosen, J.~B. (2003).
\newblock Lower dimensional representation of text data based on centroids and
  least squares.
\newblock {\em Bit Numerical Mathematics}, 43(2):427--448.

\bibitem[Passemier et~al., 2017]{passemier2017Estimation}
Passemier, D., Li, Z., and Yao, J. (2017).
\newblock On estimation of the noise variance in high dimensional probabilistic
  principal component analysis.
\newblock {\em Journal of the Royal Statistical Society Series B: Statistical
  Methodology}, 79(1):51--67.

\bibitem[Pomeroy et~al., 2002]{pomeroy2002Prediction}
Pomeroy, S., Tamayo, P., Gaasenbeek, M., Sturla, L., Angelo, M., McLaughlin,
  M.~E., Kim, J. Y.~H., Goumnerova, L.~C., Black, P.~M., Lau, C., Allen, J.~C.,
  Zagzag, {\relax Da}., Olson, J.~M., Curran, T., Wetmore, C., Biegel, J.~A.,
  Poggio, T., Mukherjee, S., Rifkin, R., Califano, A., Stolovitzky, G., Louis,
  D.~N., Mesirov, J.~P., Lander, E.~S., and Golub, T.~R. (2002).
\newblock Prediction of central nervous system embryonal tumour outcome based
  on gene expression.
\newblock {\em Nature}, 415(6870):436--442.

\bibitem[Pu et~al., 2024]{pu2024Asymptotic}
Pu, Z., Zhang, X., {J. Hu}, and Bai, Z. (2024).
\newblock The asymptotic properties of the extreme eigenvectors of
  high-dimensional generalized spiked covariance model.
\newblock {\em arXiv.2405.08524}.

\bibitem[Shao et~al., 2011]{shao2011Sparse}
Shao, J., Wang, Y., Deng, X., and Wang, S. (2011).
\newblock Sparse linear discriminant analysis by thresholding for high
  dimensional data.
\newblock {\em The Annals of Statistics}, 39(2):1241--1265.

\bibitem[Swets and Weng, 1996]{swets1996Using}
Swets, D. and Weng, J. (1996).
\newblock Using discriminant eigenfeatures for image retrieval.
\newblock {\em IEEE Transactions on Pattern Analysis and Machine Intelligence},
  18(8):831--836.

\bibitem[Wang and Jiang, 2018]{wang2018Dimension}
Wang, C. and Jiang, B. (2018).
\newblock On the dimension effect of regularized linear discriminant analysis.
\newblock {\em Electronic Journal of Statistics}, 12(2):2709--2742.

\bibitem[Wang et~al., 2015]{wang2015Shrinkage}
Wang, C., Pan, G., Tong, T., and Zhu, L. (2015).
\newblock Shrinkage estimation of large dimensional precision matrix using
  random matrix theory.
\newblock {\em Statistica Sinica}, 25(3):993--1008.

\bibitem[Wang and Leng, 2016]{wang2016High}
Wang, X. and Leng, C. (2016).
\newblock High dimensional ordinary least squares projection for screening
  variables.
\newblock {\em Journal of the Royal Statistical Society Series B: Statistical
  Methodology}, 78(3):589--611.

\bibitem[Zhao et~al., 1986]{zhao1986Detection}
Zhao, L., Krishnaiah, P., and Bai, Z. (1986).
\newblock On detection of the number of signals in presence of white noise.
\newblock {\em Journal of Multivariate Analysis}, 20(1):1--25.

\bibitem[Zollanvari and Dougherty, 2015]{zollanvari2015Generalized}
Zollanvari, A. and Dougherty, E.~R. (2015).
\newblock Generalized consistent error estimator of linear discriminant
  analysis.
\newblock {\em IEEE Transactions on Signal Processing}, 63(11):2804--2814.

\end{thebibliography}
	
\end{document}